\documentclass{article}


\usepackage[final,nonatbib]{}
\usepackage{amsmath}
\usepackage{amsthm}
\usepackage{thm-restate}
\usepackage{thmtools}
\usepackage{bm}
\usepackage{graphicx}
\usepackage{caption}
\usepackage{float}
\usepackage{wrapfig}

\theoremstyle{definition}

\theoremstyle{remark}




\usepackage[utf8]{inputenc} 
\usepackage[T1]{fontenc}    
\usepackage{hyperref}       
\usepackage{url}            
\usepackage{booktabs}       
\usepackage{amsfonts}       
\usepackage{nicefrac}       
\usepackage{microtype}      
\usepackage{xcolor}         

\title{Just One Layer Norm \\ Guarantees Stable Extrapolation}

%

\author{%
  Juliusz Ziomek$^{\star, \dagger}$, George Whittle$^{\star}$, Michael A. Osborne \\
  Machine Learning Research Group, University of Oxford\\
  $^{\star}$ Equal Contribution \\
  $^\dagger$ Corresponding Author\\
  \texttt{\{juliusz, george, mosb\} @ robots.ox.ac.uk}
}

\begin{document}

\maketitle

\begin{abstract}
In spite of their prevalence, the behaviour of Neural Networks when extrapolating far from the training distribution remains poorly understood, with existing results limited to specific cases. In this work, we prove general results---the first of their kind---by applying Neural Tangent Kernel (NTK) theory to analyse infinitely-wide neural networks trained until convergence and prove that the inclusion of just one Layer Norm (LN) fundamentally alters the induced NTK, transforming it into a bounded-variance kernel. As a result, the output of an infinitely wide network with at least one LN remains bounded, even on inputs far from the training data. In contrast, we show that a broad class of networks without LN can produce pathologically large outputs for certain inputs. We support these theoretical findings with empirical experiments on finite-width networks, demonstrating that while standard NNs often exhibit uncontrolled growth outside the training domain, a single LN layer effectively mitigates this instability. Finally, we explore real-world implications of this extrapolatory stability, including applications to predicting residue sizes in proteins larger than those seen during training and estimating age from facial images of underrepresented ethnicities absent from the training set.
\end{abstract}

\section{Introduction}
Neural Networks (NN) have dominated the Machine Learning landscape for more than a decade. Various deep architectures became the state-of-the-art approaches across countless applications. However, despite this empirical success, we still have a limited understanding of how these models learn and how they behave when making predictions.

What poses particular difficulty is analysing the network's prediction as inputs get further away from the training data. Clearly, a network achieving a low training set error should output values close to the ground truth labels on the training points. Yet, it is difficult to say how the network will behave outside of the training domain, as the training process does not necessarily constrains the network's output there. As such, the extrapolatory behaviour of these networks is very much undefined. 

Lack of understanding of this extrapolatory behaviour can be dangerous. For example, a deep learning-based control system that applies voltages to a robot arm based on camera inputs can encounter pixel values with brightness exceeding those found in the training set. If this causes the network to output extreme values, this could easily lead to a serious safety hazard in the system. As such, understanding how networks behave outside of the training domain and whether their predictions are stable out of distribution is critical.

One of the most complete theoretical approaches to studying the behaviour of Deep Neural Networks is the Neural Tangent Kernel (NTK) theory \cite{jacot2018neural, lee2019wide}. Previous work in that stream \cite{lee2019wide} managed to show that as the network width approaches infinity, the trained network's output is equivalent to the posterior mean of a Gaussian Process (GP) \cite{rasmussen2003gaussian} with a specific kernel function that depends only on the network's architecture. While any real neural network has, of course, a finite width, in many cases, the predictions made by the NTK theory matched the empirical observations \cite{jacot2018neural, lee2019wide, exact_computation, golikov2022neural}. 

In this work, we use NTK theory to analyse the effect on extrapolatory, or out-of-distribution (OOD), predictions of including Layer Normalisation (LN) \cite{ba2016layer} operations in a network. We show that for an infinitely-wide, fully-connected standard network with positive $n$-homogeneous nonlinearities, there always exists a fixed training dataset such that their predictions get pathologically large, i.e. ``explode''. On the other hand, we show that the NTK of infinitely-wide, fully-connected architectures containing at least one LN operation admits an upper bound for any input, no matter how far from the training domain. Using this, we show that such networks trained until convergence have a bounded output even infinitely far from the training domain, for any training set. We then show how these properties are useful across different practical applications, including extrapolating residue size prediction to proteins larger than those seen during training and age prediction for unrepresented ethnicities absent from training data. We detail our contributions below:

\begin{itemize}
    \item We show with Theorem \ref{thm:explosion} that for infinitely-wide fully-connected positive $n$-homogeneous nonlinearities, but without any LN, the absolute value of the expectation (with expectation taken over initialisation) can be arbitrarily high outside of the training domain.
    \item We show with Theorem \ref{thm:worstcaseboundedoutput} that the inclusion of a single LN anywhere in the network imposes an upper bound on the absolute value of expected output anywhere in the domain. That is, they enjoy stable extrapolation.
    \item We verify these theoretical results on a number of toy problems and show that standard neural networks ``explode'' outside of the training domain, whereas the presence of at least one LN prevents this explosion from happening.
    \item We show how these properties are useful when making out-of-distribution predictions, such as extrapolating residue size prediction to proteins larger than those seen during training and age prediction for unrepresented ethnicities absent from training data.
\end{itemize}

Furthermore, we extend our extrapolation stability results with Theorem \ref{thm:asymptoticworstcaseboundedoutput} to a more general class of nonlinearities and with Theorem \ref{thm:softcosinekernel} to, under certain circumstances, any architecture within the Tensor Program framework \cite{yang2019wide,yang2020tensor, yang2020tensor3}.

\section{Preliminaries and Problem Setting}
We assume we are solving a supervised learning problem given a dataset $\mathcal{D}_{\textrm{train}} = \{(\bm{x}_i, y_i)\}_{i=1}^\theta$, where for all $i=1,\dots,n$, by $\bm{x}_i \in \mathbb{R}^{n_0}$ we denote an $n_0$-dimensional input datapoint and by $y_i \in \mathbb{R}$ corresponding to the target variable. We assume there are no repeated datapoints, that is, $\bm{x}_i = \bm{x}_j \implies i = j$. We will denote by $\mathcal{X}_\textrm{train} = \{\bm{x}_i\}_{i=1}^n$ and $\mathcal{Y}_\textrm{train} = \{y_i\}_{i=1}^n$. We use this data to train a NN $f_\theta : \mathbb{R}^{n_0} \to \mathbb{R}$, parametrised by some $\theta \in \Omega \subset \mathbb{R}^d$. We measure the success on this learning problem by the MSE loss function $L(\theta; \mathcal{D}_{\textrm{train}}) = \frac{1}{2} \left\lVert f_\theta(\bm{x}_i) -  y_i \right\lVert_2^2$. We assume the parameters of NN are initialised as described in Appendix \ref{appendix:parametrisation}. They then evolve via gradient descent update as $\theta^t = \theta^{t-1} - \eta \nabla_\theta L(\theta; \mathcal{D}_{\textrm{train}})$ with learning rate $\eta >0$ at each step $t>0$.  The work of \cite{jacot2018neural} observed that (even finite) NNs during training follow a kernel gradient descent with a kernel:
\begin{equation*}
    \Theta(\bm{x}, \bm{x}'; \theta) = \langle\nabla_{\theta}f_{\theta}(\bm{x}), \nabla_{\theta}f_{\theta}(\bm{x}') \rangle,
\end{equation*}
also known as the \textit{empirical NTK}. They then show that for an MLP, as the width of the smallest layer approaches infinity, the NTK converges to a limit $\Theta(\bm{x}, \bm{x}')$ that stays constant throughout the training process.  Furthermore, the work of \cite{lee2019wide} showed that if the network is trained with learning rate small enough then in the limit $t \to \infty$, the network converges to:
\begin{equation*}
    f_{\theta^\infty}(\bm{x}^\star) = \Theta(\bm{x}^\star, \mathcal{X}_\textrm{train})\Theta(\mathcal{X}_\textrm{train}, \mathcal{X}_\textrm{train})^{-1}(\mathcal{Y}_\textrm{train} - f_{\theta^0}(\bm{x}^\star)) + f_{\theta^0}(\bm{x}^\star),
\end{equation*}
where $f_{\theta^0}(\bm{x}^\star)$ is the output of freshly initalised network with parameters $\theta^0$ for input $\bm{x}^\star$. As for the standard parameterisation $\mathbb{E}[f_{\theta^0}(\bm{x}^\star)]$ = 0, we thus also get that:
\begin{equation*}
    \mathbb{E}[f_{\theta^\infty}(\bm{x}^\star)] = \Theta(\bm{x}^\star, \mathcal{X}_\textrm{train})\Theta(\mathcal{X}_\textrm{train}, \mathcal{X}_\textrm{train})^{-1}\mathcal{Y}_\textrm{train},
\end{equation*}
which is precisely the posterior mean of a GP with kernel function $\Theta(\bm{x}, \bm{x}')$ conditioned on data $\mathcal{D}_{\textrm{train}}$ evaluated at point $\bm{x}^\star$. While the original result of \cite{lee2019wide} assumed $\left\lVert\bm{x}^\star\right\rVert \le 1$, this assumption can be lifted without affecting the limit, as we explain in Appendix \ref{appendix:infinitenorm}. This result is valid, as long as the infinite width limit $\Theta(\bm{x}, \bm{x}')$ exists. The work of \cite{yang2019wide, yang2020tensor, yang2020tensor3} showed this limit exists for any combinations of basic operations (called Tensor Programs), which include Convolutions, Attention, and Layer Normalisations, to name a few. For completeness, we define the $n$-dimensional Layer Normalisation (LayerNorm) operation $\textrm{LN} : \mathbb{R}^n \to \mathbb{R}^n$  below:

\begin{equation*}
    \textrm{LN}(\bm{z}) = \frac{\bm{z} - \mu }{\sigma},
\end{equation*}
where $\mu = \frac{1}{n} \sum_{i=1}^n\bm{z}_i$ and $\sigma = \sqrt{\frac{1}{n}\sum_{i=1}^n (\bm{z}_i - \mu)^2 }$ and the division and subtraction of vector $\bm{z}$ and scalars $\mu$ and $\sigma$ is executed for each element of $\bm{z}$.

\section{Theoretical Results}
We now proceed to analyse how the inclusion of LNs in the network's architecture affect the NTK, and therefore the behaviour of the trained networks. We conduct our analysis under the assumption that the network's activation function $\phi(\cdot)$ satisfies the following assumptions:
\begin{restatable}[]{assumption}{activation_1}
\label{assumption:activation_1}
    Activation functions $\phi:\mathbb{R}\to\mathbb{R}$ are almost-everywhere differentiable and act element-wise.
\end{restatable}

\begin{restatable}[]{assumption}{activation_2}
\label{assumption:activation_2}
    Activation functions $\phi:\mathbb{R}\to\mathbb{R}$ are positive $n$-homogeneous with $n>0.5$, i.e. $\phi(\lambda x)=\lambda^n\phi(x)\hspace{4pt}\forall\lambda>0$.
\end{restatable}

\begin{restatable}[]{assumption}{gram}
\label{assumption:gram}
    The minimum eigenvalue $\lambda_{\textrm{min}}$ of the Gram matrix of the NTK over the training dataset, $\Theta_\textrm{train}$, is strictly positive, i.e. $\lambda_{\textrm{min}} > 0$.
\end{restatable}

Note that the former of the the activation function assumptions is a requirement of all activation functions, and the latter is true for the popular ReLU \cite{ReLU} activation function and its Leaky \cite{leaky_ReLU} and Parametric \cite{parametric_ReLU} variants, which are specifically positive $1$-homogeneous. This can easily be seen by noting that, for $\lambda>0$, $\phi_\textrm{ReLU}(\lambda x)=\max(0,\lambda x) = \lambda \max(0,x) = \lambda^1\phi_\textrm{ReLU}(x)$. The significance of $n>0.5$ becomes clear when considering an important consequence of Assumption \ref{assumption:activation_2}; the activation derivative $\dot\phi(\cdot)$, which is positive $n-1$-homogeneous, is square integrable with respect to the standard normal measure, i.e. $\dot\phi(\cdot)\in L^2(\mathbb{R},\gamma)$, only if the monomial degree of the derivative close to 0 is greater than $-0.5$, which is essential for NTK analysis.

While other popular activation functions, such as GELU and Swish \cite{GELU_SiLU}, do not satisfy the latter criteria due to their behaviour for small inputs, one can see that for large inputs the behaviour of these nonlinearities approaches positive $n$-homogeneity. In Appendix \ref{appendix:extended_results}, we extend our analysis to rigorously consider a broader class of nonlinearities containing almost all popular activation functions \cite{activation_survey}, observing that our most important result holds under these relaxed assumptions too.

The assumption on the eigenvalues of the Gram matrix is standard in all NTK literature, and essentially ensures neither the dataset nor the kernel is degenerate.

\subsection{Standard Neural Networks are unbounded}
We first proceed by analysing the extrapolatory behaviour of standard networks without LN operations:
\begin{restatable}[]{theorem}{explosion}
\label{thm:explosion} 
    Consider an infinitely-wide network $f_\theta(\bm{x})$ with nonlinearities satisfying Assumption \ref{assumption:activation_1} and Assumption \ref{assumption:activation_2}, and fully-connected layers. Then, there exists a finite dataset $\mathcal{D}_{\textrm{train}} = \left(\mathcal{X}_\textrm{train},\mathcal{Y}_\textrm{train}\right)$ such that any such network trained upon $\mathcal{D}$ until convergence has \footnote{When proving statements about infinite networks $f_\theta(\bm{x})$, we refer to the limiting network. Eg. in this case the statement is equivalent to $\sup_{\bm{x} \in\mathbb{R}^{n_0}}  \left\lvert\mathbb{E}\left[\lim_{n \to \infty}f_\theta(\bm{x}) \right]\right\rvert$, where $n$ is the width of smallest layer.}
    \begin{equation*}
      \sup_{\bm{x} \in\mathbb{R}^{n_0}}  \left\lvert\mathbb{E}\left[f_\theta(\bm{x}) \right]\right\rvert = \infty,
    \end{equation*}
    where the expectation is taken over initialisation.

\end{restatable}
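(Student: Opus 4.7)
My plan is to exploit the positive $n$-homogeneity of the activation to establish that the limiting NTK is strongly homogeneous under rescaling of one of its arguments, and then construct a minimal training set for which the closed-form converged prediction directly witnesses this unbounded growth. The whole argument hinges on the fact that homogeneity of the activation propagates all the way through the network, and hence through the gradient that defines the empirical NTK.

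\textbf{Homogeneity of the NTK.} I would first show that, pointwise in the parameters $\theta$, any fully-connected network with $H$ nonlinear hidden layers satisfies $f_\theta(\lambda \bm{x}) = \lambda^{n^H} f_\theta(\bm{x})$ for all $\lambda > 0$. This follows by a simple induction on depth: if $\bm{z}^{(l)}$ denotes the $l$-th hidden representation and $\bm{z}^{(l)}(\lambda \bm{x}) = \lambda^{a_l}\bm{z}^{(l)}(\bm{x})$, then linearity of the affine maps and Assumption \ref{assumption:activation_2} give $\bm{z}^{(l+1)}(\lambda \bm{x}) = \lambda^{n a_l}\bm{z}^{(l+1)}(\bm{x})$, so with $a_1=n$ one obtains $a_H = n^H$; the final linear readout preserves the scaling. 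Differentiating this identity in $\theta$ gives $\nabla_\theta f_\theta(\lambda \bm{x}) = \lambda^{n^H}\nabla_\theta f_\theta(\bm{x})$, which propagates to the empirical NTK as $\Theta(\lambda\bm{x},\bm{x}';\theta) = \lambda^{n^H}\,\Theta(\bm{x},\bm{x}';\theta)$, and hence, by the uniform convergence results of \cite{jacot2018neural,lee2019wide}, to the infinite-width limit: $\Theta(\lambda\bm{x},\bm{x}') = \lambda^{n^H}\,\Theta(\bm{x},\bm{x}')$ for every $\lambda>0$.

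\textbf{Choice of dataset and conclusion.} Given the homogeneity above, the simplest possible witness suffices: I would take $\mathcal{D}_{\textrm{train}}=\{(\bm{x}_1, y_1)\}$ with any $\bm{x}_1\neq 0$ and $y_1\neq 0$. Assumption \ref{assumption:gram} reduces to $\Theta(\bm{x}_1,\bm{x}_1)>0$, which follows by a direct inspection of the NTK recursion at a single nonzero input (the associated Gaussian expectations are strictly positive whenever $\phi$ is not identically zero). Evaluating the closed-form expression stated in the preliminaries at $\bm{x}^\star = \lambda\bm{x}_1$ then gives
\begin{equation*}
    \mathbb{E}\!\left[f_{\theta^\infty}(\lambda\bm{x}_1)\right] = \frac{\Theta(\lambda\bm{x}_1,\bm{x}_1)}{\Theta(\bm{x}_1,\bm{x}_1)}\,y_1 = \lambda^{n^H}\,y_1,
\end{equation*}
and since $n>0.5$ (so in particular $n^H>0$ for any $H\ge 1$), letting $\lambda\to\infty$ yields $|\mathbb{E}[f_{\theta^\infty}(\lambda\bm{x}_1)]|\to\infty$, establishing the claimed supremum.

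\textbf{Main obstacle.} The principal technical subtlety I anticipate is not the construction itself but the justification that the pointwise homogeneity of $f_\theta$ can be transferred to the limiting (infinite-width) NTK, since the NTK is defined through a limit of inner products of gradients. I would address this by invoking the Tensor Program / NTK convergence machinery of \cite{jacot2018neural,yang2019wide,lee2019wide}, noting that the scaling relation $\Theta(\lambda\bm{x},\bm{x}';\theta)=\lambda^{n^H}\Theta(\bm{x},\bm{x}';\theta)$ holds uniformly in the width, so it survives the limit. A minor auxiliary step is checking $\Theta(\bm{x}_1,\bm{x}_1)>0$; for $H=0$ (pure linear network) the same calculation trivially produces $\Theta(\bm{x},\bm{x}')=\bm{x}^\top\bm{x}'$ and the conclusion is immediate, so no loss of generality arises from assuming $H\ge 1$.
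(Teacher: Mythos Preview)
Your high-level strategy---leverage homogeneity of the NTK in one argument and then exhibit blow-up along a ray through a training point---matches the paper's, but the central technical claim is false as stated. The linear layers in the paper's parametrisation carry biases with $\sigma_b>0$ (Appendix \ref{appendix:parametrisation}); indeed the paper notes explicitly that $\sigma_b\neq 0$ is required for Assumption \ref{assumption:gram}. Consequently your induction step ``linearity of the affine maps'' fails at the very first layer: $\frac{1}{\sqrt{n_0}}\bm{W}^{(1)}\bm{x}+\sigma_b\bm{b}^{(1)}$ is affine, not linear, and is not positively homogeneous in $\bm{x}$. Hence $f_\theta(\lambda\bm{x})\neq\lambda^{n^H}f_\theta(\bm{x})$ for generic finite $\theta$, and the identity $\Theta(\lambda\bm{x},\bm{x}';\theta)=\lambda^{n^H}\Theta(\bm{x},\bm{x}';\theta)$ does not hold at any width. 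The obstacle you flagged---passing the scaling through the infinite-width limit---is therefore not the real one; the relation you want to pass through the limit is simply not there.

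The paper repairs this by establishing only \emph{asymptotic} positive $n^L$-homogeneity of the limiting NTK in its first argument: $\lim_{\lambda\to\infty}\lambda^{-n^L}\Theta(\lambda\bm{x},\bm{x}')=\hat\Theta(\bm{x},\bm{x}')$, where the lower-order remainder comes precisely from the bias contributions $\sigma_b^2$ in the recursion $\Sigma^{(0)}(\bm{x},\bm{x}')=\frac{1}{n_0}\bm{x}^\top\bm{x}'+\sigma_b^2$. Tracking this remainder through the layers and showing it is subleading is the work of Proposition \ref{prop:fully_connected_kernel} and Corollary \ref{corollary:ReLUnhomogeneous}. One then still needs the limit $\hat\Theta(\tilde{\bm{x}},\bm{x}')$ to be strictly positive along some direction $\tilde{\bm{x}}$ and for every training point $\bm{x}'$; the paper obtains this via a Hermite-expansion argument (Lemma \ref{lemma:positivekappa}, Corollary \ref{corollary:positivekappabar}) under the condition that all training-pair inner products are positive. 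Your single-point dataset trivially satisfies that condition and would work once the asymptotic homogeneity is in hand, but the exact-homogeneity shortcut is not available in the setting of the theorem.
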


\begin{proof}[Proof sketch]
    We show that such a network has an NTK which, for a large input to even one of its arguments, is approximately positive $n^L$-homogeneous, where $L$ is the number of layers in the network, and thus grows without bound when nontrivial. Furthermore, we show that this behaviour is strictly positive, and thus indeed nontrivial, for any pair of inputs with positive cosine similarity. Using the Representer theorem to express the prediction of the trained network as a weighted sum of NTK evaluations of the input with all training inputs, and by constructing the training dataset with nontrivial targets such that all pairs of training inputs have positive cosine similarity, we show that there exists a direction within the training set along which the magnitude of the trained network prediction must be nontrivial and therefore grow without bound with input magnitude.

\end{proof}

 In the language of kernel methods, this is equivalent to showing that the reproducing kernel Hilbert space (RKHS) of the NTK, the span of which forms the set of all functions learnable by the network, contains only functions which grow without bound. For the training dataset described above, the function learned by the network does not enjoy perfect cancellation of these growths across training inputs, thus itself grows without bound.
This result generalises existing theoretical results \cite{existing_extrapolation_theory} (also built on NTK theory), and presents a clear problem: previously unseen inputs can generate pathologically large outputs, which could easily lead to a serious safety hazard in critical systems.

\subsection{Inclusion of even a single Layer Norm bounds network predictions}
By analysing the changes induced in the NTK, and therefore the predictions of the trained network, by the inclusion of an LN operation, we proceed to derive our main result:

\begin{restatable}[]{theorem}{boundednormalisedkernel}
\label{thm:boundednormalisedkernel} 
    The NTK of a network $f_\theta(\bm{x})$ with nonlinearities satisfying Assumption \ref{assumption:activation_1} and Assumption \ref{assumption:activation_2}, fully-connected layers, and at least one Layer Norm anywhere in the network, enjoys the property that:
    \begin{equation*}
        \forall_{\bm{x} \in\mathbb{R}^{n_0}} \left\lvert\Theta\left(\bm{x},\bm{x}\right)\right\rvert \leq C
    \end{equation*}
    for some constant $C > 0$.

\end{restatable}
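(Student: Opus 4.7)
The plan is to exploit the layer-by-layer recursion of the infinite-width NTK and to show that the Layer Norm acts as an exact counterweight to the polynomial growth introduced by preceding $n$-homogeneous nonlinearities, while leaving every downstream layer operating on bounded-variance signals.

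First I would fix attention on the first LN in the network, at some depth $L_\ell$, and apply the standard infinite-width recursion for FC layers with positive $n$-homogeneous $\phi$:
\begin{equation*}
\Sigma^{(l)}(\bm{x},\bm{x}) = c_1 \bigl(\Sigma^{(l-1)}(\bm{x},\bm{x})\bigr)^n, \qquad \dot\Sigma^{(l)}(\bm{x},\bm{x}) = c_2 \bigl(\Sigma^{(l-1)}(\bm{x},\bm{x})\bigr)^{n-1},
\end{equation*}
for constants $c_1,c_2>0$ depending only on $\phi$. Iterating from $\Sigma^{(0)}(\bm{x},\bm{x})\propto\|\bm{x}\|^2$ shows that immediately before the LN, both $\Sigma^{(L_\ell-1)}(\bm{x},\bm{x})$ and the partial NTK $\Theta^{(L_\ell-1)}(\bm{x},\bm{x})$ scale as $\|\bm{x}\|^{2n^{L_\ell-1}}$.

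Next I would analyse the effect of the LN itself. In the infinite-width limit the empirical variance $\sigma^2(\bm{x})$ of its input concentrates to $\Sigma^{(L_\ell-1)}(\bm{x},\bm{x})$, so the LN output has unit per-coordinate variance and $\Sigma^{(L_\ell)}(\bm{x},\bm{x})$ is a bounded constant. The LN Jacobian has operator norm of order $1/\sigma(\bm{x})$, so in the NTK recursion it multiplies $\Theta^{(L_\ell-1)}(\bm{x},\bm{x})$ by $1/\sigma^2(\bm{x})\propto\|\bm{x}\|^{-2n^{L_\ell-1}}$; the two powers cancel exactly, leaving $\Theta^{(L_\ell)}(\bm{x},\bm{x})$ uniformly bounded in $\bm{x}$. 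A forward induction through the post-LN layers then closes the argument: since inputs to every subsequent FC or nonlinearity now have bounded variance, each $\Sigma^{(l)}$ and $\dot\Sigma^{(l)}$ remains bounded and therefore so does $\Theta^{(l)}=\Sigma^{(l)}+\dot\Sigma^{(l)}\Theta^{(l-1)}$. Any further LNs in the network can only tighten this bound, giving the claim.

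The hard part will be the rigorous treatment of the LN layer inside the NTK framework. Unlike the elementwise nonlinearities of \cite{jacot2018neural}, LN is a vector-valued operation whose NTK contribution must be extracted via the Tensor Program machinery of \cite{yang2019wide,yang2020tensor,yang2020tensor3}. I expect the two delicate points to be (i) making precise the concentration $\sigma^2(\bm{x})\to\Sigma^{(L_\ell-1)}(\bm{x},\bm{x})$ uniformly in $\bm{x}$, and (ii) verifying that the rank-two corrections in the LN Jacobian (the mean-subtraction term and the $\hat{\bm{z}}\hat{\bm{z}}^\top$ term) vanish in the infinite-width limit and do not disturb the cancellation. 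A subsidiary issue is the behaviour near $\bm{x}=\bm{0}$, where numerator and denominator vanish at the same homogeneous rate so the ratio extends continuously by its limiting value.
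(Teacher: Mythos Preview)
Your proposal is correct and follows essentially the same route as the paper: isolate the first LN at depth $h_{LN}$, observe that the pre-LN variance $\Sigma^{(h_{LN})}(\bm{x},\bm{x})$ grows like $\lVert\bm{x}\rVert^{2n^{h_{LN}}}$, that the LN resets the post-LN variance to a constant, and that the $1/\sigma$ factor in the LN Jacobian exactly cancels the accumulated growth in the backward pass so that each layer's contribution to $\Theta(\bm{x},\bm{x})$ is a ratio of two functions of matching leading order; downstream layers then operate on bounded-variance signals and contribute bounded terms. The paper organises the NTK as a sum of per-layer contributions $\Theta^{(L,h)}=\Sigma^{(h)}\prod_{h'>h}\dot\Sigma^{(h')}$ rather than the running recursion $\Theta^{(l)}=\Sigma^{(l)}+\dot\Sigma^{(l)}\Theta^{(l-1)}$ you use, but the cancellation argument is the same, and your flagged technical points (concentration of $\sigma^2$, the rank-two Jacobian corrections, and the continuous extension at $\bm{x}=\bm{0}$ via the bias term $\sigma_b^2$) are exactly the details the paper's appendix either addresses briefly or leaves implicit.
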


\begin{proof}[Proof sketch]
    We prove that inclusion of a Layer Norm at layer $h_{LN}$ of the network causes the variance NTK, i.e. $\Theta(\bm{x},\bm{x})$, to take the form of a ratio of two (strictly positive) functions of input norm, each at leading order a monomial of order $\lVert\bm{x}\rVert^{n^{h_{LN}}}$ independent of the direction of $\bm{x}$, thus for large input magnitudes approaches a finite limit, also independent of input direction. Furthermore, we show that by the assumptions on the nonlinearities, both functions remain bounded for finite input magnitude. Thus, the variance NTK itself is bounded.
\end{proof}

Equipped with this result, we now proceed to study the extrapolatory behaviours of infinitely-wide networks with LN. We have previously proven that for standard infinitely-wide networks, one can always find a dataset causing arbitrarly large outputs. We now prove the opposite for network with LN---that is, for any dataset and any test point the output of networks with LN remains bounded with the bound being finite for any finite dataset.
\begin{restatable}[]{theorem}{worstcaseboundedoutput}
\label{thm:worstcaseboundedoutput} 
    Given an infinitely-wide network $f_\theta(\bm{x})$ satisfying Assumption \ref{assumption:activation_1} and Assumption \ref{assumption:activation_2}, fully-connected layers, and at least one layer-norm anywhere in the network, trained until convergence on any training data $\mathcal{D}_{\textrm{train}} = \left( \mathcal{X}_\textrm{train},\mathcal{Y}_\textrm{train}\right)$, we have that for any $\bm{x} \in \mathbb{R}^{n_0}$:
    \begin{equation*}
       \left\lvert \mathbb{E}[ f_\theta(\bm{x}) ] \right\rvert\le B(\mathcal{D}_{\textrm{train}}) = \sqrt{\frac{C\max_{y \in \mathcal{Y}_\textrm{train}}\lvert y \rvert}{\lambda_{\textrm{min}}} }\left\lvert D_{\textrm{train}}\right\rvert,
    \end{equation*}
    where the expectation is taken over initialisation and $\lambda_{\textrm{{min}}} $ is the smallest eigenvalue of $\Theta_{\textrm{{train}}}$ and $C$ is the kernel-dependent constant from Theorem \ref{thm:normalisedkernel} or Theorem \ref{thm:boundednormalisedkernel}.

\end{restatable}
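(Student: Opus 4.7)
The plan is to directly bound the closed-form expression for the expected output of a trained infinitely-wide network recalled in Section 2. Under the standard parametrisation we have $\mathbb{E}[f_{\theta^0}(\bm{x})]=0$, so convergence of gradient descent gives
\begin{equation*}
\mathbb{E}[f_{\theta^\infty}(\bm{x})] \;=\; \Theta(\bm{x},\mathcal{X}_\textrm{train})\,\Theta_\textrm{train}^{-1}\,\mathcal{Y}_\textrm{train}.
\end{equation*}
The strategy is to apply Cauchy--Schwarz to this trilinear expression and bound each of the three factors separately using Theorem \ref{thm:boundednormalisedkernel}, Assumption \ref{assumption:gram}, and a trivial bound on the training labels.

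For the first factor I would use that the NTK is, by construction, a positive semidefinite kernel (it is an inner product of gradient features, and this property is preserved in the infinite-width limit). The kernel Cauchy--Schwarz inequality then gives $|\Theta(\bm{x},\bm{x}_i)|\le\sqrt{\Theta(\bm{x},\bm{x})\,\Theta(\bm{x}_i,\bm{x}_i)}$ for every training input $\bm{x}_i$. Theorem \ref{thm:boundednormalisedkernel} provides the uniform bound $\Theta(\bm{z},\bm{z})\le C$ for every $\bm{z}\in\mathbb{R}^{n_0}$, so in particular each entry of the cross-kernel vector $\Theta(\bm{x},\mathcal{X}_\textrm{train})$ is bounded in absolute value by $C$, yielding $\lVert\Theta(\bm{x},\mathcal{X}_\textrm{train})\rVert_2\le C\sqrt{|\mathcal{D}_\textrm{train}|}$.

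For the remaining two factors, Assumption \ref{assumption:gram} gives $\lVert\Theta_\textrm{train}^{-1}\rVert_\textrm{op}\le 1/\lambda_\textrm{min}$, and the elementary bound $\lVert\mathcal{Y}_\textrm{train}\rVert_2\le \max_{y\in\mathcal{Y}_\textrm{train}}|y|\,\sqrt{|\mathcal{D}_\textrm{train}|}$ controls the labels. Combining with
\begin{equation*}
|\mathbb{E}[f_{\theta^\infty}(\bm{x})]|\;\le\;\lVert\Theta(\bm{x},\mathcal{X}_\textrm{train})\rVert_2\,\lVert\Theta_\textrm{train}^{-1}\rVert_\textrm{op}\,\lVert\mathcal{Y}_\textrm{train}\rVert_2
\end{equation*}
and simplifying yields the stated bound $B(\mathcal{D}_\textrm{train})$, independent of $\bm{x}$ and finite for any finite training set.

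I do not foresee a real obstacle here: once Theorem \ref{thm:boundednormalisedkernel} is in hand the argument is essentially a textbook kernel-ridge-type calculation. The two points worth double-checking are (i) the positive semidefiniteness of the infinite-width NTK that justifies the off-diagonal kernel Cauchy--Schwarz step, and (ii) that the initialisation expectation $\mathbb{E}[f_{\theta^0}(\bm{x})]=0$ continues to hold for arbitrary $\bm{x}\in\mathbb{R}^{n_0}$ (rather than only $\lVert\bm{x}\rVert\le 1$) for architectures containing a LayerNorm; the latter follows from the parametrisation/symmetry considerations collected in Appendix \ref{appendix:parametrisation} and the extension discussed in Appendix \ref{appendix:infinitenorm}.
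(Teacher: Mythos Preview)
Your approach is essentially identical to the paper's: start from the closed-form posterior mean $\Theta(\bm{x},\mathcal{X}_\textrm{train})\Theta_\textrm{train}^{-1}\mathcal{Y}_\textrm{train}$, apply Cauchy--Schwarz, and bound the three pieces via Theorem~\ref{thm:boundednormalisedkernel} (your kernel Cauchy--Schwarz step is exactly what the paper packages as Proposition~\ref{prop:constvariancekernels}), Assumption~\ref{assumption:gram}, and the trivial label bound. One small caveat: your (correct) estimates $\lVert\Theta(\bm{x},\mathcal{X}_\textrm{train})\rVert_2\le C\sqrt{|\mathcal{D}_\textrm{train}|}$ and $\lVert\Theta_\textrm{train}^{-1}\rVert_\textrm{op}\le 1/\lambda_\textrm{min}$ combine to give $\tfrac{C\,\max|y|}{\lambda_\textrm{min}}\,|\mathcal{D}_\textrm{train}|$, not the displayed $B(\mathcal{D}_\textrm{train})$---the paper's stated constant arises from using $1/\sqrt{\lambda_\textrm{min}}$ and $\sqrt{C}$ at those two steps, so the discrepancy lies in the theorem's stated constant rather than in your argument.
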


\begin{proof}[Proof sketch]
    Note that the GP posterior mean at a single test point $\bm{x}$ is a product of two vectors $\Theta(\bm{x}, \mathcal{X}_\textrm{train})\Theta^{-1}(\mathcal{X}_\textrm{train}, \mathcal{X}_\textrm{train})$ and $y$. By Cauchy-Schwarz, the absolute value of a dot product of two vectors cannot be greater that the product of their norms. Clearly $\lvert y\rvert \le \sqrt{\lvert D_{\textrm{train}} \rvert \max_{y \in \mathcal{Y}_\textrm{train}}\lvert y \rvert}$ and due to bounded variance property of the kernel we have $\lvert \Theta(\bm{x}, \mathcal{X}_\textrm{train}) \rvert \le \sqrt{\lvert D_{\textrm{train}} \rvert C}$, as explained in Proposition \ref{prop:constvariancekernels}. The proof is finished by observing that multiplication with $\Theta^{-1}(\mathcal{X}_\textrm{train}, \mathcal{X}_\textrm{train})$ cannot increase the norm by more than $\sqrt{\frac{1}{\lambda_{\textrm{min}}}}$, which by Assumption \ref{assumption:gram} is finite.
\end{proof}
Again in the language of kernel methods, the RKHS of the NTK, and therefore the set of functions learnable by such a network, contains only bounded functions. As the training targets are finite, by the self-regularising (with respect to the RKHS norm) property of kernel regressions, the trained network learns a finite combination of these functions, and therefore must itself be bounded. Note the bound is derived for the worst-case dataset and as such might not be tight in the average case. However, we would like to emphasise again that it is remarkable such a bound exists at all, in contrast to Theorem \ref{thm:explosion}. This is a significant result and the first of its kind; by including even a single Layer Norm, one guarantees that the trained network's predictions remain bounded, providing a certificate of safety for critical tasks. We add here that, while the above only considers the case of fully connected networks with nonlinearities satisfying Assumption \ref{assumption:activation_2}, in Appendix \ref{appendix:extended_results} we extend our analysis to a broader class of architectures and nonlinearities, providing this same guarantee for almost all popular activation functions and arbitrary architectures in the Tensor Program framework starting with a linear layer followed by a Layer Norm operation.

\subsection{Is considering learning dynamics necessary?}
Within the analysis above, we analysed the behaviours of randomly initialised neural networks trained with gradient descent until convergence via the NTK theory. Given the promising properties of LN networks, one might reasonably ask if it possible to arrive at such a bound without considering learning dynamics. However, we now show that in this relaxed setting, one may choose network parameters optimal with respect to the training loss such that even with a Layer Norm, no such guarantee is possible:

\begin{restatable}[]{proposition}{lnnotobvious}
\label{proposition:lnnotobvious} 
    There exists a trained network $\tilde{f}_{\tilde{\theta}}(\bm{x})$ including Layer Norm layers such that

    \begin{align*}
        \sup_{\bm{x}\in\mathbb{R}^{n_0},\tilde\theta\in\tilde\Omega^\star}\left\lvert \tilde{f}_{\tilde{\theta}}(\bm{x})\right\rvert &= \infty,
    \end{align*}

    where $\tilde\Omega^\star =\arg\min_{{\tilde\theta}\in\tilde\Theta}L\left({\tilde\theta},\mathcal{D}_\textrm{train}\right)$ is the set of minimisers of the training loss.
    
\end{restatable}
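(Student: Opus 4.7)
The plan is to produce an explicit overparameterised network containing a Layer Norm together with a training set whose loss-minimising manifold contains an entire line along which the output at one fixed test input grows without bound. The moral is that, although gradient descent from small initialisation implicitly selects the minimum-RKHS-norm minimiser exploited in the proof sketch of Theorem \ref{thm:worstcaseboundedoutput}, a generic element of $\arg\min L$ need not extrapolate stably; the restriction to the NTK-selected minimiser is therefore essential.

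Concretely, I would take $n_0=2$ and the architecture $\tilde{f}_{\tilde{\theta}}(\bm{x}) = \bm{w}^\top \mathrm{LN}(\bm{x}) + b$ with $\tilde\theta=(\bm{w},b)\in\mathbb{R}^3$. Using the definition of LN from Section 2, a direct computation gives $\mathrm{LN}((1,0)^\top)=(1,-1)^\top$ and $\mathrm{LN}((0,1)^\top)=(-1,1)^\top$. With training set $\mathcal{D}_\mathrm{train}=\{((1,0)^\top,0)\}$ the loss reduces to $L(\tilde\theta;\mathcal{D}_\mathrm{train})=\tfrac{1}{2}(w_1-w_2+b)^2$, whose infimum is $0$ and is attained on the plane $\tilde\Omega^\star=\{(\bm{w},b):w_1-w_2+b=0\}$. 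Inside this plane I would pick the one-parameter family $\tilde\theta(t)=((t,0)^\top,-t)$ and evaluate at $\bm{x}^\star=(0,1)^\top$, giving $\tilde{f}_{\tilde\theta(t)}(\bm{x}^\star)=t\cdot(-1)+0\cdot 1-t=-2t$, which is unbounded as $|t|\to\infty$; combined with $\tilde\theta(t)\in\tilde\Omega^\star$ this yields $\sup_{\bm{x},\tilde\theta\in\tilde\Omega^\star}|\tilde{f}_{\tilde\theta}(\bm{x})|=\infty$.

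The only bookkeeping is that LN must be well-defined at the two points used, which holds because $(1,0)^\top$ and $(0,1)^\top$ each have strictly positive coordinate variance. There is no substantive obstacle here: the construction is intentionally minimal to highlight the contrast with Theorem \ref{thm:worstcaseboundedoutput}. If a deeper, nonlinearity-containing architecture is preferred so as to match Assumptions \ref{assumption:activation_1}--\ref{assumption:activation_2} more literally, one can append a linear-ReLU-linear head after the LN and absorb the free scalar into the final linear weight; this reproduces the same conclusion with nearly identical arithmetic.
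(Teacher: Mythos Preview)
Your construction is correct and considerably more economical than the paper's. Where you exhibit a three-parameter linear head on top of a Layer Norm and read off a one-dimensional affine family of zero-loss parameters along which the prediction at a fixed second point is $-2t$, the paper instead starts from an arbitrary overparameterised fully-connected network $f_\theta$ already known to blow up for some $\theta\in\Omega^\star$ and shows that one can \emph{graft} a linear--LN--ReLU block into it: by routing large constant biases $\pm\sqrt{n}/\varepsilon$ through the LN alongside copies $\pm\bm{z}$ of the incoming activations, the normaliser $\sigma$ is dominated by the $\varepsilon$-controlled constants, and a suitably chosen $1/\varepsilon$ factor in the next linear layer undoes the scaling; in the limit $\varepsilon\to 0$ the LN is effectively bypassed and the original unbounded behaviour of $f_\theta$ is inherited by a point in $\tilde\Omega^\star$. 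Your argument buys simplicity and makes transparent that overparameterisation alone (a two-dimensional minimising manifold for a single datapoint) already produces the pathology; the paper's argument buys the stronger conceptual message that a Layer Norm can be neutralised inside \emph{any} sufficiently overparameterised architecture, so its mere presence is never a structural obstruction absent constraints on training dynamics. Your closing remark about appending a ReLU head is fine but unnecessary, since the proposition as stated does not invoke Assumptions~\ref{assumption:activation_1}--\ref{assumption:activation_2}.
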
 

\begin{proof}[Proof sketch]
     There exists a network $f_\theta(\bm{x})$ with fully-connected layers, operating in an over-parametrised regime such that additional linear layer parameters may not further decrease the training loss over some finite training set $\mathcal{D}_\textrm{train}$, and $\sup_{\bm{x}\in\mathbb{R}^{n_0},\theta\in\Omega^\star}\left\lvert f_{\theta}(\bm{x})\right\rvert = \infty$, where $\Omega^\star$ is the set of minimisers of the training loss (for example, a single linear layer network with non-trivial noiseless linear $\mathcal{D}_\textrm{train}$). One can construct a network $\tilde f_{\tilde\theta}(\bm{x})$ from $f_\theta(\bm{x})$ containing a standard Layer Norm operation, such that $\forall\bm{x}\in\mathbb{R}^{n_0}, \hspace{4pt}\exists\tilde{\theta}\in\tilde{\Omega} \hspace{4pt} s.t. \hspace{4pt} \tilde f_{\tilde\theta}(\bm{x})=f_\theta(\bm{x})\hspace{4pt}\forall\theta\in\Omega$. As $f_\theta(\bm{x})$ already achieves the minimum training loss, it follows that the set of $\tilde\theta$ exactly recovering $f_\theta(\bm{x})$ is a subset of $\tilde\Omega^\star\hspace{4pt}\forall\theta\in\Omega^\star$, so any properties of trained $f_\theta(\bm{x})$ are also present in this subset of trained $\tilde{f}_{\tilde\theta}(\bm{x})$. Thus, $\sup_{\bm{x}\in\mathbb{R}^{n_0},\tilde\theta\in\tilde\Omega^\star}\left\lvert \tilde{f}_{\tilde{\theta}}(\bm{x})\right\rvert = \infty.$
\end{proof}

The above proves that, without explicitly considering training dynamics and initialisation, one cannot arrive at such guarantees. This lends credence to the popular theory that the training process itself is of particular importance to network generalisation \cite{chaudhari2019entropy, nagarajan2019generalization, chatterjee2020coherent, wang2023generalization}, and demonstrates the power of NTK theory in understanding the behaviour of Neural Networks.

\section{Experiments}
We now proceed to verify our theoretical results with empirical experiments, as well as show how the insights we derived can be utilised in practical problem settings. For details about compute and the exact hyperparameter settings, see Appendix \ref{appendix:experiments}. We implemented all networks using PyTorch \cite{paszke2019pytorch}. We open-source our codebase \footnote{\url{https://github.com/JuliuszZiomek/LN-NTK}}.
\begin{figure}[H]
    \centering
    \includegraphics[width=\linewidth]{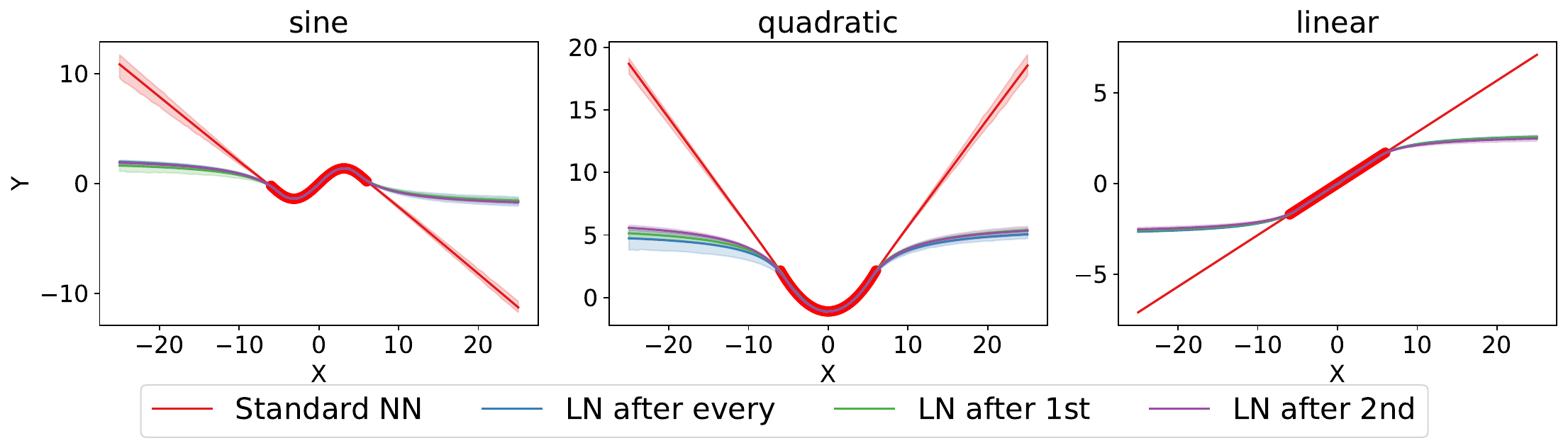}
    
    \caption{Predictions made by networks with various architectures when trained on synthetic datasets. Red dot show the train set datapoints. The solid lines indicate average values over 5 seeds and shaded areas are 95\% confidence intervals of the mean estimator.}
    
    \label{fig:synthetic-predictions}
\end{figure}

\begin{figure}[H]
    \centering
\includegraphics[width=\linewidth]{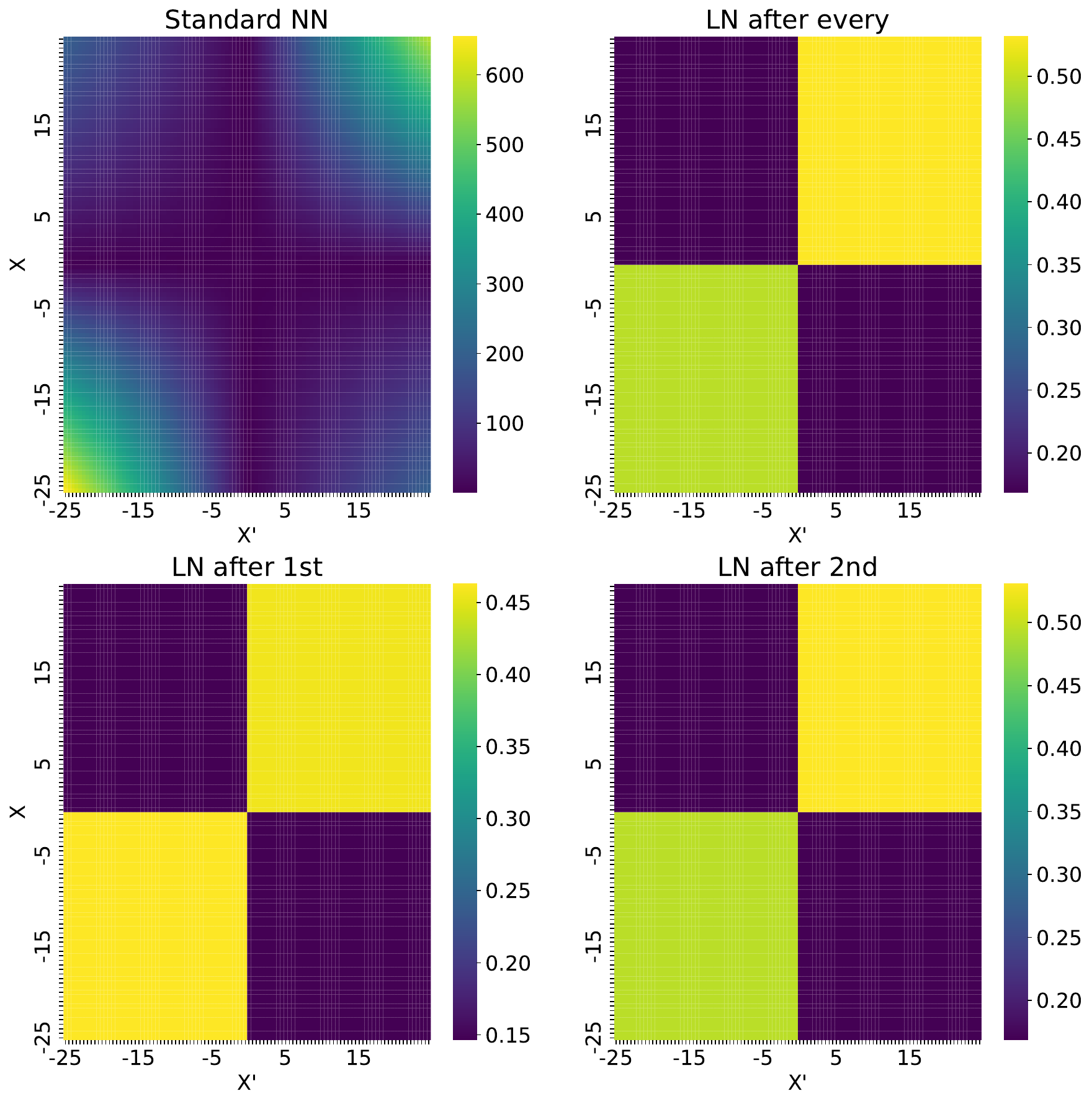}
    
    \caption{Heatmaps showing the values of empirical NTK values $\Theta(\bm{x}, \bm{x}')$ plotted on domain $\bm{x}, \bm{x}' \in [-25,25]$ with brighter colours indicating higher values. Note that the scales are different for each heatplot, with the values range for the NTK of Standard NN being orders of magnitude higher than others. The displayed values are averages over 5 seeds.}
    
    \label{fig:sine-NTK}
\end{figure}

\subsection{Verifying Theoretical Results on Toy Problems} \label{sec:toyproblems}
We first verify our theory on a number of toy problems. We specifically choose three one-dimensional datasets, where the relationship between the target and input variables are a sinusoid, a linear, and a quadratic function respectively For each dataset, we fit  neural networks with different architectures. We first adopt a baseline standard MLP with ReLU nonlinearities, consisting of two layers, each of size 128. We then compare three different variants including Layer Normalisation: Preceding all, the first, and the last hidden layers. We train each architecture to convergence on each dataset and plot the results in Figure \ref{fig:synthetic-predictions}. To account for randomness in initialisation, we repeat this experiment over five seeds and plot the mean predictions together with 95\% confidence interval of the mean estimator.

The results clearly show that regardless of dataset or initialisation, the network without an LN extrapolates linearly and thus ``explodes'', verifying Theorem \ref{thm:explosion}, while the presence of even a single LN operation anywhere within the network prevents that behaviour from occurring, verifying Theorem \ref{thm:worstcaseboundedoutput}. In fact, what we typically see is that a standard NN simply continues the local trend that appears near the boundaries of the training distribution, whereas the outputs of networks with at least one LN quickly saturate. Interestingly, varying the position of our number of LNs in the network has an insignificant effect on prediction. We show additional experiments in Figures \ref{fig:activations} and \ref{fig:bn} in Appendix. These results show that the same effect is observed when changing the activation function to GELU or SiLU, and that BatchNorm does not prevent the explosion as LayerNorm does, which is expected under our theory as BatchNorm does not alter the NTK. Additionally, in Figure \ref{fig:transformer}, we show that LayerNorm also prevents explosion in a transformer architecture, thus verifying experimentally Theorem \ref{thm:asymptoticworstcaseboundedoutput}.k

Our theory indicates the properties of the induced NTK should drastically change after the inclusion of even a single LN operation throughout the network. To verify this claim, we show the empirical NTK at initialisation, that is $\nabla f_\theta(\bm{x})^\top\nabla f_\theta\left(\bm{x}'\right)$, as a heatplot. We note that as the network width approaches infinity, the empirical NTK should approach the limiting NTK, which we studied above. We show the heatplot in Figure \ref{fig:sine-NTK}. The displayed values are averages across all seeds.

By inspecting the plots in Figure \ref{fig:sine-NTK}, one immediately sees that there is a completely different structure to the empirical NTK between Standard NN and the architectures with LN, where the latter exhibit a ``chessboard''-like pattern, clearly closely related to the cosine similarity of the inputs, which when low leads to minimal covariance. At the same time, this kernel value quickly saturates and reaches an upper bound that does not seem to vary with the magnitude of inputs, again as expected. On the other hand, the pattern displayed by the empirical NTK of a standard network is vastly different, with values steadily increasing with input's  magnitude without saturation. This behaviour seems present even when the inputs have opposite signs, as indicated by the lighter corners of $(25,-25)$ and $(-25,25)$. As such, we again see a form of explosion prevented by the presence of LN.

\subsection{Extrapolating to bigger proteins}
\label{sec:protein}
As the first real-world problem we study the UCI Physicochemical Properties of Protein Tertiary Structure dataset \cite{physicochemical_properties_of_protein_tertiary_structure_265}, which consists of 45,730 examples with nine physicochemical features extracted from amino acid sequences. The prediction target is the root mean square deviation (RMSD) of atomic distances in protein tertiary structures, reflecting the degree of structural similarity. As training set, we use randomly selected 90\% of all proteins with surface area less than 20 thousand square angstroms. We then construct two validation sets, one in-domain with the remaining 10\% of proteins with smaller surface area, and one out-of-domain with all proteins whose surface area is larger than 20 thousand square angstroms. We fit the models on the training set and evaluate them on both validation sets. We utilise the same network architectures as in the toy problem, that is, two hidden layers of size 128, with the following variants: no LN, LN preceding every hidden layer and LN preceding 1st and 2nd hidden layers only. Additionally, we compare with the well-known XGBoost \cite{chen2016xgboost} method. We show the numerical results in Table \ref{tab:protein}.

We observe that all methods perform similarly when evaluated in-distribution, but this is no longer the case under out-of-distribution (OOD) evaluation. Both the standard neural network and XGBoost perform substantially worse OOD and are outperformed by neural networks that include at least one layer normalisation. When inspecting the OOD coefficient of determination ($R^2$) across networks with different LN placements, we find that the position of the normalization layer does not appear to affect performance significantly—all LN-equipped networks yield mean OOD $R^2$ values within each other’s confidence intervals. To complement the $R^2$ analysis, Figure \ref{fig:protein_plots} presents the distribution of model predictions on OOD data for each architecture, as well as the relationship between predicted values and the total surface area of the proteins. The standard neural network without LN tends to predict unrealistically large residue sizes for large proteins, while XGBoost effectively extrapolates to a constant value.
Biophysically, larger proteins often exhibit higher RMSD values because longer chains and more complex folds are harder to model accurately. However, this relationship is not strictly linear, as it depends on how the protein folds and the degree of structural compactness. Consequently, while one may expect residue size to increase with protein size, a simple linear extrapolation without bounds is unlikely to be accurate. Conversely, the constant extrapolation seen with XGBoost likely underestimates residue sizes, which should continue to grow beyond those observed in the training set. Neural networks that include at least one LN layer appear to strike a more appropriate balance—continuing the upward trend without producing overly extreme predictions. Thus, incorporating LN improves the network’s extrapolatory behaviour, leading to more stable and realistic OOD predictions for this problem.

\begin{table}[H]

 \caption{In-distribution (ID) and out-of-distribution (OOD) coefficients of determinstation $R^2$ (higher the better) on the UCI Protein dataset. Displayed value is average over 10 seeds with 95\% confidence intervals (rounded to 2 decimal places). The best values in each row and overlapping methods are highlighted in bold.}
 
    \centering
     \begin{tabular}{lccccc}
\toprule
Method & $\text{LN every}$ & $\text{LN first}$ & $\text{LN second}$  & $\text{Standard NN}$ & $\text{XGBoost}$ \\
\midrule
OOD $R^2$ & $\mathbf{0.50 \pm 0.02}$ & $\mathbf{0.50 \pm 0.02}$   & $\mathbf{0.50 \pm 0.01}$ & $0.31 \pm 0.02$ & $0.27 \pm 0.04$ \\
ID $R^2$  & $\mathbf{0.60 \pm 0.01}$  & $\mathbf{0.59 \pm 0.01}$& $\mathbf{0.60 \pm 0.01}$  & $0.58 \pm 0.01$ & $\mathbf{0.61 \pm 0.01}$ \\
\bottomrule
\end{tabular}
    \label{tab:protein}
    
\end{table}

\begin{figure}[H]
    \centering
    \includegraphics[width=\linewidth]{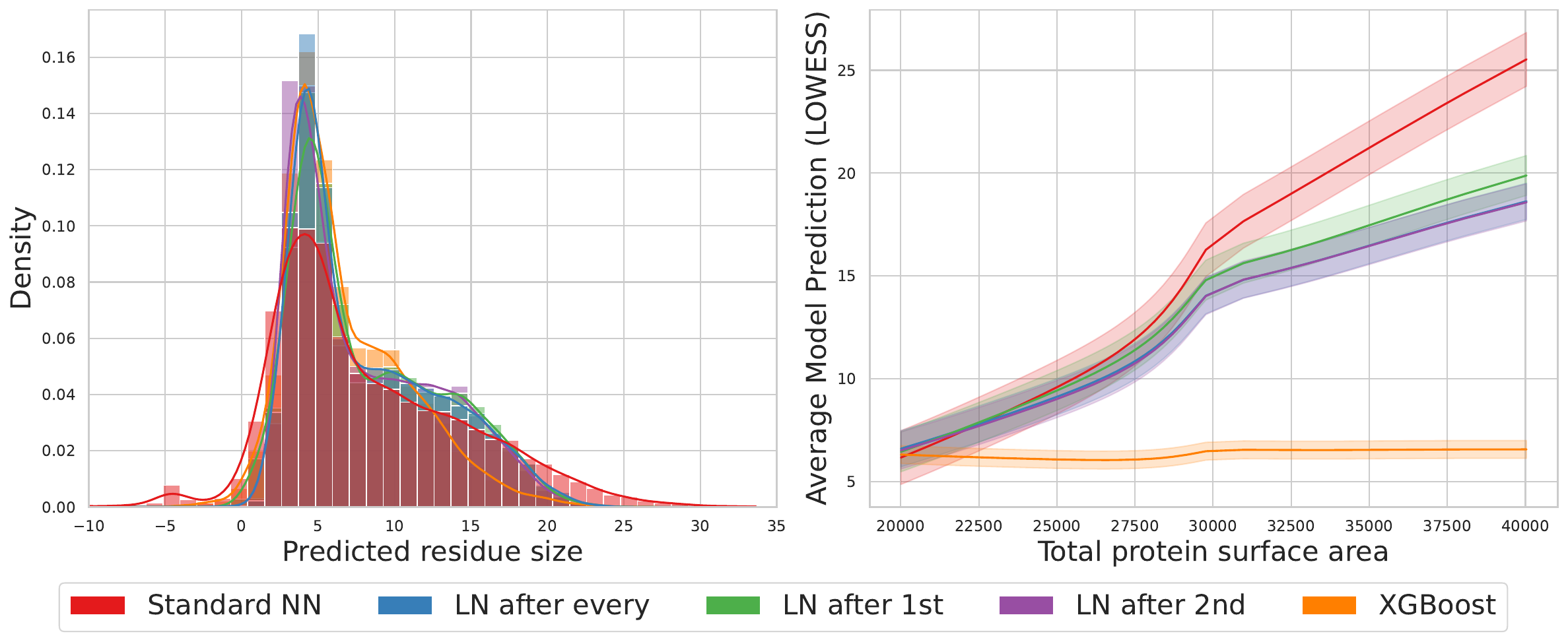}
    
    \caption{Histogram of predictions made by each model on the out-of-distribution data for protein experiment (left) and LOWESS \cite{cleveland1981lowess} trendlines fitted to the relationship between protein surface area and average prediction for  each method (right). Both plots are produced from data aggregated from 10 seeds. Shaded areas in the right plot are 95\%-confidence intervals. }
    
    \label{fig:protein_plots}
\end{figure}

\subsection{Predicting Age from Facial Images of Underrepresented Ethnicities}
\label{sec:utk}
As a second real-world experiment, we target a computer vision application, where the target is to predict a person's age from their facial image. We utilise the UTKFace dataset\footnote{\url{https://susanqq.github.io/UTKFace/}}, a large-scale facial image dataset annotated with age, gender, and ethnicity labels. The dataset contains over 20,000 facial images spanning a wide range of ages from 0 to 116 years, captured in unconstrained conditions. Each image is cropped and aligned to 200×200 pixels, and age labels are provided as discrete integers. The ethnicity label has five unique values: White, Black, Asian, Indian and Others. To evaluate extrapolation capabilities, we construct the training set using only samples with ethnicity labels White, Black, Asian, or Indian. During evaluation, the in-distribution set consists of validation images from these same four ethnicities, while the out-of-distribution set includes remaining validation set images.

We train a number of different networks, each one starting with a frozen ResNet-18 \cite{he2016deep} as a feature extractor. After that, each architecture includes two hidden, fully-connected layers of size 128 with ReLU non-linearities. Same as in the previous two experiments, we experiment with presence and position of LNs. We train one network without LN, one network with LN after every fully-connected layer and with only one LN after 1st and 2nd fully-connected layers respectively. We train each network until convergence on training set and then evaluate on in and out-of-distribution evaluation sets. We show the results in Table \ref{tab:ageprediction}.

We observe that while all architectures perform equally well on in-distribution data, their performances vary significantly out-of-distribution. The architecture performing best OOD contains an LN preceding only the first hidden layer, but its confidence intervals overlap with that of architecture with an LN preceding every hidden layer. Having an LN preceding only the second hidden layer performs worse, but still clearly outperforms having no LN in the network. In Figure \ref{fig:utk_cosinesim}, we show average model error for a given OOD test input as a function of average cosine similarity of ResNet features of that input to training samples. We see that as similiarity goes to 0.0, meaning the testing input becomes dissimilar to the training data, the average error of all models grows, but this grow is fastest for the model with no LN, indicating that models with LN make more accurate extrapolation, implying the stable extrapolation property can also be useful in high-dimensional feature spaces.

\begin{table}[H]

 \caption{In-distribution (ID) and out-of-distribution (OOD) coefficients of determinstation $R^2$ (higher the better) on the UTKFace age prediction problem. Displayed value is average over 250 seeds with 95\% confidence intervals (rounded to 2 decimal places) . The best values in each row and overlapping methods are highlighted in bold.}

    \centering
    \begin{tabular}{lcccc}
\toprule
Method & $\text{LN every}$ & $\text{LN first}$ & $\text{LN second}$ &  $\text{Standard NN}$ \\
\midrule
OOD $R^2$ & $\mathbf{0.42 \pm 0.05}$ & $\mathbf{0.47 \pm 0.03}$ & $0.36 \pm 0.04$   & $0.27 \pm 0.03$ \\
ID $R^2$  & $\mathbf{0.66 \pm 0.01}$ & $\mathbf{0.68 \pm 0.01}$ & $\mathbf{0.66 \pm 0.01}$  & $\mathbf{0.66 \pm 0.01}$ \\
\bottomrule
\end{tabular}
   
    \label{tab:ageprediction}

\end{table}

\begin{wrapfigure}{r}{0.5\linewidth}
    \centering
    \includegraphics[width=\linewidth]{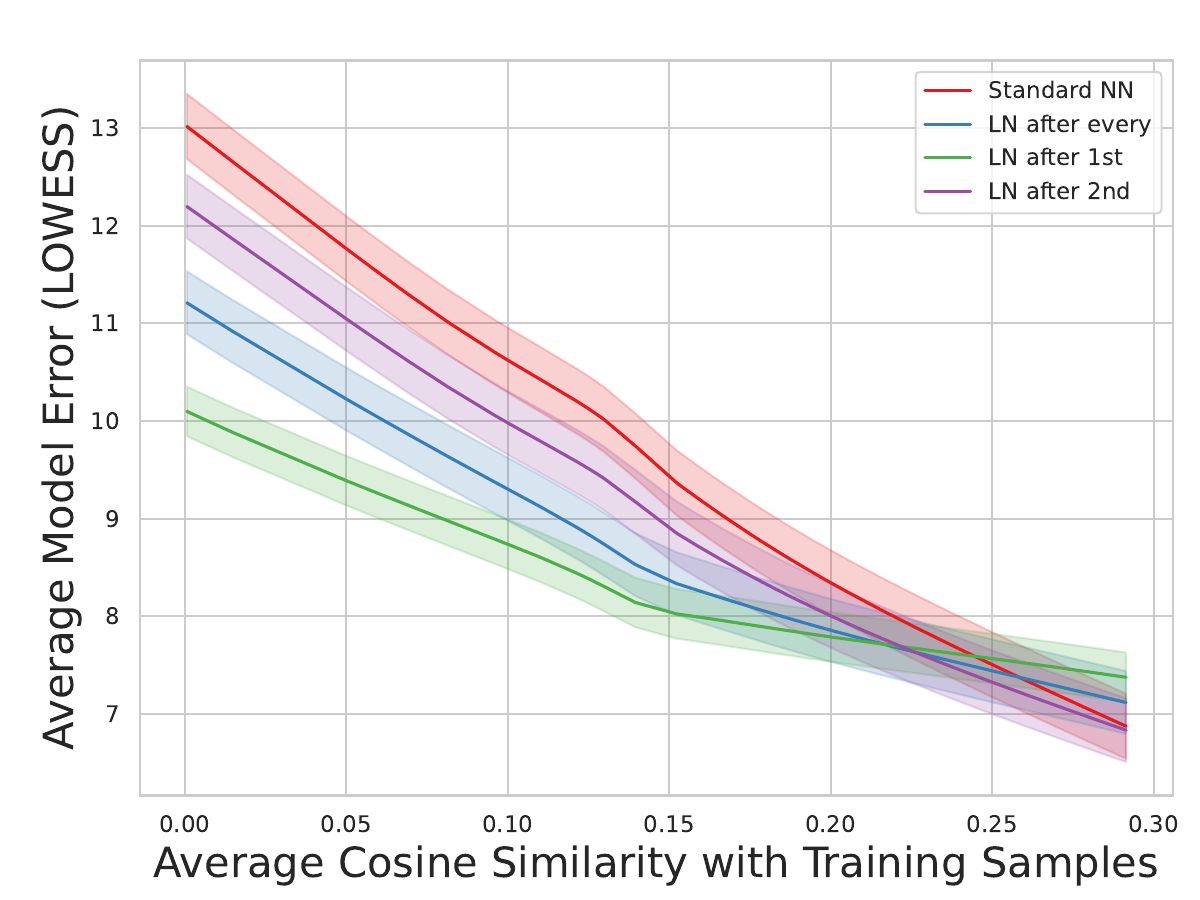}
    \caption{LOWESS \cite{cleveland1981lowess} trendlines fitted to average OOD model prediction on the UTK-Face task as a function of average cosine similarity of ResNet-18 features with training samples. Shaded areas are 95\% confidence intervals produced over 250 seeds.}
    \label{fig:utk_cosinesim}
\end{wrapfigure}

\section{Related Work}
Theoretical research in this area is limited, as is typically the case for neural network theory. Xu et al. \cite{existing_extrapolation_theory} also utilise NTK theory to analyse extrapolatory behaviour for a range of infinitely-wide architectures, but for our problem setting of fully connected networks they show only that networks with ReLU activations extrapolate linearly; a special case of Corollary \ref{corollary:nhomogeneous_c} used in the proof of Theorem \ref{thm:explosion}. Hay et al. \cite{manifold} provide extrapolation bounds when learning functions on manifolds, but this requires a specific training method and prior knowledge of the function, rather than a general property of networks as we prove. Courtois et al. \cite{discussion_domingo} discuss the general implications of the existence of NTK theory on network extrapolation, but contribute no bounds or quantitative results.

Empirical research is more abundant; Kang et al. \cite{constant_extrapolation} empirically observe that in very high-dimensional input spaces trained networks tend to extrapolate at the optimal constant solution, that is, the constant solution which minimises error on the training dataset, including with Layer Norm. This relates to our Proposition \ref{prop:fully_connected_kernel} and the proof of Corollary \ref{corollary:ReLUnhomogeneous}, which suggest that for a network with ReLU activation function and a very high number of dimensions, the NTK is approximately constant for relatively low input magnitudes, regions of which at these dimensions may still have the majority of their volume described as out-of-distribution, and therefore extrapolatory. Additionally, in very high dimensions the criterion on the dataset to guarantee prediction instability occurs with very low probability, with most training input pairs having cosine similarity close to 0. Vita et al. \cite{optimization_important} show that extrapolatory behaviour is affected by the choice of optimiser for training. Fahlberg et al. \cite{protein_extrapolation} investigate network extrapolation in protein engineering, finding inductive biases of networks drive extrapolatory behaviour, which is complimentary to our results.

Regarding work understanding Layer Norm in general, Xu et al. \cite{xu2019understanding} found the effect of the Layer Norm operation on gradients during backpropagation to be more important than its effect on the forward pass, which matches our observations as these gradients are the mechanism by which Theorem \ref{thm:boundednormalisedkernel} is possible. Zhu et al. \cite{zhu2025transformers} empirically found replacing Layer Norm operations with a scaled tanh function to be effective in large transformers; we speculate that this success comes about from enforcing the sigmoidal behaviour of the infinite width Layer Norm (obvious from our Equation \ref{eq:variance}) even at lower network widths.

\section{Conclusions}
Within this work, we have shown how the inclusion of even a single LN operation throughout the network fundamentally changes its extrapolatory behaviour. We explained this phenomenon from the NTK point of view and verified our claims with a number of empirical studies, showing how this property may be useful in real-world scenarios. We believe the conclusions drawn from our work can be useful to practitioners.  While LNs are already very popular, there are still many architectures that do not include it. However, if one wishes for the output to be bounded even outside of the training domain, one should consider utilising (at least one) LN somewhere within the network's architecture. 

We note that extrapolation is, in general, an incredibly difficult problem. As we go outside of data distribution, what exactly consistutes a ``good'' extrapolation is unknown to us unless further assumptions can be made. A stable (i.e. bounded) extrapolation, while desirable in many cases outlined in this paper, does not need to be always correct. For example, there might be some datasets when we want to continue a linear trend even as we go far away from training data. This pertains to the broader problem of selecting a kernel based on limited observations, such that the function of interest lies in the RKHS,  which has been extensively studied in GP literature \cite{lloyd2014automatic, berkenkamp2019no, ziomek2024bayesian, ziomek2025time}.  Within this work, we do not claim that LN universally ``solves'' the problem of extrapolation. Instead, we aimed to highlight the differences in extrapolatory behaviour of networks with and without LN, so that practitioners can make more informed architectural choices.

\bibliographystyle{plain}
\bibliography{example_paper}


\appendix

\section{Parametrisation and Initialisation}
\label{appendix:parametrisation}
Throughout this paper, we use the following parametrisations and, where applicable, initialisations for network components:

\subsection{Linear Layers}
Denoting a general layer input as $\bm{\xi}^{(i-1)}\in\mathbb{R}^{n_{i-1}}$ and output as $\bm{z}^{(i)}\in\mathbb{R}^{n_{i}}$, we parametrise a linear layer as

\begin{equation} \label{parametrisation:linear_layer}
    \bm{z}^{(i)} = \frac{1}{\sqrt{n_i}}\bm{W}^{(i)}\bm{\xi}^{(i-1)} + \sigma_b\bm{b}^{(i)},
\end{equation}

with parameters 

\begin{align*}
    \bm{W}^{(i)} &\in \mathbb{R}^{n_{i}\times n_{i-1}}, \\
    \bm{b}^{(i)} &\in \mathbb{R}^{n_{i}}, \\
\end{align*}

and hyperparameter

\begin{equation*}
    \sigma_b > 0.
\end{equation*}

Note that $\sigma_b\neq0$ is necessary to ensure that the kernel is non-degenerate, which is a requirement of Assumption \ref{assumption:gram}. Furthermore, we initialise these parameters as

\begin{align}
    \bm{W}^{(i)}_j \in \mathbb{R}^{n_i} &\sim \mathcal{N}\left(\bm{0},\mathcal{I}\right) &\forall j=1,\dots,n_{i-1}, \label{initialisation:W}\\
    \bm{b}^{(i)} &\sim \mathcal{N}\left(\bm{0},\mathcal{I}\right), & \label{initialisation:b}
\end{align}

where the row vector $\bm{W}^{(i)}_j$ is the $j$th row of $\bm{W}^{(i)}$, such that

\begin{equation} \label{eq:W_rows}
    \bm{W}^{(i)} = \begin{bmatrix}
        {\bm{W}^{(i)}_1}^\top \dots {\bm{W}^{(n_i)}}^\top
    \end{bmatrix}^\top
\end{equation}

\subsection{Layer Norm}
We consider Layer Norm to operate without additional scaling parameters. Thus, denoting a general layer input as $\bm{z}^{(i)}\in\mathbb{R}^{n_i}$ and output as $\tilde{\bm{z}}^{(i)}\in\mathbb{R}^{n_i}$,

\begin{align}
    \tilde{\bm{z}}^{(i)}_j &= \frac{\bm{z}^{(i)}_j-\mu}{\sigma} & \forall j=1,\dots,n_i, \label{parametrisation:layer_norm} \\
    \mu &= \frac{1}{n_i} \sum_{n=j}^{n_i}\bm{z}^{(i)}_j, & \label{eq:layer_norm_mu} \\
    \sigma &= \sqrt{\frac{1}{n_i} \sum_{n=j}^{n_i}\left(\bm{z}^{(i)}_j-\mu\right)^2}. & \label{eq:layer_norm_sigma}
\end{align}

\subsection{Activation Functions}
We consider activation functions, or nonlinearities, $\phi:\mathbb{R}\to\mathbb{R}$, to act elementwise. Thus, denoting a general layer input as $\bm{z}^{(i)}\in\mathbb{R}^{n_i}$ and output as $\bm{\xi}^{(i)}\in\mathbb{R}^{n_i}$,

\begin{align} \label{parametrisation:activation}
    \bm{\xi}^{(i)}_j &= \sqrt{c_\phi}\phi\left(\bm{z}^{(i)}_j\right) & \forall j=1,\dots,n_i
\end{align}

where $c_\phi$ is the activation function-dependant constant defined in Equation \ref{eq:c_phi}.

\section{Limit for \texorpdfstring{$\left\lVert\bm{x}^\star\right\rVert_2 > 1$}{|x*|_2 > 1}} \label{appendix:infinitenorm}
While Theorem 2.1 of \cite{lee2019wide} assumes $\left\rVert \bm{x} \right\lVert_2 < 1$, we note this is only neccessary for establishing the inequality labelled as S85, which states that $n^{-1/2} \left\lVert \bm{x}^l \right\rVert_2 \le K_1$, where $x^l$ is the post-activation of $l$th layer with $\bm{x}^0$ being network input, $n$ is the size of smallest layer and $K_1$ is some constant. For any finite $n$ the set of inputs for which  $n^{-1/2} \left\lVert \bm{x}^l \right\rVert_2 \le K_1$ is some subset of the entire input space $\mathbb{R}^n$. However, for any finite input $x$, all post-activation are also finite and thus taking $n > \max_{l=0,\dots,L}\frac{\left\lVert \bm{x}^l \right\rVert_2^2}{K_1^2}$ sufficies. As such, for an arbitrary input $x \in R^{n_0}$ we can always find sufficiently large network and even for points $\left\rVert \bm{x}^\star \right\lVert_2 > 1$ the limiting result holds.

\section{Extended Results} \label{appendix:extended_results}
\subsection{The soft-cosine similarity}

\begin{restatable}[]{definition}{softcosine}
\label{thm:softcosine} 
    The soft-cosine similarity between two vectors $\bm{x}, \bm{x}'\in\mathbb{R}^n$, parametrised by $\sigma^2\geq0$, is given by
    \begin{equation*}
        \mathcal{S}\left(\bm{x},\bm{x}';\sigma^2\right)=\frac{\frac{1}{n}\bm{x}^\top\bm{x}'+\sigma^2}{\sqrt{(\frac{1}{n}\bm{x}^\top\bm{x} + \sigma^2)(\frac{1}{n}\bm{x}'^\top\bm{x}' + \sigma^2)}}.
    \end{equation*}
\end{restatable}

\begin{restatable}[]{remark}{softcosine_to_cosine}
\label{thm:softcosine_to_cosine} 
    For $\sigma^2=0$, the soft-cosine similarity between two vectors $\bm{x}, \bm{x}'\in\mathbb{R}^n$, $\mathcal{S}\left(\bm{x},\bm{x}';\sigma^2\right)$, reduces to the standard cosine similarity between $\bm{x}$ and $\bm{x}', \cos\left(\angle\left(\bm{x},\bm{x}'\right)\right)$.

\end{restatable}

\begin{proof}
    $\mathcal{S}\left(\bm{x},\bm{x}';0\right)=\frac{\frac{1}{n}\bm{x}^\top\bm{x}'}{\sqrt{\frac{1}{n}\bm{x}^\top\bm{x}}\sqrt{\frac{1}{n}\bm{x}'^\top\bm{x}'}}=\frac{\bm{x}^\top\bm{x}'}{\left\lVert\bm{x}\right\rVert\left\lVert\bm{x}'\right\rVert}=\cos\left(\angle\left(\bm{x},\bm{x}'\right)\right)$ by definition.
\end{proof}

\begin{restatable}[]{remark}{softcosine_bounds}
\label{thm:softcosine_bounds} 
    The soft-cosine similarity between two vectors $\bm{x}, \bm{x}'\in\mathbb{R}^n$, $\mathcal{S}\left(\bm{x},\bm{x}';\sigma^2\right)$, is bounded to the set $[-1,1]$, and $\forall\sigma^2>0$, achieves the maximum value of $1$ only for $\bm{x}=\bm{x}'$.

\end{restatable}

\begin{proof}
    $\mathcal{S}\left(\bm{x},\bm{x}';\sigma^2\right)$ can be rewritten as the cosine similarity between the two augmented vectors $\tilde{\bm{x}}=\left(\frac{1}{\sqrt n}\bm{x}, \sigma_b\right)$ and $\tilde{\bm{x}}'=\left(\frac{1}{\sqrt n}\bm{x}', \sigma_b\right)\in\mathbb{R}^{n+1}$, so is trivially bounded to the set $[-1, 1]$. The cosine similarity achieves the maximum value of $1$ only for $\tilde{\bm{x}}=\lambda\tilde{\bm{x}}'$ for some $\lambda>0$. When $\sigma^2>0$, the matching last element $\sigma^2$ constrains $\lambda=1$, and thus $\bm{x}=\bm{x}'$. Note that for the same reason, when $\sigma^2>0$ the minimum value of $-1$ is unattainable.
\end{proof}

\subsection{Asymptotically Positive \texorpdfstring{$n$}{n}-homogeneous Functions}
\begin{restatable}[]{definition}{nhomogeneous}
\label{thm:nhomogeneous} 
    A function $\phi(\cdot): \mathbb{R}\to\mathbb{R}$ is asymptotically positive $n$-homogeneous if
    \begin{equation*}
        \lim_{\lambda\to\infty} \frac{\phi(\lambda x)}{\lambda^n}=\hat\phi(x) \hspace{8pt} \forall x\in\mathbb{R},\lambda>0,
    \end{equation*}

    where $\hat\phi(\cdot):\mathbb{R}\to\mathbb{R}$ is positive $n$-homogeneous, that is, $\hat\phi(\lambda x)=\lambda^n \hat\phi(x) \hspace{8pt} \forall x\in\mathbb{R},\lambda>0$. Furthermore, we assume throughout that $\hat\phi(\cdot)$ is nontrivial.
\end{restatable}

\begin{restatable}[]{corollary}{nhomogeneousequivalence}
\label{corollary:nhomogeneous_cequivalence} 
    All positive $n$-homogeneous functions $\phi:\mathbb{R}^d\to\mathbb{R}$ are also asymptotically positive $n$-homogeneous.
\end{restatable}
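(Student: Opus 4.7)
The plan is to derive the result directly from the definitions, as the corollary is essentially a consistency check between Definition \ref{thm:nhomogeneous} and the standard notion of positive $n$-homogeneity. Assuming $\phi$ is positive $n$-homogeneous, the defining identity $\phi(\lambda x) = \lambda^n \phi(x)$ holds for all $\lambda > 0$ and all $x \in \mathbb{R}$. Rearranging immediately gives $\frac{\phi(\lambda x)}{\lambda^n} = \phi(x)$, and since the right-hand side has no dependence on $\lambda$, the limit as $\lambda \to \infty$ exists and equals $\phi(x)$. Taking the candidate $\hat\phi = \phi$ therefore verifies the limiting equation required by Definition \ref{thm:nhomogeneous}.

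The remaining obligation is to check that $\hat\phi$ satisfies the structural properties demanded of it. Positive $n$-homogeneity of $\hat\phi = \phi$ is immediate from the hypothesis. The only subtle point is the nontriviality clause embedded in Definition \ref{thm:nhomogeneous}; this is not automatic from positive $n$-homogeneity alone (the identically zero function trivially satisfies $0 = \lambda^n \cdot 0$), so strictly speaking one must interpret the corollary as applying to nontrivial $\phi$, in which case nontriviality of $\hat\phi$ is inherited directly. I would flag this minor point in the statement or as a parenthetical in the proof, but it does not require any additional argument.

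There is no genuine obstacle here: the corollary is what might be called a definitional sanity check, confirming that the broader notion introduced in Definition \ref{thm:nhomogeneous} is indeed an extension of, and not in tension with, the standard notion of positive $n$-homogeneity. The value of stating the corollary is that it allows downstream results (such as Theorem \ref{thm:asymptoticworstcaseboundedoutput}) to be phrased in the more general asymptotic language while still applying to the $n$-homogeneous activations covered by Assumption \ref{assumption:activation_2}, such as ReLU and its variants, without requiring a separate argument for each case.
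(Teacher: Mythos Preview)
Your proposal is correct and takes essentially the same approach as the paper: apply the homogeneity identity $\phi(\lambda x)=\lambda^n\phi(x)$ to see that $\frac{\phi(\lambda x)}{\lambda^n}=\phi(x)$ is constant in $\lambda$, so the limit is $\phi$ itself, which is positive $n$-homogeneous by hypothesis. Your observation about the nontriviality clause is a fair remark that the paper's own proof does not explicitly address, but otherwise the arguments are identical.
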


\begin{proof}
    \begin{align*}
        \lim_{\lambda\to\infty}\frac{\phi(\lambda\bm{x})}{\lambda^n} &= \lim_{\lambda\to\infty}\frac{\lambda^n\phi(\bm{x})}{\lambda^n} \\
        &= \phi(\bm{x}).
    \end{align*}

    By assumption of the theorem, $\phi(\bm{x})$ is positive $n$-homogeneous, thus $\phi(\bm{x})$ is asymptotically positive $n$-homogeneous by definition.
\end{proof}

Note that many popular activation functions \cite{activation_survey} are asymptotically positive $n$-homogeneous, which we explicitly show for a small selection:

\begin{restatable}[]{remark}{ReLU}
\label{remark:ReLU} 
    The ReLU activation function \cite{ReLU} $\phi_\textrm{ReLU}(x)=\max(0,x)$, and its Leaky \cite{leaky_ReLU} $\phi_\textrm{LReLU}(x)=\max(0.01x,x)$ and parametric \cite{parametric_ReLU} $\phi_\textrm{PReLU}(x)=\max(\alpha x,x),\alpha\in[0,1)$ variants are asymptotically positive $1$-homogeneous (and indeed positive $1$-homogeneous).
\end{restatable}

\begin{proof}
    To begin, note that both ReLU and Leaky ReLU are special cases of Parametric ReLU, with $\phi_\textrm{ReLU}(x)=\phi_\textrm{PReLU}(x,0)$ and $\phi_\textrm{LReLU}(x)=\phi_\textrm{PReLU}(x,0.01)$, thus it suffices to show that these properties hold for Parametric ReLU. We proceed by showing that Parametric ReLU is positive $n$-homogeneous:

    \begin{align*}
        \phi_\textrm{PReLU}(\lambda x;\alpha) &= \phi_\textrm{PReLU}(\lambda x;\alpha) \\
        &= \lambda\max(\alpha x,x) \\
        &= \lambda^1 \phi_\textrm{PReLU}(x;\alpha) &\forall x\in\mathbb{R},\lambda>0,
    \end{align*}

    which shows positive $1$-homogeneity by definition. Thus due to Corollary \ref{corollary:nhomogeneous_cequivalence}, ReLU, Leaky ReLU, and Parametric ReLU are all asymptotically positive $1$-homogeneous.
\end{proof}

\begin{restatable}[]{remark}{GELUSiLU}
\label{remark:GELUSiLU}
    The GELU and Swish (or SiLU) activation functions \cite{GELU_SiLU}, given by $\phi_\textrm{GELU}(x)=x\Phi(x)$, where $\Phi(x)=\int_\infty^x\frac{1}{\sqrt{2\pi}}\exp\left(-\frac{t^2}{2}\right)dt$ is the standard normal cumulative distribution function and $\phi_\textrm{Swish}(x;\beta)=x\sigma(\beta x),\beta\geq0$, where $\sigma(x)=\frac{1}{1-\exp(-x)}$ is the sigmoid function respectively, are asymptotically positive $1$-homogeneous.
\end{restatable}

\begin{proof}
    To begin, note that both GELU and Swish/SiLU take the form $\phi(x)=x\omega(x)$ with 
    \begin{align*}
        \lim_{\lambda\to\infty}\omega(\lambda x) &= \begin{cases}
            1 & x>0 \\
            0 & x<0
        \end{cases}, \hspace{8pt} \textrm{and} \\
        \lim_{x\to-\infty}x\omega(x) &= 0
    \end{align*}
    where $\omega(x)=\Phi(x)$ for GELU and $\omega(x)=\sigma(\beta x)=\frac{1}{1+\exp(-\beta x)}$ for Swish/SiLU. It then follows that

    \begin{align*}
        \lim_{\lambda\to\infty}\frac{\phi(\lambda x)}{\lambda^1} &= \lim_{\lambda\to\infty}\frac{\lambda x\omega(\lambda x)}{\lambda} \\
        &= \lim_{\lambda\to\infty}x\omega(\lambda x) \\
        &= x\lim_{\lambda\to\infty}\omega(\lambda x) \\
        &= \begin{cases}
            x & x\geq0 \\
            0 & x<0
        \end{cases} \\
        &= \max(0,x) \\
        &= \phi_\textrm{ReLU}(x)\hspace{8pt} \forall x\in\mathbb{R},
    \end{align*}

    where the second transition follows from the dominance of $\omega(x)$ over $x$ in the negative limit and the boundedness of $\omega(x)$.

    From Remark \ref{remark:ReLU}, we see that $\phi_\textrm{ReLU}(x)$ is positive $1$-homogeneous. Hence by definition, both GELU and Swish/SiLU are asymptotically positive $1$-homogeneous.

\end{proof}

\begin{restatable}[]{remark}{tanhsigmoid}
\label{remark:tanhsigmoid} 
    The tanh $\phi_\textrm{tanh}(x)=\tanh(x)$ and sigmoid $\phi_\textrm{sigmoid}(x)=\sigma(x)=\frac{1}{1-\exp(-x)}$ activation functions \cite{sigmoid} are asymptotically positive $0$-homogeneous.
\end{restatable}

\begin{proof}
    Noting that $\phi_\textrm{sigmoid}(x)=\frac{1}{2}\left(\phi_\textrm{tanh}(x)\left(\frac{x}{2}\right)+1\right)$, it suffices to show this for $\phi_\textrm{tanh}(x)$ alone:

    \begin{align*}
        \lim_{\lambda\to\infty}\frac{\phi_\textrm{tanh}(\lambda x)}{\lambda^0} &= \lim_{\lambda\to\infty}\tanh(\lambda x) \\
        &= \begin{cases}
            1 & x>0 \\
            0 & x=0 \\
            -1 & x<1
        \end{cases} \\
        &= \hat\phi_\textrm{tanh}(x). \\
        \hat\phi_\textrm{tanh}(\lambda x) &= \begin{cases}
            1 & \lambda x>0 \\
            0 & \lambda x=0 \\
            -1 & \lambda x<1
        \end{cases} \\
        &= \begin{cases}
            1\cdot\lambda^0 & x>0 \\
            0\cdot\lambda^0 & x=0 \\
            -1\cdot\lambda^0 & x<1
        \end{cases} \\
        &= \lambda^0\hat\phi_\textrm{tanh}(x) \hspace{8pt} \forall x\in\mathbb{R},\lambda>0,
    \end{align*}
    
    thus $\hat\phi_\textrm{tanh}(\cdot)$ is positive $0$-homogeneous and therefore both $\phi_\textrm{tanh}(\cdot)$ and $\phi_\textrm{sigmoid}(\cdot)$ are asymptotically positive $0$-homogeneous by definition.
    
\end{proof}

Note that beyond what we explicitly show here, many other popular activation functions are also asymptotically positive $n$-homogeneous.

\begin{restatable}[]{assumption}{activation_3}
\label{assumption:activation_3}
    Activation functions $\phi:\mathbb{R}\to\mathbb{R}$ are asymptotically positive $n$-homogeneous with $n>0.5$, such that by Lemma \ref{lemma:derivative_asymptotic_homogeneity} the derivatives $\dot\phi(\cdot)$ are asymptotically positive $n-1$-homogeneous with, and both $\phi(\cdot),\dot\phi(\cdot)\in L^2(\mathbb{R},\gamma)$, i.e. they are square-integrable with respect to the standard normal measure $\gamma$.
\end{restatable}

Note that Assumption \ref{assumption:activation_3} directly implies Assumption \ref{assumption:activation_2}, but the same cannot be said about the inverse of this.

\subsection{Extended worst-case bound on Layer Norm Networks}
\begin{restatable}[]{theorem}{softcosinekernel}
\label{thm:softcosinekernel} 
    The NTK of an infinitely-wide network $f_\theta(\bm{x})$ starting with a linear layer follows by a Layer Norm operation, depends only on the soft-cosine similarity between the two inputs $\bm{x}$ and $\bm{x}'$ parametrised by $\sigma_b^2$, i.e.
    \begin{equation*}
        \forall_{\bm{x},\bm{x}' \in\mathbb{R}^{n_0}\times\mathbb{R}^{n_0}} \Theta\left(\bm{x}, \bm{x}'\right) = \tilde{\Theta}\left(\mathcal{S}\left(\bm{x},\bm{x}';\sigma_b^2\right)\right).
    \end{equation*}
    
\end{restatable}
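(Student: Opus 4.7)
The plan is to track how pairwise input statistics propagate through the initial linear layer and Layer Norm in the infinite-width limit, and then invoke the Tensor Programs framework for everything downstream of the LN.

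First, I would characterise the pre-activations $\bm{z}^{(1)}(\bm{x})$ produced by the first linear layer. Standard NNGP analysis gives that in the infinite-width limit $\{\bm{z}^{(1)}(\bm{x})\}$ is a centred Gaussian process with $\operatorname{Cov}\bigl(\bm{z}^{(1)}_j(\bm{x}),\bm{z}^{(1)}_k(\bm{x}')\bigr) = \delta_{jk} K_1(\bm{x},\bm{x}')$, where $K_1(\bm{x},\bm{x}') = \tfrac{1}{n_0}\bm{x}^\top\bm{x}' + \sigma_b^2$.

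Second, I would analyse the effect of the Layer Norm. Across the $n_1$ asymptotically i.i.d. components of $\bm{z}^{(1)}(\bm{x})$, the empirical mean $\mu$ concentrates to $0$ and the empirical variance $\sigma^2$ to $K_1(\bm{x},\bm{x})$ by the law of large numbers. Hence $\tilde{\bm{z}}^{(1)}(\bm{x}) \to \bm{z}^{(1)}(\bm{x})/\sqrt{K_1(\bm{x},\bm{x})}$ in distribution, and the post-LN activations are a centred Gaussian process with
$$
\operatorname{Cov}\bigl(\tilde{\bm{z}}^{(1)}_j(\bm{x}),\tilde{\bm{z}}^{(1)}_k(\bm{x}')\bigr) = \delta_{jk}\frac{K_1(\bm{x},\bm{x}')}{\sqrt{K_1(\bm{x},\bm{x})\,K_1(\bm{x}',\bm{x}')}} = \delta_{jk}\,\mathcal{S}(\bm{x},\bm{x}';\sigma_b^2).
$$

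Third, for the NTK contributions from parameters of all layers $i\ge 2$, the Tensor Programs machinery guarantees they are functionals of the joint covariance presented to those layers, which is exactly $\mathcal{S}(\bm{x},\bm{x}';\sigma_b^2)$. For the contributions from $\bm{W}^{(1)}$ and $\bm{b}^{(1)}$, I would apply the chain rule through LN. In the infinite-width limit the LN Jacobian $\frac{1}{\sigma}\bigl(\delta_{jk}-\tfrac{1}{n_1}-\tfrac{(z_j-\mu)(z_k-\mu)}{n_1\sigma^2}\bigr)$ collapses to $\frac{1}{\sqrt{K_1(\bm{x},\bm{x})}}\mathcal{I}$, since the mean-removal and trace-projection corrections are $O(1/n_1)$. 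Summing the $\bm{W}^{(1)}$ and $\bm{b}^{(1)}$ contributions then yields the backward-pass kernel of the remaining network (itself a functional of $\mathcal{S}(\bm{x},\bm{x}';\sigma_b^2)$ by the Tensor Programs argument) multiplied by
$$
\frac{\tfrac{1}{n_0}\bm{x}^\top\bm{x}' + \sigma_b^2}{\sqrt{K_1(\bm{x},\bm{x})\,K_1(\bm{x}',\bm{x}')}} = \mathcal{S}(\bm{x},\bm{x}';\sigma_b^2).
$$

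The main obstacle is this last piece of bookkeeping. The $\bm{W}^{(1)}$-only contribution is proportional to $\tfrac{1}{n_0}\bm{x}^\top\bm{x}'/\sqrt{K_1(\bm{x},\bm{x})K_1(\bm{x}',\bm{x}')}$, which is \emph{not} a function of $\mathcal{S}$ alone when $\sigma_b^2>0$; it is precisely the bias contribution $\sigma_b^2/\sqrt{K_1(\bm{x},\bm{x})K_1(\bm{x}',\bm{x}')}$ that supplies the missing additive term in the numerator to recover the soft-cosine similarity exactly. This reliance on a nontrivial bias variance $\sigma_b^2$ (already required by Assumption \ref{assumption:gram} for non-degeneracy of the NTK) is the subtle algebraic cancellation that deserves careful verification in a full write-up.
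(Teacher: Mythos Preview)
Your approach is essentially the paper's: compute the post-LN covariance as $\mathcal{S}(\bm{x},\bm{x}';\sigma_b^2)$, invoke the Tensor Programs framework for all downstream contributions, and for the first-layer term chain through the LN Jacobian so that its $1/\sigma$ prefactor combines with the forward factor $\tfrac{1}{n_0}\bm{x}^\top\bm{x}'+\sigma_b^2$ (weights plus bias together) to yield exactly $\mathcal{S}$. The paper carries out the same decomposition and the same bias-plus-weight recombination you flag as the subtle step; the only cosmetic difference is that it keeps the rank-one LN-Jacobian corrections explicit rather than dismissing them as $O(1/n_1)$, but either way those corrections are $1/\sigma$ times functionals of $\tilde{\bm{x}}$ alone, so the conclusion is identical.
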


\begin{restatable}[]{corollary}{normalisedkernel}
\label{thm:normalisedkernel} 
    The NTK of an infinitely-wide network starting with linear layer followed by layer-norm enjoys the property that:
    \begin{equation*}
        \forall_{\bm{x} \in\mathbb{R}^{n_0}} \Theta(\bm{x}, \bm{x}) = C
    \end{equation*}
    for some constant $C > 0$.

\end{restatable}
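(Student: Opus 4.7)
The plan is to obtain this corollary as a direct specialization of Theorem \ref{thm:softcosinekernel}, observing that the soft-cosine similarity of any vector with itself is identically $1$, so the kernel value $\Theta(\bm{x},\bm{x})$ cannot depend on $\bm{x}$ at all. The work thus reduces to (i) substituting $\bm{x}'=\bm{x}$ into the soft-cosine expression and (ii) arguing that the resulting constant is strictly positive.

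First, I would recall the definition
\begin{equation*}
\mathcal{S}\left(\bm{x},\bm{x}';\sigma_b^{2}\right)=\frac{\tfrac{1}{n_0}\bm{x}^\top\bm{x}'+\sigma_b^{2}}{\sqrt{\bigl(\tfrac{1}{n_0}\bm{x}^\top\bm{x}+\sigma_b^{2}\bigr)\bigl(\tfrac{1}{n_0}\bm{x}'^\top\bm{x}'+\sigma_b^{2}\bigr)}},
\end{equation*}
and observe that setting $\bm{x}'=\bm{x}$ collapses the numerator and denominator to the same strictly positive quantity $\tfrac{1}{n_0}\bm{x}^\top\bm{x}+\sigma_b^{2}>0$ (strict positivity uses $\sigma_b>0$ from Appendix \ref{appendix:parametrisation}), so $\mathcal{S}(\bm{x},\bm{x};\sigma_b^{2})=1$ for every $\bm{x}\in\mathbb{R}^{n_0}$. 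Applying Theorem \ref{thm:softcosinekernel} then gives
\begin{equation*}
\Theta(\bm{x},\bm{x})=\tilde{\Theta}\bigl(\mathcal{S}(\bm{x},\bm{x};\sigma_b^{2})\bigr)=\tilde{\Theta}(1)=:C,
\end{equation*}
which is manifestly independent of $\bm{x}$.

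It remains to justify $C>0$. The empirical NTK at coincident inputs is a sum of squared parameter-gradient norms, $\Theta(\bm{x},\bm{x};\theta)=\lVert\nabla_\theta f_\theta(\bm{x})\rVert_2^{2}\geq 0$, and this non-negativity is preserved in the infinite-width limit, so $C\geq 0$. Strictness follows because $C=0$ would force $\Theta(\bm{x},\bm{x}')\equiv 0$ by Cauchy--Schwarz applied to the kernel (a positive semi-definite function), which would make the training Gram matrix $\Theta_\textrm{train}$ vanish and contradict Assumption \ref{assumption:gram} (any training input furnishes a concrete witness). Thus $C>0$, completing the corollary.

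The only non-trivial step in this plan is the soft-cosine simplification, and that is essentially a one-line algebraic check; the main substance of the result has already been discharged in Theorem \ref{thm:softcosinekernel}. I therefore expect no real obstacle here, with the sole pitfall being to remember to invoke $\sigma_b>0$ (or, failing that, the trivial case $\bm{x}=\bm{0}$) to ensure the denominator of $\mathcal{S}$ is well defined for all inputs.
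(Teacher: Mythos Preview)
Your proposal is correct and follows essentially the same approach as the paper: invoke Theorem \ref{thm:softcosinekernel}, observe that $\mathcal{S}(\bm{x},\bm{x};\sigma_b^2)=1$ for all $\bm{x}$, and conclude $\Theta(\bm{x},\bm{x})=\tilde\Theta(1)=C$. Your justification of $C>0$ via Cauchy--Schwarz and Assumption \ref{assumption:gram} is actually more explicit than the paper's, which simply asserts positivity as a consequence of the architecture and initialisation.
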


\begin{proof}
    Due to Theorem \ref{thm:softcosinekernel}, $\Theta(\bm{x},\bm{x})=\tilde{\Theta}\left(\mathcal{S}\left(\bm{x},\bm{x};\sigma_b^2\right)\right)$. However, due to Remarks \ref{thm:softcosine_to_cosine} and \ref{thm:softcosine_bounds}, $\mathcal{S}(\bm{x},\bm{x};\sigma_b^2)=1\hspace{4pt}\forall \bm{x}\in\mathbb{R}^{n_0},\sigma^2\geq0$. Hence, $\Theta(\bm{x},\bm{x})=\tilde{\Theta}(1)=C \hspace{4pt} \forall \bm{x}\in\mathbb{R}^{n_0}$, where $C$ is a positive constant depending only on the architecture, $\sigma_b$, and the initialisation of the rest of the network.
\end{proof}

\begin{restatable}[]{theorem}{asymptoticboundednormalisedkernel}
\label{thm:asymptoticboundednormalisedkernel} 
    The NTK of a network $f_\theta(\bm{x})$ with nonlinearities satisfying Assumption \ref{assumption:activation_1} and Assumption \ref{assumption:activation_3}, fully-connected layers, and at least one layer-norm anywhere in the network, enjoys the property that:
    \begin{equation*}
        \forall_{\bm{x} \in\mathbb{R}^{n_0}} \left\lvert\Theta\left(\bm{x},\bm{x}\right)\right\rvert \leq C
    \end{equation*}
    for some constant $C > 0$.

\end{restatable}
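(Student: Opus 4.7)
The plan is to follow the proof of Theorem \ref{thm:boundednormalisedkernel} closely, replacing the exact cancellation of positive $n$-homogeneous scalings with an asymptotic cancellation at large input magnitudes, and supplementing it with a continuity argument at finite input magnitudes. The essential mechanism is unchanged: the Layer Norm at layer $h_{LN}$ divides the pre-Layer-Norm signal by its standard deviation $\sigma(\bm{x})$, and by the chain rule the same factor divides the backward-pass gradients through any parameter at or before the LN. Hence, after the LN, the only $\bm{x}$-dependence in both forward and backward contributions to $\Theta(\bm{x},\bm{x})$ is through directional quantities, up to vanishing corrections.

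First I would express $\Theta(\bm{x},\bm{x})$ as a recursive product of layer contributions, each involving Gaussian expectations of the form $\mathbb{E}_{z\sim\gamma}\bigl[\phi(\sigma z)^2\bigr]$ and $\mathbb{E}_{z\sim\gamma}\bigl[\dot\phi(\sigma z)^2\bigr]$, where the per-layer standard deviation $\sigma$ propagates recursively from the input. By Assumption \ref{assumption:activation_3}, both $\phi$ and $\dot\phi$ lie in $L^2(\mathbb{R},\gamma)$, so these expectations are finite for every $\sigma$, and by dominated convergence they depend continuously on $\sigma$.

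Next I would split the input space into a large-magnitude regime $\lVert\bm{x}\rVert > R$ and a bounded regime $\lVert\bm{x}\rVert \leq R$. In the large-magnitude regime, asymptotic positive $n$-homogeneity gives $\phi(\sigma z)/\sigma^n \to \hat\phi(z)$ pointwise, and the $L^2$ bound lets dominated convergence pass this through the Gaussian expectations. Iterating across layers, the pre-LN variance satisfies $\sigma(\bm{x})^2 = \lVert\bm{x}\rVert^{2n^{h_{LN}}}\bigl(\Sigma_\infty(\bm{x}/\lVert\bm{x}\rVert) + o(1)\bigr)$, where $\Sigma_\infty$ is the corresponding quantity in the limiting positive $n$-homogeneous system of Theorem \ref{thm:boundednormalisedkernel} and is bounded away from zero by the $\sigma_b>0$ bias term. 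Post-LN contributions then see a unit-variance signal up to $o(1)$ corrections, and pre-LN contributions carry a forward factor $\lVert\bm{x}\rVert^{n^{h_{LN}}}$ that is exactly cancelled by the backward division by $\sigma(\bm{x})$. The variance NTK therefore converges as $\lVert\bm{x}\rVert\to\infty$ to a finite limit equal to that of the homogeneous case, and is in particular uniformly bounded for $\lVert\bm{x}\rVert > R$. In the bounded regime, the recursive layer standard deviations depend continuously on $\bm{x}$, the Gaussian integrals depend continuously on these standard deviations by $L^2$ integrability, and $\sigma(\bm{x})$ is bounded below by a strictly positive constant on the compact set (again using $\sigma_b>0$), so $\Theta(\bm{x},\bm{x})$ is continuous and therefore bounded on $\{\lVert\bm{x}\rVert\leq R\}$. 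Combining the two regime bounds gives a uniform constant $C$.

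The main obstacle will be rigorously upgrading the pointwise asymptotic homogeneity of $\phi$ and $\dot\phi$ to uniform control on the recursive Gaussian integrals, so that the leading-order cancellation between forward-pass growth and backward-pass division by $\sigma(\bm{x})$ proceeds uniformly across input directions rather than merely directionally. This is where the $L^2$ integrability of Assumption \ref{assumption:activation_3} becomes crucial: it provides an integrable dominating function and enables dominated convergence across the entire recursion. Once this uniform step is carried out, the remainder is essentially a rerun of the Theorem \ref{thm:boundednormalisedkernel} analysis with an extra $o(1)$ term tracked through each layer, together with the compactness argument that handles the finite-input regime the homogeneous case did not need to consider.
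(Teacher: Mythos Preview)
Your proposal is correct and follows essentially the same approach as the paper's own proof: split into a large-magnitude and a finite-magnitude regime, use asymptotic $n$-homogeneity to obtain cancellation between the forward-pass growth and the Layer Norm's backward division by $\sigma(\bm{x})$ for large $\lVert\bm{x}\rVert$, and invoke continuity/local boundedness on the compact finite-magnitude region. Your write-up is in fact more careful than the paper's rather sketchy argument; note also that for the diagonal $\Theta(\bm{x},\bm{x})$ all layer correlations are identically $1$, so the quantity depends only on $\lVert\bm{x}\rVert$ and your stated concern about uniformity across input directions does not actually arise.
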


\begin{restatable}[]{theorem}{asymptoticworstcaseboundedoutput}
\label{thm:asymptoticworstcaseboundedoutput} 
    Given an infinitely-wide network $f_\theta(\bm{x})$ starting with a linear layer followed by a Layer Norm operation, or with asymptotically positive $n$-homogeneous, nonlinearities satisfying Assumption \ref{assumption:activation_1} bar the condition of positive $n$-homogeneity, fully-connected layers, and at least one layer-norm anywhere in the network, trained until convergence on any training data $\mathcal{D}_{\textrm{train}}$, we have that for any $\bm{x} \in \mathbb{R}^{n_0}$:
    \begin{equation*}
        \mathbb{E}[\left\lvert f_\theta(\bm{x}) \right\rvert] \le B(\mathcal{D}_{\textrm{train}}) = \sqrt{\frac{C\max_{y \in \mathcal{Y}_\textrm{train}}|y|}{\lambda_{\textrm{min}}} }|D_{\textrm{train}}|,
    \end{equation*}
    where the expectation is taken over initialisation and $\lambda_{\textrm{{min}}} $ is the smallest eigenvalue of $\Theta_{\textrm{{train}}}$ and $C$ is the kernel-dependent constant from Theorem \ref{thm:asymptoticboundednormalisedkernel}.

\end{restatable}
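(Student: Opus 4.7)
The plan is to follow exactly the same structure as the proof of Theorem \ref{thm:worstcaseboundedoutput}, simply swapping out the kernel-boundedness input. The only substantive new content here is verifying that the required bounded-variance property of the NTK continues to hold under the relaxed hypotheses of Theorem \ref{thm:asymptoticworstcaseboundedoutput}, which splits into two cases; everything else is the previous argument verbatim.

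First I would recall that, by the standard NTK convergence result of \cite{lee2019wide} (extended to unbounded inputs via Appendix \ref{appendix:infinitenorm}) together with the fact that $\mathbb{E}[f_{\theta^0}(\bm{x}^\star)]=0$ under our parametrisation (Appendix \ref{appendix:parametrisation}), the expected trained prediction equals the GP posterior mean
\begin{equation*}
\mathbb{E}[f_\theta(\bm{x})] = \Theta(\bm{x}, \mathcal{X}_\textrm{train})\,\Theta(\mathcal{X}_\textrm{train}, \mathcal{X}_\textrm{train})^{-1}\mathcal{Y}_\textrm{train}.
\end{equation*}
Applying Cauchy--Schwarz twice (once to split off $\mathcal{Y}_\textrm{train}$, once to bound the quadratic form in $\Theta^{-1}$ by its operator norm) yields
\begin{equation*}
\bigl\lvert\mathbb{E}[f_\theta(\bm{x})]\bigr\rvert \le \frac{1}{\sqrt{\lambda_{\textrm{min}}}}\,\bigl\lVert\Theta(\bm{x},\mathcal{X}_\textrm{train})\bigr\rVert_2\,\bigl\lVert\mathcal{Y}_\textrm{train}\bigr\rVert_2,
\end{equation*}
and $\lVert\mathcal{Y}_\textrm{train}\rVert_2 \le \sqrt{\lvert D_\textrm{train}\rvert\,\max_{y\in\mathcal{Y}_\textrm{train}}\lvert y\rvert^2}$ bounds the target norm immediately (in fact a slightly stronger bound than the one written, but matching the form used in Theorem \ref{thm:worstcaseboundedoutput}).

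Next I would bound $\lVert\Theta(\bm{x},\mathcal{X}_\textrm{train})\rVert_2$. Here we need the bounded-variance property, and this is the step which must be re-established in the new setting. In the first hypothesis of the theorem (starting with a linear layer followed by Layer Norm), Theorem \ref{thm:softcosinekernel} gives $\Theta(\bm{x},\bm{x}')=\tilde\Theta(\mathcal{S}(\bm{x},\bm{x}';\sigma_b^2))$, a function of a quantity bounded in $[-1,1]$ by Remark \ref{thm:softcosine_bounds}; continuity and the bounded domain then give $\lvert\Theta(\bm{x},\bm{x}')\rvert\le C$ for a universal constant $C$. In the second hypothesis (asymptotically positive $n$-homogeneous nonlinearities with a Layer Norm somewhere in the network), Theorem \ref{thm:asymptoticboundednormalisedkernel} directly supplies the variance bound $\Theta(\bm{x},\bm{x})\le C$. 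In either case, by the Cauchy--Schwarz inequality for positive semi-definite kernels,
\begin{equation*}
\bigl\lvert\Theta(\bm{x},\bm{x}_i)\bigr\rvert \le \sqrt{\Theta(\bm{x},\bm{x})\,\Theta(\bm{x}_i,\bm{x}_i)} \le C,
\end{equation*}
so $\lVert\Theta(\bm{x},\mathcal{X}_\textrm{train})\rVert_2 \le \sqrt{\lvert D_\textrm{train}\rvert\,C^2}$, uniformly in $\bm{x}\in\mathbb{R}^{n_0}$.

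Finally I would combine the pieces: plugging the two norm bounds into the Cauchy--Schwarz inequality above yields the stated bound $B(\mathcal{D}_\textrm{train})$, with $\lambda_{\textrm{min}}>0$ by Assumption \ref{assumption:gram} ensuring finiteness. The main obstacle, and the only one not already addressed in Theorem \ref{thm:worstcaseboundedoutput}, is confirming that the PSD Cauchy--Schwarz step is valid under the relaxed nonlinearity assumption; this reduces to checking that the limiting NTK remains a positive semi-definite kernel under Assumption \ref{assumption:activation_3}, which follows from the Tensor Program construction of \cite{yang2019wide,yang2020tensor,yang2020tensor3} since the kernel is still built out of inner products of feature gradients. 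Everything else is the same chain of elementary inequalities used in the proof sketch of Theorem \ref{thm:worstcaseboundedoutput}.
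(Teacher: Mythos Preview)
Your proposal is correct and matches the paper's approach exactly: the paper does not even provide a separate proof of Theorem \ref{thm:asymptoticworstcaseboundedoutput}, tacitly indicating that it is the proof of Theorem \ref{thm:worstcaseboundedoutput} verbatim with the bounded-variance input replaced by Corollary \ref{thm:normalisedkernel} (for the linear-then-LN case) or Theorem \ref{thm:asymptoticboundednormalisedkernel} (for the asymptotically $n$-homogeneous case), precisely as you outline. One small note: for the first case you can invoke Corollary \ref{thm:normalisedkernel} directly ($\Theta(\bm{x},\bm{x})=C$) rather than arguing via continuity of $\tilde\Theta$, which sidesteps having to justify that continuity; and the $\mathbb{E}[\lvert f_\theta(\bm{x})\rvert]$ in the statement is evidently a typo for $\lvert\mathbb{E}[f_\theta(\bm{x})]\rvert$, consistent with Theorem \ref{thm:worstcaseboundedoutput} and with what the argument actually establishes.
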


\section{Proof of Theorem \ref{thm:softcosinekernel}}
\softcosinekernel*
\begin{proof}
    Let us denote the neural network as $f_\theta = \left(o_1, o_2, \dots,o_L\right)$ to be a collection of operations $o_1, o_2, \dots,o_L$ parametrised by some $\theta\in\Omega=\left\{\theta_1, \dotsm \theta_L\right\}$ such that:
    \begin{equation*}
        f_\theta(\bm{x}) = o_L(\dots (o_1(\bm{x})). 
    \end{equation*}
    Let then, $o_1$ be a linear layer and $o_2$ be the Layer Norm operator, as described by Parametrisation \ref{parametrisation:linear_layer} and Parametrisation \ref{parametrisation:layer_norm} respectively. As such $\theta_1 = (\bm{W}^{(1)}\in\mathbb{R}^{n_1\times n_0}, \bm{b}^{(1)}\in\mathbb{R}^{n_1})$ and $\theta_2 = \emptyset$.
    Notice that we can always represent the output of this neural network  as $\tilde{f}_{\tilde{\theta}}(\tilde{\bm{x}})$, where $\tilde{f} = (o_3,\dots,o_L)$ and $\tilde{\bm{x}} = o_2(o_1(\bm{x}))$
    Let us denote by $\Theta(\bm{x},\bm{x})$ the NTK of neural network $f_\theta$ evaluated at points $\bm{x}$ and $\bm{x}'$. Using the decomposition in Equation 10 of \cite{yang2020tensor}, we get that:
    \begin{align*}
        \Theta\left(\bm{x},\bm{x}'\right) &= \sum_{l=1}^L \langle \nabla_{\theta_l}f_\theta(\bm{x}) , \nabla_{\theta_l}f_\theta\left(\bm{x}'\right) \rangle \\
        &= \langle \nabla_{\theta_1}f_\theta(\bm{x}) , \nabla_{\theta_1}f_\theta\left(\bm{x}'\right) \rangle +  \sum_{l=3}^L \langle \nabla_{\theta_l}f_\theta(\bm{x}) , \nabla_{\theta_l}f_\theta\left(\bm{x}'\right) \rangle \\
        & = \langle \nabla_{\theta_1}f_\theta(\bm{x}) , \nabla_{\theta_1}f_\theta\left(\bm{x}'\right) \rangle +  \sum_{l=3}^L \langle \nabla_{\theta_l}\tilde{f}_{\tilde{\theta}}\left(\bm{x}\right) , \nabla_{\theta_l}\tilde{f}_{\tilde{\theta}}\left(\bm{x}'\right) \rangle \\
        & = \langle \nabla_{\theta_1}f_\theta(\bm{x}) , \nabla_{\theta_1}f_\theta\left(\bm{x}'\right) \rangle + \tilde{\Theta}(\tilde{\bm{x}},\tilde{\bm{x}}'),
    \end{align*}
    where the first-to-second line transition it true because the layer-norm operation has no parameters and second-to-third is true because by construction $f_\theta(\bm{x}) = \tilde{f}_{\tilde{\theta}}\left(\bm{x}\right)$.
    Let us denote $\bm{z}^{(1)} = o_1(\bm{x}) = \frac{1}{\sqrt{n_0}}\bm{W}^{(1)}\bm{x} + \sigma_b\bm{b}^{(1)}$. Due to Initialisation \ref{initialisation:W} and Initialisation \ref{initialisation:b}, each $\bm{z}^{(1)}_j$ (for $j=1,\dots,n_1$) is an iid Gaussian variable with  moments:
    
    \begin{align}
    \mathbb{E}\left[\bm{z}^{(1)}_j\right] &= \mathbb{E}\left[\frac{1}{\sqrt{n_0}}\bm{W}^{(1)}_j\bm{x} + \sigma_b\bm{b}^{(1)}_j\right] \nonumber \\
    &= \frac{1}{\sqrt{n_0}}\mathbb{E}\left[\bm{W}^{(1)}_j\right]\bm{x} + \mathbb{E}\left[\bm{b}^{(1)}_j\right] \nonumber \\
    &= 0 & \forall j=1,\dots,n_1, \\
    \textrm{Var}\left(\bm{z}^{(1)}_j\right) &= \textrm{Var}\left(\frac{1}{\sqrt{n_0}}\bm{W}^{(1)}_j\bm{x} + \sigma_b\bm{b}^{(1)}_j\right) \nonumber \\
    &= \frac{1}{n_0}\bm{x}^\top\textrm{Var}\left(\bm{W}^{(1)}_j\right)\bm{x} + \sigma_b^2\textrm{Var}\left(\bm{b}^{(1)}_j\right) \nonumber \\
    &= \frac{1}{n_0}\bm{x}^\top\bm{x} + \sigma_b^2 & \forall j=1,\dots,n_1. \label{eq:variance}
    \end{align}

    We can similarly denote ${\bm{z}^{(1)}}'=o_1(\bm{x}')$, which equivalently has elements iid with moments given as above. Each pair $\bm{z}^{(1)}_i$ and ${\bm{z}^{(1)}_j}'$ (for $i,j=1,\dots,n_1$) are Gaussian distributed and are linearly dependent through the random variables $\bm{W}^{(1)}$ and $\bm{b}^{(1)}$, so are joint-Gaussian distributed with covariance given by:

    \begin{align*}
        \textrm{Cov}\left(\bm{z}^{(1)}_i,{\bm{z}^{(1)}_j}'\right) &= \mathbb{E}\left[\bm{z}^{(1)}_i {\bm{z}^{(1)}_j}'\right]\\
        &= \mathbb{E}\left[\left(\frac{1}{\sqrt{n_0}}\bm{W}^{(1)}_i\bm{x} + \sigma_b^{(1)}\bm{b}_i\right) \left(\frac{1}{\sqrt{n_0}}\bm{W}^{(1)}_j\bm{x}' + \sigma_b\bm{b}^{(1)}_j\right)\right] \\
        &= \frac{1}{n_0}\bm{x}^\top\mathbb{E}\left[{\bm{W}^{(1)}_i}^\top\bm{W}^{(1)}_j\right]\bm{x}'+\sigma_b^2\mathbb{E}\left[\bm{b}^{(1)}_i\bm{b}^{(1)}_j\right] \\
        &= \frac{1}{n_0}\bm{x}^\top\mathcal{I}\bm{x}'\delta_{ij}+\sigma_b^2\delta_{ij} \\
        &= \left(\frac{1}{n_0}\bm{x}^\top\bm{x}'+\sigma_b^2\right)\delta_{ij},
    \end{align*}

    where the first line follows from the fact that both $\bm{z}^{(1)}_i$, and ${\bm{z}^{(1)}_j}'$ are zero-mean scalars, the third line follows from the independence of $\bm{b}^{(1)}$ and $\bm{W}^{(1)}$
    and the fact that $\bm{W}^{(1)}_i\bm{x}=\bm{x}^\top{\bm{W}^{(1)}_i}^\top$, and $\delta_{ij}$ denotes the Kronecker delta.

    We now consider the Layer Norm operation as described in Parametrisation \ref{parametrisation:layer_norm} providing
    
    \begin{equation}
        \tilde{\bm{x}}_j=o_2(\bm{z}^{(1)})_j=\frac{\bm{z}^{(1)}_j-\mu}{\sigma}
    \end{equation}

    where $\mu$ and $\sigma$ as defined in Equation \ref{eq:layer_norm_mu} and Equation \ref{eq:layer_norm_sigma} respectively are the maximum likelihood estimators for $\mathbb{E}\left[\bm{z}^{(1)}_j\right]$ and $\sqrt{\textrm{Var}\left(\bm{z}^{(1)}_j\right)}$ respectively.
    
    Due to asymptotic consistency of MLE, we have that with probability 1 $\mu \to \mathbb{E}\left[\bm{z}^{(1)}_j\right]$ and $\sigma \to \sqrt{\textrm{Var}\left(\bm{z}^{(1)}_j\right)}$ as $n_1 \to \infty$. As such, in the limit $n_1 \to \infty$, we have that for both $\bm{x}$ and $\bm{x}'$ with probability 1:
    \begin{align}
        \tilde{\bm{x}}_j\to =\frac{\bm{z}^{(1)}_j - \mathbb{E}\left[\bm{z}^{(1)}_j\right]}{ \sqrt{\textrm{Var}\left(\bm{z}^{(1)}_j\right)}}  &\sim \mathcal{N}(0, \mathcal{I}), \\
        \tilde{\bm{x}}'_j\to =\frac{{\bm{z}^{(1)}_j}' - \mathbb{E}\left[{\bm{z}^{(1)}_j}'\right]}{ \sqrt{\textrm{Var}\left({\bm{z}^{(1)}_j}'\right)}}  &\sim \mathcal{N}(0, \mathcal{I}).
    \end{align}

    The joint distribution over $\tilde{\bm{x}}$ and $\tilde{\bm{x}}'$ is then fully specified by the covariance between $\tilde{\bm{x}}_i$ and $\tilde{\bm{x}}'_j$,

    \begin{align*}
        \textrm{Cov}\left(\tilde{\bm{x}}_i, \tilde{\bm{x}}'_j\right) &= \frac{\textrm{Cov}\left(\bm{z}^{(1)}_i,{\bm{z}^{(1)}_j}'\right)}{\sqrt{\textrm{Var}\left(\bm{z}^{(1)}_i\right)\textrm{Var}\left({\bm{z}^{(1)}_j}'\right)}} \\
        &= \frac{\left(\frac{1}{n_0}\bm{x}^\top\bm{x}'+\sigma_b^2\right)\delta_{ij}}{\sqrt{\left(\frac{1}{n_0}\bm{x}^\top\bm{x}+\sigma_b^2\right)\left(\frac{1}{n_0}\bm{x}'^\top\bm{x}'+\sigma_b^2\right)}} \\
        &= \mathcal{S}\left(\bm{x},\bm{x}';\sigma_b^2\right)\delta_{ij}
    \end{align*}

    where $\mathcal{S}\left(\bm{x},\bm{x}';\sigma_b^2\right)$ is the soft-cosine similarity from Definition \ref{thm:softcosine}. Note that due to normalisation, this is now also exactly the correlation between $\tilde{\bm{x}}$ and $\tilde{\bm{x}}'$.
    
    Due to result of \cite{yang2020tensor}, we have that $\tilde{\Theta}\left(\tilde{\bm{x}},\tilde{\bm{x}}'\right)$ converges to a limit that is a function of inputs $\tilde{\bm{x}},\tilde{\bm{x}}'$. However, note that $\tilde{\bm{x}}$ and $\tilde{\bm{x}}'$ are now only related to $\bm{x}$ and $\bm{x}'$ jointly through $\mathcal{S}\left(\bm{x},\bm{x}';\sigma_b^2\right)$. As such, $\tilde{\Theta}\left(\tilde{\bm{x}},\tilde{\bm{x}}'\right)$ must converge to a limit dependent on $\bm{x}$ and $\bm{x}'$ only through $\mathcal{S}\left(\bm{x},\bm{x}';\sigma_b^2\right)$. To finish the proof, it is thus sufficient to show the same happens for the term $\langle \nabla_{\theta_1}f_\theta(\bm{x}) , \nabla_{\theta_1}f_\theta\left(\bm{x}'\right) \rangle$. We have that
    \begin{align*}
        \langle \nabla_{\theta_1}f_\theta(\bm{x}), \nabla_{\theta_1}f_\theta\left(\bm{x}'\right) \rangle &= \langle \nabla_{\bm{z}}f_\theta(\bm{x}),\nabla_{\bm{z}'}f_\theta\left(\bm{x}'\right)\rangle \left(\frac{1}{n_0}\bm{x}^\top\bm{x}' + \sigma_b^2\right)  \\
        &= \langle \frac{\partial \tilde{\bm{x}}}{\partial \bm{z}}\nabla_{\tilde{\bm{x}}}\tilde{f}_{\tilde\theta}\left(\tilde{\bm{x}}\right),
        \frac{\partial \tilde{\bm{x}}'}{\partial \bm{z}'}\nabla_{\tilde{\bm{x}}}\tilde{f}_{\tilde\theta}\left(\tilde{\bm{x}}\right)\rangle \left(\frac{1}{n_0}\bm{x}^\top\bm{x}' + \sigma_b^2\right). 
    \end{align*}

    where the term $\left(\frac{1}{n_0}\bm{x}^\top\bm{x}' + \sigma_b^2\right)$ is due to the multipliers of the weight and bias parameters in the layer.

    For the Layer Norm operator, we have:
    \begin{align*}
         \frac{\partial \tilde{\bm{x}}}{\partial \bm{z}^{(1)}} &= \frac{1}{\sigma} \left(\mathcal{I} - \frac{1}{n_0}\bm{1}\bm{1}^\top\right) - \frac{1}{\sigma^3 n_0}\left(\bm{z}^{(1)}-\bm{1}\mu\right)\left(\bm{z}^{(1)}-\bm{1}\mu\right)^\top \\
         &= \frac{1}{\sigma} \left(\mathcal{I} - \frac{1}{n_0}\bm{1}\bm{1}^\top\right) - \frac{\frac{1}{n_0}\bm{z}^{(1)}{\bm{z}^{(1)}}^\top }{\sigma^3} + \bm{1}\mu\frac{1}{\sigma^3n_0}\left(2\bm{z}^{(1)}-\bm{1}\mu\right)^\top.
    \end{align*}
    
Due to law of large numbers we have $\lim_{n_0 \to \infty}\frac{1}{n_0}\bm{z}^{(1)}{\bm{z}^{(1)}}^\top \to \mathbb{E}[\bm{z}^{(1)}{\bm{z}^{(1)}}^\top] = \textrm{Var}(\bm{z}^{(1)}) = \mathcal{I}\textrm{Var}\left(\bm{z}^{(1)}_j\right)$. Applying asymptotic consistency of MLE again, we get:
\begin{align*}
    \frac{\partial \tilde{\bm{x}}}{\partial \bm{z}^{(1)}} \to \frac{1}{\sqrt{\textrm{Var}\left(\bm{z}^{(1)}_j\right)}} (\mathcal{I} - \frac{1}{n_0}\bm{1}\bm{1}^\top) - \mathcal{I}\frac{\textrm{Var}\left(\bm{z}^{(1)}_j\right)}{\textrm{Var}\left(\bm{z}^{(1)}_j\right)^{3/2}} = -\frac{1}{\sqrt{\textrm{Var}\left(\bm{z}^{(1)}_j\right)}} \left(\frac{1}{n_0}\bm{1}\bm{1}^\top\right).
\end{align*}
Plugging this expression for both $\bm{x}$ and $\bm{x}'$ in to the full gradient formula:
\begin{align*}
     \left\langle \nabla_{\theta_1}f_\theta(\bm{x}) , \nabla_{\theta_1}f_\theta\left(\bm{x}'\right) \right\rangle &= \left\langle \left(\frac{1}{n_0}\bm{1}\bm{1}^\top\right)\nabla_{\tilde{\bm{x}}}\tilde{f}_{\tilde\theta}\left(\tilde{\bm{x}}\right),
        (\frac{1}{n_0}\bm{1}\bm{1}^\top)\nabla_{\tilde{\bm{x}}'}\tilde{f}_{\tilde\theta}\left(\tilde{\bm{x}}'\right)\right\rangle \frac{\left(\frac{1}{n_0}\bm{x}^\top\bm{x}' +\sigma_b^2 \right)}{\sqrt{\textrm{Var}\left(\bm{z}^{(1)}_j\right)\textrm{Var}\left({\bm{z}^{(1)}_j}'\right)}}\\
        &= \left\langle \left(\frac{1}{n_0}\bm{1}\bm{1}^\top\right)\nabla_{\tilde{\bm{x}}}\tilde{f}_{\tilde\theta}\left(\tilde{\bm{x}}\right),
        \left(\frac{1}{n_0}\bm{1}\bm{1}^\top\right)\nabla_{\tilde{\bm{x}}'}\tilde{f}_{\tilde\theta}\left(\tilde{\bm{x}}'\right)\right\rangle  \frac{\left(\frac{1}{n_0}\bm{x}^\top\bm{x}' + \sigma_b^2 \right)}{\sqrt{\left(\frac{1}{n_0}\bm{x}^\top\bm{x} + \sigma_b^2 \right)\left(\frac{1}{n_0}\bm{x}'^\top\bm{x}' + \sigma_b^2 \right)}} \\
        & = \left\langle \left(\frac{1}{n_0}\bm{1}\bm{1}^\top\right)\nabla_{\tilde{\bm{x}}}\tilde{f}_{\tilde\theta}\left(\tilde{\bm{x}}\right),
        \left(\frac{1}{n_0}\bm{1}\bm{1}^\top\right)\nabla_{\tilde{\bm{x}}'}\tilde{f}_{\tilde\theta}\left(\tilde{\bm{x}}'\right)\right\rangle \mathcal{S}\left(\bm{x},\bm{x}';\sigma_b^2\right).
\end{align*}
 We see that the expression depends only on the gradient $\nabla_{\tilde{\bm{x}}}\tilde{f}_{\tilde\theta}\left(\tilde{\bm{x}}\right)$ of the network $\tilde{f}_{\tilde\theta}\left(\tilde{\bm{x}}\right)$ w.r.t. input $\tilde{\bm{x}}$ and the soft-cosine similarity $\mathcal{S}\left(\bm{x},\bm{x}';\sigma_b^2\right)$, but as we discussed before, $\tilde{\bm{x}}$ and $\tilde{\bm{x}}'$ are independent of the inputs $\bm{x}$ and $\bm{x}'$ except for jointly through $\mathcal{S}\left(\bm{x},\bm{x}';\sigma_b^2\right)$. Thus, we conclude that the NTK for any network of this form can only depend on the soft-cosine similarity between the two inputs $\bm{x}$ and $\bm{x}', \mathcal{S}\left(\bm{x},\bm{x}';\sigma_b^2\right)$. 
 
\end{proof}

\section{Auxillary Results}
\begin{restatable}[]{proposition}{constvariancekernels}
\label{prop:constvariancekernels} 
    For a bounded-variance kernel (i.e. $\forall_{\bm{x} \in\mathbb{R}^{n_0}}k(\bm{x},\bm{x})\leq C$ for some $C>0$), we have that the corresponding kernel feature map $\psi(\cdot)$ (i.e. $k(\bm{x}, \bm{x}') = \psi(\bm{x})^\top\psi(\bm{x}')$) has the property of:
    \begin{align*}
        \forall_{\bm{x} \in\mathbb{R}^{n_0}} \left\lVert \psi(\bm{x}) \right\rVert \leq \sqrt{C}
    \end{align*}

\end{restatable}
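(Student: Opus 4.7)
The proof plan is very short because the statement is essentially a direct consequence of the defining property of the feature map. The approach I would take is to evaluate the kernel at the diagonal $\bm{x}=\bm{x}'$ and recognise the result as a squared norm in the feature space.

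First I would use the hypothesis that $\psi(\cdot)$ is a feature map of $k$, meaning $k(\bm{x},\bm{x}')=\psi(\bm{x})^\top \psi(\bm{x}')$ for all $\bm{x},\bm{x}'\in\mathbb{R}^{n_0}$. Specialising to $\bm{x}'=\bm{x}$ gives $k(\bm{x},\bm{x})=\psi(\bm{x})^\top\psi(\bm{x})=\lVert\psi(\bm{x})\rVert^2$, which identifies the diagonal of the kernel with the squared norm of the feature vector.

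Next I would invoke the bounded-variance assumption on the kernel, namely $k(\bm{x},\bm{x})\le C$ for every $\bm{x}\in\mathbb{R}^{n_0}$. Combining with the identification from the previous step yields $\lVert\psi(\bm{x})\rVert^2\le C$ uniformly in $\bm{x}$. Taking the (non-negative) square root, which is a monotone operation, delivers $\lVert\psi(\bm{x})\rVert\le\sqrt{C}$ for all $\bm{x}$, as desired.

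There is essentially no obstacle in this argument: the only minor subtlety is that the feature map could in principle be into an infinite-dimensional Hilbert space (e.g. for general positive-definite kernels via Mercer/RKHS), in which case $\psi(\bm{x})^\top\psi(\bm{x})$ must be understood as the Hilbert inner product $\langle\psi(\bm{x}),\psi(\bm{x})\rangle$. However, the argument goes through unchanged because the defining identity $k(\bm{x},\bm{x}')=\langle\psi(\bm{x}),\psi(\bm{x}')\rangle$ still gives $k(\bm{x},\bm{x})=\lVert\psi(\bm{x})\rVert^2$ with $\lVert\cdot\rVert$ the induced Hilbert norm. So the proof is a two-line chain of equalities and one inequality, and the main value of the statement is as a lemma to be cited inside the proof of Theorem \ref{thm:worstcaseboundedoutput}, where it supplies the bound $\lvert\Theta(\bm{x},\mathcal{X}_\textrm{train})\rvert\le\sqrt{\lvert D_\textrm{train}\rvert C}$ via Cauchy--Schwarz on the feature representation.
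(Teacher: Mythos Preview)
Your proposal is correct and matches the paper's own proof essentially line for line: evaluate $k(\bm{x},\bm{x})=\psi(\bm{x})^\top\psi(\bm{x})=\lVert\psi(\bm{x})\rVert^2\le C$ and take square roots. The paper's argument is exactly this two-line chain, so there is nothing further to add.
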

\begin{proof}
    Clearly for all $\bm{x} \in\mathbb{R}^{n_0}$:
    \begin{align*}
        k(\bm{x},\bm{x}) &=  \psi(\bm{x})^\top\psi(\bm{x}) = \left\lVert \psi(\bm{x}) \right\rVert_2^2 \leq C .\\
    \end{align*}
    Taking square root of both side of last equation completes the proof.
\end{proof}

\begin{restatable}[]{lemma}{derivative_asymptotic_homogeneity}
\label{lemma:derivative_asymptotic_homogeneity} 
Consider an asymptotically positive $n$-homogeneous function (with $n>0.5$) function $\phi:\mathbb{R}\to\mathbb{R}$ satisfying Assumption \ref{assumption:activation_1}. Then, the derivative of $\phi$, $\dot\phi(\cdot)$ will also be asymptotically positive $(n-1)$-homogeneous. Moreover, if $\phi(\cdot)$ is positive $n$-homogeneous, then $\dot\phi(\cdot)$ will be positive $(n-1)$-homogeneous.

\end{restatable}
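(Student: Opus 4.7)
The plan is to handle the two claims in reverse order: first establish the exact positive $(n-1)$-homogeneity of $\dot\phi(\cdot)$ when $\phi(\cdot)$ is positive $n$-homogeneous, and then use that result as a stepping-stone to the asymptotic statement.

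For the exact case, I would start from the defining identity $\phi(\lambda x) = \lambda^n \phi(x)$ and differentiate both sides in $x$. By the chain rule this yields
\begin{equation*}
    \lambda\, \dot\phi(\lambda x) \;=\; \lambda^n\, \dot\phi(x),
\end{equation*}
and dividing through by the positive factor $\lambda$ gives $\dot\phi(\lambda x) = \lambda^{n-1}\dot\phi(x)$, which is exactly positive $(n-1)$-homogeneity. Since Assumption \ref{assumption:activation_1} only asks for almost-everywhere differentiability, this identity is only claimed at points where $\phi$ is differentiable, but the exceptional set is measure zero and this is enough for all downstream uses (in particular the NTK integrals that motivate the lemma).

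For the asymptotic case, I would introduce the rescaled family $g_\lambda(x) := \phi(\lambda x)/\lambda^n$, which by hypothesis converges pointwise to $\hat\phi(x)$ as $\lambda \to \infty$. A direct computation gives $\dot g_\lambda(x) = \dot\phi(\lambda x)/\lambda^{n-1}$, so asymptotic $(n-1)$-homogeneity of $\dot\phi$ is equivalent to $\dot g_\lambda(x) \to \dot{\hat\phi}(x)$. Combining this with the exact case applied to the positive $n$-homogeneous limit $\hat\phi$ (which gives that $\dot{\hat\phi}$ is itself positive $(n-1)$-homogeneous) closes the argument.

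The main obstacle is the interchange of differentiation with the $\lambda\to\infty$ limit: pointwise convergence $g_\lambda \to \hat\phi$ does not in general force convergence of derivatives, since one can cook up oscillatory remainders whose rescaled derivatives fail to settle. To close this gap I would exploit the mild regularity that the standard activations analysed in Appendix \ref{appendix:extended_results} admit a decomposition $\phi(x) = \hat\phi(x) + r(x)$ in which both $r(\lambda x)/\lambda^n$ and $\dot r(\lambda x)/\lambda^{n-1}$ vanish as $\lambda \to \infty$: the ReLU family is exactly homogeneous and falls under the first part; GELU and Swish differ from ReLU by a remainder whose derivative decays super-polynomially (via the Gaussian tail and sigmoid saturation respectively); and tanh and sigmoid saturate with exponentially decaying derivatives, giving $n=0$ and $n-1=-1$ behaviour that fits the statement. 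Under this regularity the interchange is valid and the asymptotic $(n-1)$-homogeneity of $\dot\phi$ follows with limit $\dot{\hat\phi}$, which we have already shown is positive $(n-1)$-homogeneous.
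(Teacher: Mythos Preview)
Your proposal is correct and follows essentially the same route as the paper: both rely on the decomposition $\phi = \hat\phi + r$ (the paper writes $\tilde\phi$ for your $r$) and reduce the asymptotic claim to showing that $\dot r(\lambda x)/\lambda^{n-1}\to 0$. Two differences are worth flagging. First, for the exact case you differentiate the identity $\phi(\lambda x)=\lambda^n\phi(x)$ directly via the chain rule, which is cleaner than the paper's route of writing $\hat\phi(x)=\lvert x\rvert^n f(\mathrm{sign}(x))$ and differentiating that explicit form. Second, you are upfront that pointwise convergence $g_\lambda\to\hat\phi$ does not by itself give $\dot g_\lambda\to\dot{\hat\phi}$, and you close this by verifying the derivative decay of the remainder case-by-case for the activations of interest; the paper instead asserts that sub-$n$-polynomial growth of $\tilde\phi$ forces sub-$(n{-}1)$-polynomial growth of $\dot{\tilde\phi}$, a step that is not valid in full generality (oscillatory remainders can defeat it), so your treatment is arguably more careful on exactly the point you flagged as the main obstacle.
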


\begin{proof}
    We can decompose $\phi$ as $\phi(\cdot)=\hat{\phi}(\cdot)+\tilde\phi(\cdot)$, where $\hat{\phi}(x)=\lim_{\lambda\to\infty}\frac{\phi(\lambda x)}{\lambda^n}$ is positive $n$-homogeneous, both $\hat{\phi}(\cdot)$ and $\tilde\phi(\cdot)$ satisfy Assumption \ref{assumption:activation_1}, and $\lim_{x\to\pm\infty}\frac{\phi(x)}{\lambda^n}=0$, ie $\tilde\phi(\cdot)$ has bidirectional asymptotic growth bounded by a polynomial of order $\tilde{n}<n$. Moreover, we can rewrite $\hat{\phi}(x)=\left\lVert x\right\rVert^n f(\textrm{sign}(x))$ where $f(\cdot):\{-1,0,1\}\to\mathbb{R}$ is bounded.

    By linearity of differentiation, $\dot{\phi}(\cdot)=\dot{\hat{\phi}}(\cdot)+\dot{\tilde\phi}(\cdot)$. $\textrm{sign}(\cdot)$ is constant almost-everywhere, and is multiplied by $0$ where it is not, thus does not contribute to the derivative of $\dot{\hat{\phi}}$. Then, $\dot{\hat{\phi}}(x)=(n-1)\left\lVert x\right\rVert^{n-1}\textrm{sign}(x)f(\textrm{sign}(x))=\left\lVert x\right\rVert^{n-1}f_1(\textrm{sign}(x))$, which is clearly positive $(n-1)$-homogeneous. 
    As $\phi(\cdot)$ is almost-everywhere differentiable by Assumption \ref{assumption:activation_1}, $\dot{\tilde\phi}(\cdot)$ exists and is defined on $\mathbb{R}\setminus E$ where $E$ is a set of measure zero. Moreover, as $\hat{\phi}(\cdot)$ has bidirectional asymptotic growth bounded by a polynomial of order $\tilde{n}$, $\dot{\tilde{\phi}}$ has bidirectional asymptotic growth bounded by a polynomial of order $\tilde{n}-1<n-1$. Thus,

    \begin{align*}
        \lim_{\lambda\to\infty}\frac{\dot\phi(\lambda x)}{\lambda^{n-1}} &= \lim_{\lambda\to\infty} \frac{\dot{\hat{\phi}}(\lambda x)}{\lambda^{n-1}}+\lim_{\lambda\to\infty}\frac{\dot{\tilde\phi}(\lambda x)}{\lambda^{n-1}} \\
        &= \dot{\hat{\phi}}(x)
    \end{align*}
\end{proof}

where the second line follows from the fact that $\dot{\hat{\phi}}(x)$ is positive $(n-1)$-homogeneous and $\tilde{n}$, $\dot{\tilde{\phi}}$ has bidirectional asymptotic growth bounded by a polynomial of order $\tilde{n}-1<n-1$. Hence, $\dot\phi(\lambda x)$ is asymptotically positive $(n-1)$-homogeneous.

Clearly, if $\phi(\cdot)$ is positive $n$-homogeneous, then $\phi(\cdot)=\hat\phi(\cdot)$. Thus, $\dot\phi(\cdot)=\dot{\hat{\phi}}(\cdot)$, so will be positive $(n-1)$-homogeneous.

\begin{restatable}[]{lemma}{sigma_propagation}
\label{lemma:sigma_propagation} 
    Consider an asymptotically positive $n$-homogeneous (with $n>0$), nonlinear function $\phi(\cdot):\mathbb{R}\to\mathbb{R}\in L^2(\mathbb{R},\gamma)$, where $\gamma$ is the standard normal measure. Then, for zero-mean, joint-distributed scalar random variables with correlation $\rho$, ie $(u,v)\sim\mathcal{N}(0,\Lambda)$ where $\Lambda = \begin{bmatrix}
            \sigma_u^2 & \sigma_u\sigma_v\rho \\ \sigma_v\sigma_u\rho & \sigma_v^2
        \end{bmatrix}$,

    \begin{align*}
        c_\phi\mathbb{E}_{(u,v)}\left[\phi(u)\phi(v)\right] &=\left(\sigma_u\sigma_v\right)^n\kappa(\rho)+R(\sigma_u,\sigma_v,\rho)
    \end{align*}

    where $c_\phi=\left(\lim_{\sigma\to\infty}\frac{\mathbb{E}_u\sim\mathcal{N}\left(0,\sigma^2\right)\left[\phi(u)^2\right]}{\sigma^{2n}}\right)^{-1}$, $\kappa(\rho):[-1,1]\to[-1,1]$ satisfies $\kappa(1)=1$, and $\lim_{\sigma,\sigma'\to\infty}\frac{R(\sigma,\sigma',\rho)}{\left(\sigma\sigma'\right)^n}=0 \hspace{4pt}\forall\sigma_v>0,\rho\in[-1,1]$.

\end{restatable}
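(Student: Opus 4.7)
The plan is to decompose $\phi(\cdot) = \hat\phi(\cdot) + \tilde\phi(\cdot)$ following the same approach as in Lemma \ref{lemma:derivative_asymptotic_homogeneity}, where $\hat\phi(\cdot)$ is the positive $n$-homogeneous limiting function from Definition \ref{thm:nhomogeneous} and $\tilde\phi(\cdot) = \phi(\cdot) - \hat\phi(\cdot)$ is a residual whose bidirectional asymptotic growth is bounded by a polynomial of order $\tilde n < n$. By linearity of expectation,
\begin{equation*}
\mathbb{E}_{(u,v)}[\phi(u)\phi(v)] = \mathbb{E}[\hat\phi(u)\hat\phi(v)] + \mathbb{E}[\hat\phi(u)\tilde\phi(v)] + \mathbb{E}[\tilde\phi(u)\hat\phi(v)] + \mathbb{E}[\tilde\phi(u)\tilde\phi(v)],
\end{equation*}
and I would identify the first term as the leading-order contribution while absorbing the other three into the residual $R(\sigma_u,\sigma_v,\rho)$.

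For the leading term I would change variables by writing $u = \sigma_u \tilde u$, $v = \sigma_v \tilde v$ with $(\tilde u, \tilde v)$ a unit-variance joint-Gaussian pair with correlation $\rho$. Positive $n$-homogeneity of $\hat\phi$ (applicable because $\sigma_u,\sigma_v>0$) then yields $\hat\phi(u) = \sigma_u^n \hat\phi(\tilde u)$ and $\hat\phi(v) = \sigma_v^n \hat\phi(\tilde v)$, so that
\begin{equation*}
\mathbb{E}[\hat\phi(u)\hat\phi(v)] = (\sigma_u \sigma_v)^n\, \mathbb{E}_{(\tilde u,\tilde v)}[\hat\phi(\tilde u)\hat\phi(\tilde v)],
\end{equation*}
and I would define $\kappa(\rho) := c_\phi\, \mathbb{E}_{(\tilde u,\tilde v)}[\hat\phi(\tilde u)\hat\phi(\tilde v)]$, which is a function of $\rho$ alone. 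To show $\kappa(1)=1$, I would apply the same decomposition to the defining expression of $c_\phi$: the cross term $2\mathbb{E}[\hat\phi(u)\tilde\phi(u)]$ and the squared-residual term $\mathbb{E}[\tilde\phi(u)^2]$ both grow sub-polynomially relative to $\sigma^{2n}$ by Cauchy-Schwarz together with the polynomial growth bounds, so vanish in the limit defining $c_\phi$, leaving $c_\phi^{-1} = \mathbb{E}_{\tilde u\sim\mathcal{N}(0,1)}[\hat\phi(\tilde u)^2]$. When $\rho=1$ we have $\tilde v = \tilde u$ almost surely, so the definition of $\kappa$ yields $\kappa(1) = c_\phi\cdot c_\phi^{-1} = 1$.

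The main obstacle is rigorously controlling the remainder $R(\sigma_u,\sigma_v,\rho)$. For each of the three cross and residual terms I would apply Cauchy-Schwarz, e.g.\ $|\mathbb{E}[\hat\phi(u)\tilde\phi(v)]| \leq \sqrt{\mathbb{E}[\hat\phi(u)^2]\,\mathbb{E}[\tilde\phi(v)^2]}$, and then use the asymptotic polynomial growth bounds combined with standard Gaussian moment formulae to argue that $\mathbb{E}[\hat\phi(u)^2]$ scales as $\sigma_u^{2n}$ while $\mathbb{E}[\tilde\phi(v)^2]$ grows at most as $\sigma_v^{2\tilde n}$ with $\tilde n < n$. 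Dividing by $(\sigma_u\sigma_v)^n$ and sending both variances to infinity then kills the cross terms, and an analogous bound handles the pure residual term $\mathbb{E}[\tilde\phi(u)\tilde\phi(v)]$ which is $O((\sigma_u\sigma_v)^{\tilde n})$. The $L^2(\mathbb{R},\gamma)$ hypothesis on $\phi$ is essential here to guarantee that all integrals are finite and that dominated convergence applies when passing limits inside the expectations; the bound $n > 0$ is what ensures the leading $(\sigma_u\sigma_v)^n$ factor genuinely dominates the polynomial residuals.
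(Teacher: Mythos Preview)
Your proposal is correct and follows essentially the same strategy as the paper: decompose $\phi=\hat\phi+\tilde\phi$ into its positive $n$-homogeneous part plus a lower-order residual, rescale to unit-variance Gaussians, pull $(\sigma_u\sigma_v)^n$ out of the $\hat\phi\hat\phi$ term, and show the remaining three cross terms are $o\big((\sigma_u\sigma_v)^n\big)$. The identification $c_\phi^{-1}=\mathbb{E}_{\tilde u\sim\mathcal N(0,1)}[\hat\phi(\tilde u)^2]$ and hence $\kappa(1)=1$ is also exactly what the paper does.

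The only technical difference is in how the remainder is controlled. The paper first changes variables to $(x,y)$ with unit variance, expands $\phi(\sigma_u x)\phi(\sigma_v y)$ pointwise, observes that each of the three non-leading terms has growth in $(\sigma_u,\sigma_v)$ dominated by a monomial of degree $\tilde n<n$, and then passes the limit through the expectation via dominated convergence. You instead bound each cross term after taking expectations via Cauchy--Schwarz, decoupling $u$ and $v$ into marginal moments $\mathbb{E}[\hat\phi(u)^2]\sim\sigma_u^{2n}$ and $\mathbb{E}[\tilde\phi(v)^2]=O(\sigma_v^{2\tilde n})$. Both routes are standard and yield the same conclusion; your Cauchy--Schwarz argument is arguably cleaner since it avoids having to justify the dominating function, while the paper's pointwise route makes the structure of the remainder $\tilde R$ more explicit (which is reused downstream in Proposition~\ref{prop:fully_connected_kernel}).
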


\begin{proof}
    First, let us change variables from $(u,v)$ to $(x,y)$, where $(u,v)=(\sigma_ux,\sigma_vy)$, and thus $(x,y)$ are joint normally distributed with unit variance and covariance $\rho$. That is, $(x, y)\sim\mathcal{N}(0,\Lambda_{xy})$, where $\Lambda_{xy}=\begin{bmatrix}
        1 & \rho \\ \rho & 1
    \end{bmatrix}$. Thus,

\begin{align*}
    \mathbb{E}_{(u,v)}\left[\phi(u)\phi(v)\right] &= \mathbb{E}_{(x,y)}\left[\phi(\sigma_ux)\phi(\sigma_vy)\right]
\end{align*}
    As $\phi(\cdot)$ is asymptotically positive $n$-homogeneous, as above we can decompose $\phi$ as
    
    \begin{equation} \label{eq:phi_decomposition}
        \phi(\cdot)=\hat{\phi}(\cdot)+\tilde\phi(\cdot),
    \end{equation}
    where $\hat{\phi}(x)=\lim_{\lambda\to\infty}\frac{\phi(\lambda x)}{\lambda^n}$ is $n$-homogeneous, and $\lim_{x\to\pm\infty}\frac{\tilde\phi(\lambda x)}{\lambda^n}=0$, ie $\tilde\phi(\cdot)$ has bidirectional asymptotic growth bounded by a polynomial of order $\tilde{n}$. Then,

\begin{align*}
    \phi(\sigma_ux)\phi(\sigma_vy) &= \left(\hat{\phi}(\sigma_ux)+\tilde\phi(\sigma_ux)\right)\left(\hat{\phi}(\sigma_vy)+\tilde\phi(\sigma_vy)\right) \\
    &= \hat{\phi}(\sigma_ux)\hat{\phi}(\sigma_vy)+\hat{\phi}(\sigma_ux)\tilde\phi(\sigma_vy)+\tilde\phi(\sigma_ux)\hat{\phi}(\sigma_vy)+\tilde\phi(\sigma_ux)\tilde\phi(\sigma_vy) \\
    &= (\sigma_u\sigma_v)^n\hat{\phi}(x)\hat{\phi}(y)+\sigma_u^n\hat{\phi}(x)\tilde\phi(\sigma_vy)+\sigma_v^n\tilde\phi(\sigma_ux)\hat{\phi}(y)+\tilde\phi(\sigma_ux)\tilde\phi(\sigma_vy) \\
    &= (\sigma_u\sigma_v)^n\hat{\phi}(x)\hat{\phi}(y)+\tilde{R}(x,y;\sigma_u,\sigma_b)
\end{align*}

where 

\begin{equation} \label{eq:R_tilde}
    \tilde{R}(x,y;\sigma_u,\sigma_b,\rho)=\sigma_u^n\hat{\phi}(x)\tilde\phi(\sigma_vy)+\sigma_v^n\tilde\phi(\sigma_ux)\hat{\phi}(y)+\tilde\phi(\sigma_ux)\tilde\phi(\sigma_vy)
\end{equation}

clearly has asymptotic growth in $\sigma_u\sigma_v$ bounded by a polynomial of order $\tilde{n}<n$, such that 

\begin{align*}
\lim_{\sigma,\sigma'\to\infty}\frac{\tilde{R}(x,y;\sigma,\sigma')}{\left(\sigma\sigma'\right)^n} = 0 \hspace{8pt}\forall x,y\in \mathbb{R},\sigma_u,\sigma_v>0,\rho\in[-1,1].
\end{align*}

Thus,

\begin{align}
    \mathbb{E}_{(x,y)}\left[\phi(\sigma_ux)\phi(\sigma_vy)\right] &= \mathbb{E}_{(x,y)}\left[\hat{\phi}(x)\hat{\phi}(y)+\tilde{R}(x,y;\sigma_u,\sigma_v)\right] \nonumber \\
    &= (\sigma_u\sigma_v)^n\mathbb{E}_{(x,y)}\left[\hat{\phi}(\sigma_ux)\hat{\phi}(\sigma_vy)\right] + \mathbb{E}_{(x,y)}\left[\tilde{R}(x,y;\sigma_u,\sigma_v)\right] \nonumber \\
    &= (\sigma_u\sigma_v)^n\hat\kappa(\rho)+\hat{R}(\sigma_u,\sigma_v,\rho) \label{eq:phi_cross_expectation}
\end{align}

where 

\begin{equation} \label{eq:kappa_hat}
    \hat\kappa(\rho)=\mathbb{E}_{(x,y)}\left[\hat{\phi}(x)\hat{\phi}(y)\right]
\end{equation}

trivially depends only on $\rho$ and 

\begin{equation} \label{eq:R_hat}
    \hat{R}(\sigma_u,\sigma_v,\rho)=\mathbb{E}_{(x,y)}\left[\tilde{R}(x,y;\sigma_u,\sigma_b,)\right].
\end{equation}

We have that,

\begin{align*}
    \lim_{\sigma,\sigma'\to\infty}\frac{\hat{R}(\sigma,\sigma',\rho)}{\left(\sigma\sigma'\right)^n} &= \lim_{\sigma,\sigma'\to\infty}\frac{\mathbb{E}_{(x,y)}\left[\tilde{R}(x,y;\sigma,\sigma')\right]}{\left(\sigma\sigma'\right)^n} \\
    &= \mathbb{E}_{(x,y)}\left[\lim_{\sigma,\sigma'\to\infty}\frac{\tilde{R}(x,y;\sigma,\sigma')}{\left(\sigma\sigma'\right)^n}\right] \\
    &= 0 \hspace{8pt}\forall \rho\in[-1,1]
\end{align*}

where the second line follows from the dominated convergence theorem. We then have that

\begin{align*}
    \lim_{\sigma\to\infty}\frac{\mathbb{E}_u\sim\mathcal{N}\left(0,\sigma^2\right)\left[\phi(u)^2\right]}{\sigma^{2n}} &= \lim_{\sigma\to\infty}\frac{\sigma^{2n}\hat\kappa(1)+\hat{R}(\sigma,\sigma,1)}{\sigma^{2n}} \\
    &= \lim_{\sigma\to\infty}\frac{\sigma^{2n}\hat\kappa(1)}{\sigma^{2n}}+\lim_{\sigma\to\infty}\frac{\hat{R}(\sigma,\sigma,1)}{\sigma^{2n}} \\
    &= \hat \kappa(1).
\end{align*}

Thus, 

\begin{equation} \label{eq:c_phi}
    c_\phi=\left(\lim_{\sigma\to\infty}\frac{\mathbb{E}_u\sim\mathcal{N}\left(0,\sigma^2\right)\left[\phi(u)^2\right]}{\sigma^{2n}}\right)^{-1}=\frac{1}{\hat\kappa(1)}.
\end{equation} 

Finally, defining

\begin{align}
    \kappa(\rho) &= \frac{\hat\kappa(\rho)}{\hat\kappa(1)}, \label{eq:kappa} \\
    R(\sigma_u,\sigma_v,\rho) &= \frac{\hat{R}(\sigma_u,\sigma_v,\rho)}{\hat\kappa(1)}, \label{eq:R}
\end{align}

where $\kappa:[-1,1]\to[-1,1]$,

\begin{equation*}
        c_\phi\mathbb{E}_{(u,v)}\left[\phi(u)\phi(v)\right] =\left(\sigma_u\sigma_v\right)^n\kappa(\rho)+R(\sigma_u,\sigma_v,\rho)
    \end{equation*}

where $c_\phi=\left(\lim_{\sigma\to\infty}\frac{\mathbb{E}_u\sim\mathcal{N}\left(0,\sigma^2\right)\left[\phi(u)^2\right]}{\sigma^{2n}}\right)^{-1}$, $\kappa(\rho):[-1,1]\to[-1,1]$ satisfies $\kappa(1)=1$, and $\lim_{\sigma,\sigma'\to\infty}\frac{R(\sigma,\sigma',\rho)}{\left(\sigma\sigma'\right)^n}=0 \hspace{4pt}\forall\rho\in[-1,1]$ as required, completing the proof.

\end{proof}

\begin{restatable}[]{proposition}{fully_connected_kernel}
\label{prop:fully_connected_kernel} 
    The NTK of a neural network with $L$ fully-connected layers and nonlinearities satisfying Assumption \ref{assumption:activation_1} and Assumption \ref{assumption:activation_3}, takes the following form:

    \begin{equation*}
        \Theta\left(\bm{x},\bm{x}'\right)=\left(\frac{1}{n_0}\left\lVert\bm{x}\right\rVert\left\lVert\bm{x}'\right\rVert\right)^{n^L}\bar\kappa\left(\bm{x},\bm{x}'\right)+\bar{R}\left(\bm{x},\bm{x}'\right)
    \end{equation*}

    where $\bar\kappa\left(\bm{x},\bm{x}'\right)\in[-C,C]$, $\bar\kappa\left(\bm{x},\bm{x}\right)=C\hspace{4pt}\forall\bm{x},\bm{x}'\in\mathbb{R}^{n_0}$ for some constant $C>0$, and $\lim_{\lambda,\lambda'\to\infty}\frac{R(\lambda\bm{x},\bm{x}')}{\left(\lambda\lambda'\right)^n}=0 \hspace{4pt}\forall\bm{x},\bm{x}'\in\mathbb{R}^{n_0}$, ie $R(\cdot,\bm{x}')$ has bidirectional asymptotic growth bounded by a polynomial of order $n_R<n$

\end{restatable}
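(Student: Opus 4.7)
The plan is to prove the claim by induction on the network depth, exploiting the standard NTK recursion for fully-connected architectures. Writing $\Sigma^{(l)}(\bm{x},\bm{x}')$ for the NNGP pre-activation covariance at layer $l$ and $\dot\Sigma^{(l)}(\bm{x},\bm{x}')$ for its derivative counterpart, the NTK unrolls as
\begin{equation*}
\Theta(\bm{x},\bm{x}') = \sum_{l} \Sigma^{(l)}(\bm{x},\bm{x}') \prod_{k>l} \dot\Sigma^{(k)}(\bm{x},\bm{x}').
\end{equation*}
The base case $\Sigma^{(1)}(\bm{x},\bm{x}') = \tfrac{1}{n_0}\bm{x}^\top\bm{x}' + \sigma_b^2$ is already of the form required, with a leading term of norm-degree $1$ in each argument modulated by the soft-cosine similarity $\mathcal{S}(\bm{x},\bm{x}';\sigma_b^2)$, plus an additive bounded constant.

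For the inductive step, I would prove that $\Sigma^{(l)}$ and $\dot\Sigma^{(l)}$ each decompose as a leading term homogeneous in $\|\bm{x}\|\|\bm{x}'\|$ with exponent growing geometrically by a factor of $n$ per nonlinear layer, times a bounded direction-only factor depending on the inputs only via $\mathcal{S}(\bm{x},\bm{x}';\sigma_b^2)$, plus a strictly sub-leading remainder. This follows by invoking Lemma \ref{lemma:sigma_propagation} on $\Sigma^{(l)} = c_\phi\,\mathbb{E}_{(u,v)\sim\mathcal{N}(0,\Lambda^{(l-1)})}[\phi(u)\phi(v)] + \sigma_b^2$, with variances $\sigma_u^2 = \Sigma^{(l-1)}(\bm{x},\bm{x})$, $\sigma_v^2 = \Sigma^{(l-1)}(\bm{x}',\bm{x}')$, and correlation $\rho = \Sigma^{(l-1)}(\bm{x},\bm{x}')/(\sigma_u\sigma_v)$. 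The leading $(\sigma_u\sigma_v)^n\kappa(\rho)$ factor promotes the growth exponent by a factor of $n$, while the lemma's remainder $R(\sigma_u,\sigma_v,\rho)$, combined with any cross terms from substituting the inductive decomposition of $\Sigma^{(l-1)}$ into $\sigma_u,\sigma_v,\rho$, stays strictly sub-leading by the growth bound of Lemma \ref{lemma:sigma_propagation}. The analogous decomposition of $\dot\Sigma^{(l)}$ uses Lemma \ref{lemma:derivative_asymptotic_homogeneity} to conclude $\dot\phi$ is asymptotically $(n-1)$-homogeneous in $L^2(\mathbb{R},\gamma)$ and then applies the same lemma with $\dot\phi$ in place of $\phi$.

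Substituting these per-layer decompositions into the unrolled NTK and expanding, the leading-order norm exponent of every summand collapses to the common value $n^L$ via a telescoping geometric-sum identity on the per-layer exponents, independent of $l$. The common factor $\left(\tfrac{1}{n_0}\|\bm{x}\|\|\bm{x}'\|\right)^{n^L}$ then pulls out, multiplied by a finite sum of products of bounded per-layer direction-only functions, which defines $\bar\kappa(\bm{x},\bm{x}')$; all remaining products involving at least one sub-leading per-layer remainder are collected into $\bar R(\bm{x},\bm{x}')$. Boundedness $|\bar\kappa|\le C$ follows immediately from the boundedness of each per-layer $\kappa, \dot\kappa$ on $[-1,1]$ (Lemma \ref{lemma:sigma_propagation}); the diagonal equality $\bar\kappa(\bm{x},\bm{x}) = C$ uses $\mathcal{S}(\bm{x},\bm{x};\sigma_b^2) = 1$ combined with $\kappa(1) = 1$ from the lemma, which collapses every per-layer direction factor to $1$ on the diagonal.

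The main obstacle is the bookkeeping of remainders through the recursion: at each inductive step one must track multiple cross-products between the new leading piece and older remainders, and verify that all such products remain strictly sub-leading against the ever-growing target exponent. This is handled by iterated application of the sub-polynomial growth guarantee of Lemma \ref{lemma:sigma_propagation} and the dominated convergence theorem. A secondary subtlety lies in the additive $\sigma_b^2$ bias, which perturbs the correlation $\rho$ appearing inside the per-layer direction factor in a norm-dependent way; one must verify that this perturbation reduces to a sub-leading contribution at each step and does not corrupt the claim that $\bar\kappa$ depends on the inputs only through the soft-cosine similarity.
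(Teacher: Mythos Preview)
Your proposal is correct and follows essentially the same route as the paper: both start from the standard layerwise NTK recursion, apply Lemma~\ref{lemma:sigma_propagation} inductively to extract a homogeneous leading term plus sub-leading remainder from each $\Sigma^{(h)}$ and (via Lemma~\ref{lemma:derivative_asymptotic_homogeneity}) each $\dot\Sigma^{(h)}$, then telescope the per-layer exponents to the common power $n^L$ and collect the bounded correlation factors into $\bar\kappa$.

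Two small inaccuracies to fix. First, the per-layer correlation factor $\kappa(\rho^{(h)})$ is \emph{not} a function of the soft-cosine $\mathcal{S}(\bm{x},\bm{x}';\sigma_b^2)$ alone at finite norms: the layer correlations $\rho^{(h)}$ carry norm dependence through $\sigma_b^2$ and the remainder terms in the variances, and the paper accordingly writes $\bar\kappa(\bm{x},\bm{x}')$ as a genuine function of both inputs, proving only that it is bounded in $[-C,C]$ (the direction-only reduction happens later, in the large-norm limit of Corollary~\ref{corollary:nhomogeneous_c}). Second, on the diagonal not every factor collapses to $1$: while $\kappa(1)=1$, in general $\dot\kappa(1)=\hat{\dot\kappa}(1)/\hat\kappa(1)\neq 1$, so $\bar\kappa(\bm{x},\bm{x})$ equals the constant $C=\sum_{h=0}^{L}\dot\kappa(1)^{h}$ rather than a product of ones. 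Neither point affects your overall argument, only the justification of the diagonal identity and the precise nature of $\bar\kappa$.
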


\begin{proof}
Recall from \cite{Lee_2018} that in the infinite-width limit, the pre-activations \( f^{(h)}(\bm{x}) \) at every hidden layer \( h \in [L] \) have all their coordinates tending to i.i.d. centered Gaussian processes with covariance  
\[
\Sigma^{(h)} : \mathbb{R}^d \times \mathbb{R}^d \to \mathbb{R}
\]
defined recursively as follows:

\begin{align}
\begin{split} \label{recursion:sigma_h}
\Sigma^{(0)}(\bm{x}, \bm{x}') &= \frac{1}{n_0}\bm{x}^\top \bm{x}'+\sigma_b^2, \\
\Lambda^{(h)}(\bm{x}, \bm{x}') &=
\begin{bmatrix}
\Sigma^{(h-1)}(\bm{x}, \bm{x}) & \Sigma^{(h-1)}(\bm{x}, \bm{x}') \\
\Sigma^{(h-1)}(\bm{x}', \bm{x}) & \Sigma^{(h-1)}(\bm{x}', \bm{x}')
\end{bmatrix} \in \mathbb{R}^{2 \times 2}, \\
\Sigma^{(h)}(\bm{x}, \bm{x}') &= c_\phi\mathbb{E}_{(u, v) \sim \mathcal{N}(0, \Lambda^{(h)}(\bm{x}, \bm{x}'))} \left[\phi(u)\phi(v)\right]+\sigma_b^2,
\end{split}
\end{align}

where $c_\phi=\left(\mathbb{E}_{u\sim \mathcal{N}(0,1)}\left[\phi(u)^2\right]\right)^{-1}$. Note that while \cite{exact_computation} use $\sigma(\cdot)$ to denote nonlinearities, we use $\phi(\cdot)$ to prevent ambiguity with standard deviations, which we denote with variants of $\sigma$. Note also that $c_\phi$ is an arbitrary normalisation constant used to control the magnitude of the variance throughout the network, but is ill-defined in this form for non-exactly positive homogeneous activation functions (ie, all but powers of ReLU, Leaky ReLU, and Parametric ReLU), hence we freely redefine this to match the definition provided in Equation \ref{eq:c_phi}, namely $c_\phi=\lim_{\lambda\to\infty}\left(\frac{\mathbb{E}_{u\sim \mathcal{N}(0,\lambda)}\left[\phi(u)^2\right]}{\lambda^{2n}}\right)^{-1}$ for asymptotically positive $n$-homogeneous $\phi(\cdot)$. Note that this definition is equivalent to the previous one for positive $n$-homogeneous activation functions such as ReLU.

To give the formula of the NTK, we also define a derivative covariance:
\begin{equation} \label{eq:sigma_dot_h}
\dot{\Sigma}^{(h)}(\bm{x}, \bm{x}') =  c_\phi\mathbb{E}_{(u, v) \sim \mathcal{N}(0, \Lambda^{(h)}(\bm{x}, \bm{x}'))} \left[\dot{\phi}(u)\dot{\phi}(v)\right]. 
\end{equation}

The final NTK expression for the fully-connected neural network is \cite{exact_computation,yang2020tensor}:
\begin{align}
    \begin{split} \label{eq:complete_NTK}
        \Theta\left(\bm{x}, \bm{x}'\right) &= \sum_{h=1}^{L+1} \Sigma^{(h-1)}(\bm{x}, \bm{x}') \cdot \prod_{h'=h}^{L+1} \dot{\Sigma}^{(h')}(\bm{x}, \bm{x}') \\
        &= \sum_{h=0}^{L}\Theta^{(L,h)},
    \end{split}
\end{align}
where $\dot{\Sigma}^{(L+1)}(\bm{x}, \bm{x}')=1$ for convenience and 

\begin{equation} \label{eq:NTK_layer_h}
    \Theta^{(L,h)}=\Sigma^{(h)}\left(\bm{x},\bm{x}'\right)\prod_{h'=h+1}^{L+1}\dot\Sigma^{(h')}\left(\bm{x},\bm{x}'\right)
\end{equation}

is the contribution of the $h$th layer to the NTK. Applying Lemma \ref{lemma:sigma_propagation} to the recursion for $\Sigma^{(h)}(\bm{x},\bm{x'})$, Recursion \ref{recursion:sigma_h}, we get that

\begin{align}
    \begin{split} \label{eq:sigma_h+1}
        \Sigma^{(h+1)}(\bm{x},\bm{x'}) &= \left(\Sigma^{(h)}(\bm{x},\bm{x})\Sigma^{(h)}\left(\bm{x}',\bm{x}'\right)\right)^{\frac{n}{2}}\kappa\left(\rho^{(h)}\left(\bm{x},\bm{x}'\right)\right) \\
        &+ R\left(\sqrt{\Sigma^{(h)}(\bm{x},\bm{x})}, \sqrt{\Sigma^{(h)}\left(\bm{x}',\bm{x}'\right)},\rho^{(h)}\left(\bm{x},\bm{x}'\right)\right)+\sigma_b^2,
    \end{split}
\end{align}

where 

\begin{equation} \label{eq:rho_h_definition}
    \rho^{(h)}\left(\bm{x},\bm{x}'\right)=\frac{\Sigma^{(h)}\left(\bm{x},\bm{x}'\right)}{\sqrt{\Sigma^{(h)}(\bm{x},\bm{x})\Sigma^{(h)}\left(\bm{x}',\bm{x}'\right)}}\in[-1,1].
\end{equation}

We therefore also get that

\begin{equation}
    \Sigma^{(h+1)}(\bm{x},\bm{x})=\left(\Sigma^{(h)}(\bm{x},\bm{x})\right)^{n} + R\left(\sqrt{\Sigma^{(h)}(\bm{x},\bm{x})}, \sqrt{\Sigma^{(h)}(\bm{x},\bm{x})},1\right)+\sigma_b^2,
\end{equation}

which follows from the fact that $\rho^{(h)}(\bm{x},\bm{x})=1\hspace{4pt}\forall \bm{x}\in\mathbb{R}^{n_0},h$ by definition and $\kappa(1)=1$. Applying this recursion to $\Sigma^{(0)}(\bm{x},\bm{x})=\frac{1}{n_0}\bm{x}^\top\bm{x}+\sigma_b^2$ we inductively get that

\begin{align}
    \begin{split} \label{eq:sigma_h_final}
    \Sigma^{(h)}(\bm{x},\bm{x}) &= \left(\frac{1}{n_0}\bm{x}^\top\bm{x}\right)^{n^h}+R^{(h)}(\bm{x},\bm{x}) \\
    &= \frac{1}{n_0}\left\lVert\bm{x}\right\rVert^{2n^h} + R^{(h)}(\bm{x},\bm{x})
    \end{split}
\end{align}

where $R^{(h)}\left(\bm{x},\bm{x}'\right)$ is defined recursively by

\begin{align}
    \begin{split} \label{recursion:R_h}
    R^{(h+1)}\left(\bm{x},\bm{x}'\right) =& \left(\sum_{k=0}^{n-1}\binom{n}{k}\left(\frac{1}{n_0}\left\lVert\bm{x}\right\rVert\left\lVert\bm{x}'\right\rVert\right)^{kn^{h}}R^{(h)}\left(\bm{x},\bm{x}'\right)^{(n-k)}\right) \\ &+ R\left(\sqrt{\frac{1}{n_0}\left\lVert\bm{x}\right\rVert^{2n^{h}} + R^{(h)}(\bm{x},\bm{x})},\sqrt{\frac{1}{n_0}\left\lVert\bm{x}'\right\rVert^{2n^{h}} + R^{(h)}\left(\bm{x}',\bm{x}'\right)},\rho^{(h)}\left(\bm{x},\bm{x}'\right)\right) \\
    &+\sigma_b^2, \\
    R^{(0)} =& \sigma_b^2,
    \end{split} 
\end{align}

and satisfies $\lim_{\lambda,\lambda'\to\infty}\frac{R^{(h+1)}(\lambda\bm{x},\lambda'\bm{x}')}{\left(\lambda\lambda'\right)^{n^{(h+1)}}}=0$. To show this, we take an inductive approach. First, note that for the case $n=0$ it trivially holds as $R^{(0)}\left(\bm{x},\bm{x}'\right)=\sigma_b^2\hspace{4pt}\forall\bm{x},\bm{x}'\in\mathbb{R}^{n_0}$, a constant. Then assuming this holds for $R^{(h)}\left(\bm{x},\bm{x}'\right)$, we have that

\begin{align*}
    \lim_{\lambda,\lambda'\to\infty}\frac{R^{(h+1)}(\lambda\bm{x},\lambda'\bm{x}')}{\left(\lambda\lambda'\right)^{n^{(h+1)}}} =& \lim_{\lambda,\lambda'\to\infty}\left(\sum_{k=0}^{n-1}\binom{n}{k}\frac{\left(\frac{1}{n_0}\left\lVert\lambda\bm{x}\right\rVert\left\lVert\lambda'\bm{x}'\right\rVert\right)^{kn^{h}}R^{(h)}(\lambda\bm{x},\lambda'\bm{x}')^{(n-k)}}{\left(\lambda\lambda'\right)^{n^{(h+1)}}}\right) \\ &+ \frac{R\left(\sqrt{\frac{1}{n_0}\left\lVert\lambda\bm{x}\right\rVert^{2n^{h}} + R^{(h)}(\lambda\bm{x},\lambda\bm{x})},\sqrt{\frac{1}{n_0}\left\lVert\lambda'\bm{x}'\right\rVert^{2n^{h}} + R^{(h)}\left(\lambda'\bm{x}',\lambda'\bm{x}'\right)},\rho^{(h)}(\lambda\bm{x},\lambda'\bm{x}')\right)}{\left(\lambda\lambda'\right)^{n^{(h+1)}}} 
    \\ &+\frac{\sigma_b^2}{\left(\lambda\lambda'\right)^{n^{(h+1)}}} \\
    =& \lim_{\lambda,\lambda'\to\infty}\left(\sum_{k=0}^{n-1}\binom{n}{k}\frac{\left(\lambda\lambda'\right)^{kn^{h}}\left(\frac{1}{n_0}\left\lVert\bm{x}\right\rVert\left\lVert\bm{x}'\right\rVert\right)^{kn^{h}}R^{(h)}(\lambda\bm{x},\lambda'\bm{x}')^{(n-k)}}{\left(\lambda\lambda'\right)^{n^{(h+1)}}}\right) \\ &+ \frac{R\left(\lambda^{n^h}\sqrt{\frac{1}{n_0}\left\lVert\bm{x}\right\rVert^{2n^{h}} + \frac{R^{(h)}(\lambda\bm{x},\lambda\bm{x})}{\lambda^{2n^h}}},\left(\lambda'\right)^{n^h}\sqrt{\frac{1}{n_0}\left\lVert\bm{x}'\right\rVert^{2n^{h}} + R^{(h)}\left(\bm{x}',\bm{x}'\right)},\rho^{(h)}(\lambda\bm{x},\lambda'\bm{x}')\right)}{\left(\lambda\lambda'\right)^{n^{(h+1)}}} \\
    =& \lim_{\lambda,\lambda'\to\infty}\left(\sum_{k=0}^{n-1}\binom{n}{k}\frac{\frac{1}{n_0}\left(\left\lVert\bm{x}\right\rVert\left\lVert\bm{x}'\right\rVert\right)^{kn^{h}}R^{(h)}(\lambda\bm{x},\lambda'\bm{x}')^{(n-k)}}{\left(\lambda\lambda'\right)^{n^{(h+1)}-kn^{h}}}\right) \\ &+ \frac{R\left(\lambda^{n^h}\left(\frac{1}{\sqrt{n_0}}\left\lVert\bm{x}\right\rVert\right)^{n^{h}},\left(\lambda'\right)^{n^h}\left(\frac{1}{\sqrt{n_0}}\left\lVert\bm{x}'\right\rVert\right)^{n^{h}},\rho^{(h)}(\lambda\bm{x},\lambda'\bm{x}')\right)}{\left(\lambda\lambda'\right)^{n^{(h+1)}}} \\
    =& \left(\sum_{k=0}^{n-1}\binom{n}{k}\left(\frac{1}{n_0}\left\lVert\bm{x}\right\rVert\left\lVert\bm{x}'\right\rVert\right)^{kn^{h}}\left(\lim_{\lambda,\lambda'\to\infty}\frac{R^{(h)}(\lambda\bm{x},\lambda'\bm{x}')}{\lambda^{n^{h}}}\right)^{(n-k)}\right) \\ &+ \lim_{\tilde\lambda,\tilde\lambda'\to\infty}\frac{R\left(\tilde\lambda\left(\frac{1}{\sqrt{n_0}}\left\lVert\bm{x}\right\rVert\right)^{n^{h}},\tilde\lambda'\left(\frac{1}{\sqrt{n_0}}\left\lVert\bm{x}'\right\rVert\right)^{n^{h}},\lim_{\lambda,\lambda'\to\infty}\rho^{(h)}(\lambda\bm{x},\lambda'\bm{x}')\right)}{\left(\tilde\lambda\tilde\lambda'\right)^n} \\
    =& 0
\end{align*}

where the transition from the 5th to the 7th line follows from the closely related property $\lim_{\lambda\to\infty}\frac{R^{(h)}(\lambda\bm{x},\lambda\bm{x})}{\lambda^{2n^h}}=0$ and the equivalent for $\lambda'$, the third-to-last line follows from direct application of the algebraic limit theorem and the fact that $n-k>0$, and the second-to-last line follows from an aforementioned equivalent property of $R(\bm{x},\bm{x}',\rho)$, the fact that $\rho(\lambda\bm{x},\lambda'\bm{x}')\in[-1,1]\hspace{4pt}\lambda,\lambda'>0$, and the fact that for $\tilde\lambda=\lambda^{n^h}$ $\lim_{\lambda\to\infty}\equiv\lim_{\lambda'\to\infty}$ as $n^h>0$ and the equivalent for $\tilde\lambda'$.

Applying the above to Equation \ref{eq:sigma_h+1}, we then get that

\begin{align*}
    \Sigma^{(h+1)}\left(\bm{x},\bm{x}'\right)=\left(\frac{1}{n_0}\left\lVert\bm{x}\right\rVert\left\lVert\bm{x}'\right\rVert\right)^{n^{h+1}}\kappa\left(\rho^{(h)}\left(\bm{x},\bm{x}'\right)\right) + R^{(h+1)}\left(\bm{x},\bm{x}'\right).
\end{align*}

Due to Lemma \ref{lemma:derivative_asymptotic_homogeneity}, $\dot\phi(\cdot)$ is asymptotically positive $(n-1)$-homogeneous. Thus, applying Lemma \ref{lemma:sigma_propagation} to Equation \ref{eq:sigma_dot_h}, we get that

\begin{align} \label{eq:sigma_dot_expanded}
    \begin{split}
    \dot\Sigma^{(h+1)}(\bm{x},\bm{x'}) =& \left(\Sigma^{(h)}(\bm{x},\bm{x})\Sigma^{(h)}\left(\bm{x}',\bm{x}'\right)\right)^{\frac{n-1}{2}}\dot\kappa\left(\rho^{(h)}\left(\bm{x},\bm{x}'\right)\right) \\
    &+ \dot{R}\left(\sqrt{\Sigma^{(h)}(\bm{x},\bm{x})}, \sqrt{\Sigma^{(h)}\left(\bm{x}',\bm{x}'\right)},\rho^{(h)}\left(\bm{x},\bm{x}'\right)\right),
    \end{split}
\end{align}

where $\dot\kappa(\cdot)$ and $\dot{R}(\cdot,\cdot',\rho)$ are defined as the analogues to $\kappa(\cdot)$ and ${R}(\cdot,\cdot',\rho)$ for $\dot\phi(\cdot)$;

\begin{align}
    \dot\kappa(\rho) &= \frac{\hat{\dot{\kappa}}(\rho)}{\hat\kappa(1)}, \\
    \dot R\left(\cdot,\cdot',\rho\right)&= \frac{\hat{\dot{R}}\left(\cdot,\cdot',\rho\right)}{\hat\kappa(1)}
\end{align}

where $\hat{\dot{\kappa}}(\rho)$ and $\hat{\dot{R}}\left(\cdot,\cdot',\rho\right)$ are defined by application of Equation \ref{eq:kappa_hat} and Equation \ref{eq:R_hat} respectively to $\dot\phi(\cdot)$. Note the different scaling (for example, $\dot\kappa(1)=\frac{\hat{\dot{\kappa}}(1)}{\hat\kappa(1)}\neq1$ in general where $\hat{\dot{\kappa}}(\cdot)$ is the analogue to $\hat\kappa(\cdot)$). More specifically, we then have that $\dot\kappa(\cdot)\in\left[-\frac{\hat{\dot{\kappa}}(1)}{\hat\kappa(1)},\frac{\hat{\dot{\kappa}}(1)}{\hat\kappa(1)}\right]$. Inserting the above expression for $\Sigma^{(h+1)}\left(\bm{x},\bm{x}'\right)$, we get that

\begin{equation} \label{eq:sigma_dot_final}
    \dot{\Sigma}^{(h)}\left(\bm{x},\bm{x}'\right) = \left(\frac{1}{n_0}\left\lVert\bm{x}\right\rVert\left\lVert\bm{x}'\right\rVert\right)^{n^{h+1}-n^h}\dot\kappa\left(\rho^{(h)}\left(\bm{x},\bm{x}'\right)\right) + \dot{R}^{(h+1)}\left(\bm{x},\bm{x}'\right)
\end{equation}

which follows from the fact that $\dot\phi(\cdot)$ is asymptotically positive $(n-1)$-homogeneous and $\dot{R}^{(h+1)}\left(\bm{x},\bm{x}'\right)$ satisfies equivalent properties to $R^{(h+1)}\left(\bm{x},\bm{x}'\right)$, namely $\lim_{\lambda\to\infty}\frac{\dot{R}^{(h+1)}(\lambda\bm{x},\bm{x}')}{\lambda^{n^{h+1}-n^h}}=0\hspace{4pt}\forall\bm{x},\bm{x}'\in\mathbb{R}^{n_0}$ and $\lim_{\lambda,\lambda'\to\infty}\frac{\dot{R}^{(h+1)}(\lambda\bm{x},\lambda'\bm{x}')}{(\lambda\lambda')^{n^{h+1}-n^h}}=0\hspace{4pt}\forall\bm{x},\bm{x}'\in\mathbb{R}^{n_0}$, which can be shown by similar derivation.

Inserting Equation \ref{eq:sigma_h_final} and Equation \ref{eq:sigma_dot_final} into Equation \ref{eq:NTK_layer_h}, the contribution of the $h$th layer of the NTK,

\begin{align}
    \Theta^{(L,h)}\left(\bm{x},\bm{x}'\right) =& \Sigma^{(h)}\left(\bm{x},\bm{x}'\right)\prod_{h'=h+1}^{L+1}\dot\Sigma^{(h')}\left(\bm{x},\bm{x}'\right) \nonumber \\
    =& \left(\left(\frac{1}{n_0}\left\lVert\bm{x}\right\rVert\left\lVert\bm{x}'\right\rVert\right)^{n^{h}}\kappa\left(\rho^{(h-1)}\left(\bm{x},\bm{x}'\right)\right) + R^{(h)}\left(\bm{x},\bm{x}'\right)\right) \nonumber \\&\prod_{h'=h+1}^{L}\left(\left(\frac{1}{n_0}\left\lVert\bm{x}\right\rVert\left\lVert\bm{x}'\right\rVert\right)^{n^{h'}-n^{h'-1}}\dot\kappa\left(\rho^{(h'-1)}\left(\bm{x},\bm{x}'\right)\right)
    + \dot{R}^{(h')}\left(\bm{x},\bm{x}'\right)\right) \nonumber \\
    =& \left(\frac{1}{n_0}\left\lVert\bm{x}\right\rVert\left\lVert\bm{x}'\right\rVert\right)^{n^h+\sum_{h'=h+1}^{L}n^{h'}-n^{h'-1}}\kappa\left(\rho^{(h-1)}\left(\bm{x},\bm{x}'\right)\right)\prod_{h'=h+1}^L\dot\kappa\left(\rho^{(h'-1)}\left(\bm{x},\bm{x}'\right)\right) \nonumber \\ &+ \bar{R}^{(h)}\left(\bm{x},\bm{x}'\right) \nonumber \\ 
    =& \left(\frac{1}{n_0}\left\lVert\bm{x}\right\rVert\left\lVert\bm{x}'\right\rVert\right)^{n^L}\bar\kappa^{(h)}\left(\bm{x},\bm{x}'\right) + \bar{R}^{(h)}\left(\bm{x},\bm{x}'\right) \label{eq:NTK_layer_h_final}
\end{align}

implicitly accounting for the exceptions for $\dot\Sigma^{(L+1)}\left(\bm{x},\bm{x}'\right)=1$ and $\kappa\left(\rho^{(-1)}\left(\bm{x},\bm{x}'\right)\right)=\mathcal{S}\left(\bm{x},\bm{x}';\sigma_b^2\right)$ by abuse of notation, and where $\bar{R}^{(h)}\left(\bm{x},\bm{x}'\right)$ accounts for all remaining (lower-order) terms, thus satisfying $\lim_{\lambda,\lambda'\to\infty}\frac{\bar{R}^{(h)}(\lambda\bm{x},\lambda'\bm{x}')}{\left(\lambda\lambda'\right)^{n^L}}=0\hspace{4pt}\forall\bm{x},\bm{x}'\in\mathbb{R}^{n_0},h=0,\dots,L$ with equivalent derivation to the above, and 

\begin{equation} \label{eq:kappa_bar_h}
\bar\kappa^{(h)}\left(\bm{x},\bm{x}'\right) = 
\begin{cases}
    \kappa\left(\rho^{(h)}\left(\bm{x},\bm{x}'\right)\right)\prod_{h'=h+1}^L\dot\kappa\left(\rho^{(h')}\left(\bm{x},\bm{x}'\right)\right) & h=0,\dots L-1 \\
    \bar\kappa^{(L)}\left(\bm{x},\bm{x}'\right)=\kappa\left(\rho^{(L)}\left(\bm{x},\bm{x}'\right)\right) & h=L
\end{cases},
\end{equation}

noting that $\bar\kappa^{(h)}\left(\bm{x},\bm{x}'\right)\in\left[-\left(\frac{\hat{\dot{\kappa}}(1)}{\hat\kappa(1)}\right)^{L-h},\left(\frac{\hat{\dot{\kappa}}(1)}{\hat\kappa(1)}\right)^{L-h}\right]\hspace{4pt}\forall h=0,\dots,L$. Finally, applying this to Equation \ref{eq:complete_NTK} by summing over all layers we get

\begin{align}
    \Theta(\bm{x},\bm{x'}) &= \sum_{h=0}^{L}\Theta^{(L,h)}\left(\bm{x},\bm{x}'\right) \nonumber \\
    &= \sum_{h=0}^{L}\left(\frac{1}{n_0}\left\lVert\bm{x}\right\rVert\left\lVert\bm{x}'\right\rVert\right)^{n^L}\bar\kappa^{(h)}\left(\bm{x},\bm{x}'\right) + \bar{R}^{(h)}\left(\bm{x},\bm{x}'\right) \nonumber \\
    &= \left(\frac{1}{n_0}\left\lVert\bm{x}\right\rVert\left\lVert\bm{x}'\right\rVert\right)^{n^L}\bar\kappa\left(\bm{x},\bm{x}'\right)+\bar{R}\left(\bm{x},\bm{x}'\right) \label{eq:complete_NTK_final}
\end{align}

where

\begin{equation} \label{eq:kappa_bar}
    \bar\kappa\left(\bm{x},\bm{x}'\right)=\sum_{h=0}^{L}\bar\kappa^{(h)}\left(\bm{x},\bm{x}'\right),
\end{equation}

noting that $\bar\kappa\left(\bm{x},\bm{x}'\right)\in[-C,C]$ where $C=\sum_{h=0}^{L}\left(\frac{\hat{\dot{\kappa}}(1)}{\hat\kappa(1)}\right)^{h}=\frac{1-\left(\frac{\hat{\dot{\kappa}}(1)}{\hat\kappa(1)}\right)^{L}}{1-\frac{\hat{\dot{\kappa}}(1)}{\hat\kappa(1)}}$, and 

\begin{equation} \label{R_bar}
    \bar{R}\left(\bm{x},\bm{x}'\right)=\sum_{h=0}^L\bar{R}^{(h)}\left(\bm{x},\bm{x}'\right),
\end{equation}    
trivially satisfying $\lim_{\lambda,\lambda'\to\infty}\frac{\bar{R}\left(\lambda\bm{x},\lambda'\bm{x}'\right)}{\left(\lambda\lambda'\right)^{n^L}}=0$ as this is satisfied by all components of the finite sum individually.

\end{proof}

\begin{restatable}[]{definition}{doublyhomogeneous}
\label{definition:doublyhomogeneous}
    A function of two variables $\Theta:\mathbb{R}^d\times\mathbb{R}^d\to\mathbb{R}$ is doubly asymptotically positive $n$-homogeneous if

    \begin{align*}
        \lim_{\lambda,\lambda'\to\infty}\frac{\Theta\left(\lambda\bm{x},\lambda'\bm{x}'\right)}{\left(\lambda\lambda'\right)^n} &= \hat{\hat\Theta}\left(\bm{x},\bm{x}'\right) \hspace{8pt}\forall\bm{x},\bm{x}'\in\mathbb{R}^d,
    \end{align*}

    where $\hat{\hat\Theta}\left(\bm{x},\bm{x}'\right)$ is positive $n$-homogeneous in both arguments, that is, $\hat{\hat\Theta}\left(\lambda\bm{x},\lambda'\bm{x}'\right)=\lambda\lambda'\hat{\hat\Theta}\left(\bm{x},\bm{x}'\right) \hspace{4pt}\forall\bm{x},\bm{x}'\in\mathbb{R}^d,\lambda,\lambda'>0$.
    
\end{restatable}

\begin{restatable}[]{corollary}{nhomogeneous_c}
\label{corollary:nhomogeneous_c} 
    The NTK of a neural network with fully-connected layers and nonlinearities satisfying Assumption \ref{assumption:activation_1} and Assumption \ref{assumption:activation_3}, is itself doubly asymptotically positive $n^L$-homogeneous. That is, $\forall\bm{x},\bm{x}'\in\mathbb{R}^{n_0}$,

    \begin{equation*}
        \lim_{\lambda,\lambda'\to\infty}\frac{\Theta\left(\lambda\bm{x},\lambda'\bm{x}'\right)}{\left(\lambda\lambda'\right)^{n^L}}=\hat{\hat{\Theta}}\left(\bm{x},\bm{x}'\right)\hspace{8pt}\forall\bm{x},\bm{x}'\in\mathbb{R}^{n_0}
    \end{equation*}

    where $\hat{\hat{\Theta}}\left(\bm{x},\bm{x}'\right)$ is positive $n^L$-homogeneous with respect to both arguments, that is,

    \begin{align*}
        \hat{\hat{\Theta}}\left(\lambda\bm{x},\bm{x}'\right) &= \hat{\hat{\Theta}}\left(\bm{x},\lambda\bm{x}'\right) = \lambda^{n^L}\hat{\hat{\Theta}}\left(\bm{x},\bm{x}'\right) \hspace{8pt}\forall\bm{x},\bm{x}'\in\mathbb{R}^{n_0},\lambda>0, \hspace{8pt}\textrm{and} \\
        \hat{\hat{\Theta}}\left(\lambda\bm{x},\lambda'\bm{x}'\right)&=\left(\lambda\lambda'\right)^{n^L}\hat{\hat{\Theta}}\left(\bm{x},\bm{x}'\right)\hspace{8pt}\forall\bm{x},\bm{x}'\in\mathbb{R}^{n_0},\lambda,\lambda'>0.
    \end{align*}

\end{restatable}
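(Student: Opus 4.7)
My plan is to deduce Corollary \ref{corollary:nhomogeneous_c} as a relatively direct consequence of Proposition \ref{prop:fully_connected_kernel}, which has already expressed the NTK in the decomposition
\begin{equation*}
\Theta(\bm{x},\bm{x}') = \left(\tfrac{1}{n_0}\lVert\bm{x}\rVert\lVert\bm{x}'\rVert\right)^{n^L}\bar\kappa(\bm{x},\bm{x}') + \bar R(\bm{x},\bm{x}').
\end{equation*}
First I would substitute $\lambda\bm{x}$ and $\lambda'\bm{x}'$ and divide through by $(\lambda\lambda')^{n^L}$. The magnitude prefactor factorises cleanly, the remainder term $\bar R(\lambda\bm{x},\lambda'\bm{x}')/(\lambda\lambda')^{n^L}$ tends to zero by the asymptotic-decay property established in Proposition \ref{prop:fully_connected_kernel}, so the whole limit is governed by the behaviour of $\bar\kappa(\lambda\bm{x},\lambda'\bm{x}')$ as $\lambda,\lambda'\to\infty$.

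The crucial step is therefore to show that $\lim_{\lambda,\lambda'\to\infty}\bar\kappa(\lambda\bm{x},\lambda'\bm{x}')$ exists and depends only on the directions of $\bm{x}$ and $\bm{x}'$, i.e. is positive $0$-homogeneous in both arguments. I would do this by induction on $h$, using the explicit form of $\bar\kappa^{(h)}$ in Equation \ref{eq:kappa_bar_h} together with the definition of $\rho^{(h)}$ in Equation \ref{eq:rho_h_definition}. The base case is that $\rho^{(0)}(\lambda\bm{x},\lambda'\bm{x}') = \mathcal{S}(\lambda\bm{x},\lambda'\bm{x}';\sigma_b^2) \to \cos(\angle(\bm{x},\bm{x}'))$, since the additive $\sigma_b^2$ terms become negligible relative to the $\lambda\lambda'$ and $\lambda^2$ inner products. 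The inductive step uses that $\Sigma^{(h)}(\bm{x},\bm{x}') = \left(\tfrac{1}{n_0}\lVert\bm{x}\rVert\lVert\bm{x}'\rVert\right)^{n^h}\kappa(\rho^{(h-1)}(\bm{x},\bm{x}')) + R^{(h)}(\bm{x},\bm{x}')$ and the analogous diagonal formula from Equation \ref{eq:sigma_h_final}, so forming the ratio $\rho^{(h)}(\lambda\bm{x},\lambda'\bm{x}')$ cancels the explicit magnitude prefactors and sends the $R^{(h)}$ remainders to zero, leaving a limit that depends only on $\rho^{(h-1)}$ evaluated at the direction. Continuity of $\kappa(\cdot)$ and $\dot\kappa(\cdot)$ on $[-1,1]$ then passes the limit through the finite product-and-sum in Equations \ref{eq:kappa_bar_h} and \ref{eq:kappa_bar}, giving a well-defined $\hat\kappa_\infty(\bm{x},\bm{x}')$ depending only on $\bm{x}/\lVert\bm{x}\rVert$ and $\bm{x}'/\lVert\bm{x}'\rVert$.

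Putting this together yields $\hat{\hat\Theta}(\bm{x},\bm{x}') = \left(\tfrac{1}{n_0}\lVert\bm{x}\rVert\lVert\bm{x}'\rVert\right)^{n^L}\hat\kappa_\infty(\bm{x},\bm{x}')$. Verifying positive $n^L$-homogeneity in each argument is then immediate: scaling $\bm{x}\mapsto\mu\bm{x}$ for $\mu>0$ multiplies $\lVert\bm{x}\rVert^{n^L}$ by $\mu^{n^L}$ and leaves $\hat\kappa_\infty$ invariant because it is a function of directions only, and the joint scaling follows by applying this argument twice. This shows $\hat{\hat\Theta}$ is positive $n^L$-homogeneous in both arguments as required by Definition \ref{definition:doublyhomogeneous}, completing the proof.

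The main obstacle I anticipate is the clean bookkeeping of the induction on $\rho^{(h)}$, specifically checking that the $R^{(h)}$ remainders contribute vanishingly to the ratio at each layer. The key subtlety is that $\rho^{(h)}$ involves a ratio whose denominator $\sqrt{\Sigma^{(h)}(\bm{x},\bm{x})\Sigma^{(h)}(\bm{x}',\bm{x}')}$ grows as $(\lambda\lambda')^{n^h}$ while the $R^{(h)}$ terms grow strictly slower; this is exactly the content of the asymptotic property of $R^{(h)}$ already derived in the proof of Proposition \ref{prop:fully_connected_kernel}, but one must apply it carefully at both the numerator and denominator of each $\rho^{(h)}$. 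Once this is handled, everything else is routine limit-passing through continuous finite compositions.
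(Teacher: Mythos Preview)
Your proposal is correct and follows essentially the same route as the paper's own proof: both start from the decomposition of Proposition \ref{prop:fully_connected_kernel}, kill the remainder $\bar R$ term using its asymptotic decay, and then establish by induction on $h$ that $\rho^{(h)}(\lambda\bm{x},\lambda'\bm{x}')$ converges to a quantity depending only on the directions $\hat{\bm{x}},\hat{\bm{x}}'$ (with base case the soft-cosine collapsing to the ordinary cosine), after which continuity of $\kappa,\dot\kappa$ pushes the limit through the finite sum-of-products defining $\bar\kappa$. The only point the paper treats that you do not mention is the degenerate case $\bm{x}=\bm{0}$ (or $\bm{x}'=\bm{0}$), which it handles separately; you may wish to note this edge case, but otherwise your plan and the paper's argument coincide.
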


\begin{proof}
    From Proposition \ref{prop:fully_connected_kernel}, we have that

    \begin{align*}
        \lim_{\lambda,\lambda'\to\infty}\frac{\Theta\left(\bm{x},\lambda'\bm{x}'\right)}{\left(\lambda\lambda'\right)^{n^L}} &=\lim_{\lambda,\lambda'\to\infty}\frac{\left(\frac{1}{n_0}\left\lVert\lambda\bm{x}\right\rVert\left\lVert\lambda'\bm{x}'\right\rVert\right)^{n^L}\bar\kappa\left(\lambda\bm{x},\lambda'\bm{x}';\sigma_b^2\right)}{\left(\lambda\lambda'\right)^{n^L}}+\frac{\bar{R}\left(\lambda\bm{x},\lambda'\bm{x}';\sigma_b^2\right)}{\left(\lambda\lambda'\right)^{n^L}} \\
        &= \lim_{\lambda,\lambda'\to\infty}\frac{\left(\lambda\lambda'\right)^{n^L}}{\left(\lambda\lambda'\right)^{n^L}}\left(\frac{1}{n_0}\left\lVert\bm{x}\right\rVert\left\lVert\bm{x}'\right\rVert\right)^{n^L}\bar\kappa\left(\lambda\bm{x},\lambda'\bm{x}';\sigma_b^2\right) \\
        &= \left(\frac{1}{n_0}\left\lVert\bm{x}\right\rVert\left\lVert\bm{x}'\right\rVert\right)^{n^L}\left(\lim_{\lambda,\lambda'\to\infty}\bar\kappa\left(\lambda\bm{x},\lambda'\bm{x}';\sigma_b^2\right)\right)  \hspace{8pt} \forall\bm{x},\bm{x}'\in\mathbb{R}^{n_0}.
    \end{align*}

    We thus define

    \begin{equation} \label{eq:Theta_hat_hat}
        \hat{\hat{\Theta}}\left(\bm{x},\bm{x}'\right) =  \lim_{\lambda,\lambda'\to\infty}\frac{\Theta\left(\bm{x},\lambda'\bm{x}'\right)}{\left(\lambda\lambda'\right)^{n^L}}
    \end{equation}

    Now, note from Equation and \ref{eq:kappa_bar_h} Equation\ref{eq:kappa_bar} that $\bar\kappa\left(\bm{x},\bm{x}'\right)$ is composed of a summation of products of compositions of $\kappa(\cdot)$, and $\dot\kappa(\cdot)$, which both only act on pre-activation correlations, which in turn are given in Equation \ref{eq:rho_h_definition} by $\rho^{(h)}\left(\bm{x},\bm{x}'\right)=\frac{\Sigma^{(h)}\left(\bm{x},\bm{x}'\right)}{\sqrt{\Sigma^{(h)}(\bm{x},\bm{x})\Sigma^{(h)}\left(\bm{x}',\bm{x}'\right)}}$. In the limiting case,
    
    \begin{align*}
        \lim_{\lambda,\lambda'\to\infty}\rho^{(h)}(\lambda\bm{x},\lambda'\bm{x}') &= \lim_{\lambda,\lambda'\to\infty}\frac{\Sigma^{(h)}(\lambda\bm{x},\lambda'\bm{x}')}{\sqrt{\Sigma^{(h)}(\lambda\bm{x},\lambda\bm{x})\Sigma^{(h)}\left(\lambda'\bm{x}',\lambda'\bm{x}'\right)}} \\
        &= \lim_{\lambda,\lambda'\to\infty}\frac{\left(\frac{1}{n_0}\left\lVert\lambda\bm{x}\right\rVert\left\lVert\lambda'\bm{x}'\right\rVert\right)^{n^{h}}\kappa\left(\rho^{(h-1)}(\lambda\bm{x},\lambda'\bm{x}')\right) + R^{(h)}(\lambda\bm{x},\lambda'\bm{x}')}{\sqrt{\left(\left(\frac{1}{n_0}\left\lVert\lambda\bm{x}\right\rVert\right)^{2n^h} + R^{(h)}(\lambda\bm{x},\lambda\bm{x})\right)\left(\left(\frac{1}{n_0}\left\lVert\lambda'\bm{x}'\right\rVert\right)^{2n^h} + R^{(h)}(\lambda'\bm{x}',\lambda'\bm{x}')\right)}} \\
        &= \lim_{\lambda,\lambda'\to\infty}\frac{\left(\lambda\lambda'\right)^{n^h}\left(\frac{1}{n_0}\right)^{n^h}\left(\left\lVert\bm{x}\right\rVert\left\lVert\bm{x}'\right\rVert\right)^{n^{h}}\kappa\left(\rho^{(h-1)}(\lambda\bm{x},\lambda'\bm{x}')\right) + R^{(h)}(\lambda\bm{x},\lambda'\bm{x}')}{\left(\lambda\lambda'\right)^{n^L}\left(\frac{1}{n_0}\right)^{n^h}\sqrt{\left(\left\lVert\bm{x}\right\rVert^{2n^h} + n_0^{2n^h}\frac{R^{(h)}(\lambda\bm{x},\lambda\bm{x})}{\lambda^{2n^h}}\right)\left(\left\lVert\bm{x}'\right\rVert^{2n^h} + n_0^{2n^h}\frac{R^{(h)}(\lambda'\bm{x}',\lambda'\bm{x}')}{\left(\lambda'\right)^{2n^h}}\right)}} \\
        &= \lim_{\lambda,\lambda'\to\infty}\frac{\left(\left\lVert\bm{x}\right\rVert\left\lVert\bm{x}'\right\rVert\right)^{n^{h}}\kappa\left(\rho^{(h-1)}(\lambda\bm{x},\lambda'\bm{x}')\right)}{\sqrt{\left(\left\lVert\bm{x}\right\rVert\left\lVert\bm{x}'\right\rVert\right)^{2n^h}}}+\frac{R^{(h)}(\lambda\bm{x},\lambda'\bm{x}')}{\left(\lambda\lambda'\right)^{n^h}} \\
        &= \frac{\left(\left\lVert\bm{x}\right\rVert\left\lVert\bm{x}'\right\rVert\right)^{n^{h}}}{\left(\left\lVert\bm{x}\right\rVert\left\lVert\bm{x}'\right\rVert\right)^{n^{h}}}\left(\lim_{\lambda,\lambda'\to\infty}\kappa\left(\rho^{(h-1)}(\lambda\bm{x},\lambda'\bm{x}')\right)
        \right) \\
        &= \kappa\left(\lim_{\lambda,\lambda'\to\infty}\rho^{(h-1)}(\lambda\bm{x},\lambda'\bm{x}')\right)
    \end{align*}

    which follows similar reasoning to that used in the proof of Proposition \ref{prop:fully_connected_kernel}, with the final transition assuming $\kappa(\cdot)$ is continuous on $[-1,1]$ and the limit for $\rho^{(h-1)}$ exists. Now, the initial limit correlation is given by

    \begin{align*}
        \lim_{\lambda,\lambda'\to\infty}\rho^{(0)}(\lambda\bm{x},\lambda'\bm{x}') &= \lim_{\lambda,\lambda'\to\infty}\mathcal{S}\left(\lambda\bm{x},\lambda'\bm{x}';\sigma_b^2\right)\\ &= 
        \lim_{\lambda,\lambda'\to\infty}\frac{\lambda\lambda'\frac{1}{n_0}\bm{x}^\top\bm{x}'+\sigma_b^2}{\sqrt{\left(\frac{1}{n_0}\left\lVert\lambda\bm{x}\right\rVert^2+\sigma_b^2\right)\left(\frac{1}{n_0}\left\lVert\lambda'\bm{x}'\right\rVert^2+\sigma_b^2\right)}} \\
        &= \lim_{\lambda,\lambda'\to\infty}\frac{\lambda\lambda'\bm{x}^\top\bm{x}'}{\lambda\lambda'\left\lVert\bm{x}\right\rVert\left\lVert\bm{x}'\right\rVert}+\frac{n_0\sigma_b^2}{\lambda\lambda'\left\lVert\bm{x}\right\rVert\left\lVert\bm{x}'\right\rVert} \\
        &= \hat{\bm{x}}^\top\hat{\bm{x}}' \hspace{8pt}\forall\bm{x},\bm{x}'\in\mathbb{R}^{n_0} \setminus \bm{0}
    \end{align*}

    where $\hat{\bm{x}}$ and $\hat{\bm{x}}'$ are unit vectors in the directions of $\bm{x}$ and $\bm{x}'$ respectively. Note that when $\bm{x}=\bm{x}'=\bm{0}$, this expression is instead given by

    \begin{align*}
        \rho^{(0)}(\bm{0},\bm{0}) &= \frac{\sigma_b^2}{\sqrt{\left(\sigma_b^2\right)\left(\sigma_b^2\right)}} \\
        &= 1.
    \end{align*}

    In either case, the limit exists and depends only on $\hat{\bm{x}}$ and $\hat{\bm{x}}'$ (which if are equal to $\bm{0}$ indicate the adoption of the alternative case). Hence by induction, $\lim_{\lambda,\lambda'\to\infty}\rho^{(h)}\left(\bm{x},\bm{x}'\right)$ exists $\forall h=0,\dots,L$ and depends only on $\hat{\bm{x}}$ and $\hat{\bm{x}}'$. To solidify notation, we define
    
    \begin{align}
        \hat{\hat{\rho}}^{(h)}\left(\hat{\bm{x}},\hat{\bm{x}}'\right) &= \lim_{\lambda,\lambda'\to\infty}\rho^{(h)}\left(\lambda\bm{x},\lambda'\bm{x}'\right) \\
        \begin{split} \label{recursion:rho_hat_hat}
        \hat{\hat{\rho}}^{(h+1)}\left(\hat{\bm{x}},\hat{\bm{x}}'\right) &= \kappa\left(\hat{\hat{\rho}}^{(h)}\left(\hat{\bm{x}},\hat{\bm{x}}'\right)\right) \\
        \hat{\hat{\rho}}^{(0)}\left(\hat{\bm{x}},\hat{\bm{x}}'\right) &= \begin{cases}
            \hat{\bm{x}}^\top\hat{\bm{x}}' & \hat{\bm{x}},\hat{\bm{x}}'\neq0 \\
            1 & \hat{\bm{x}},\hat{\bm{x}}'=0
        \end{cases},
        \end{split}
    \end{align}
    
    where $\hat{\bm{x}}$ and $\hat{\bm{x}}'$ are unit vectors in the direction of $\bm{x}$ and $\bm{x}'$ respectively. By identical logic, the same can be said about $\lim_{\lambda,\lambda'\to\infty}\dot\kappa\left(\rho^{(h-1)}(\lambda\bm{x},\lambda'\bm{x}')\right)$. As both $\kappa$ and $\dot\kappa$ have range $\subseteq[-1,1]$, this limiting behaviour carries over to 
    
    \begin{align}
        \hat{\hat{\bar{\kappa}}}\left(\hat{\bm{x}},\hat{\bm{x}}'\right) &= \lim_{\lambda,\lambda'\to\infty}\bar{\kappa}\left(\lambda\bm{x},\lambda'\bm{x}'\right).
    \end{align}
    
    Thus,

    \begin{align*}
        \hat{\hat{\Theta}}\left(\lambda\bm{x},\lambda'\bm{x}';\sigma_b^2\right) &= \left(\frac{1}{n_0}\left\lVert\lambda\bm{x}\right\rVert\left\lVert\lambda'\bm{x}'\right\rVert\right)^{n^L}\hat{\hat{\bar{\kappa}}}\left(\hat{\lambda\bm{x}},\hat{\lambda'\bm{x}'}\right) \\
        &= \left(\lambda\lambda'\right)^{n^L}\left(\frac{1}{n_0}\left\lVert\bm{x}\right\rVert\left\lVert\bm{x}'\right\rVert\right)^{n^L}\hat{\hat{\bar{\kappa}}}\left(\hat{\bm{x}},\hat{\bm{x}}'\right) \\
        &= \left(\lambda\lambda'\right)^{n^L}\hat{\hat{\Theta}}\left(\bm{x},\bm{x}'\right) \hspace{8pt}\forall\bm{x},\bm{x}'\in\mathbb{R}^{n_0},\lambda,\lambda'>0,
    \end{align*}

    where the $\hat{\lambda\bm{x}}$ and $\hat{\lambda'\bm{x}'}$ are abuses of notation referencing unit vectors in the direction of $\lambda\bm{x}$ and $\lambda'\bm{x}'$ respectively, and the penultimate transition follows from noting that for nonzero $\bm{x}$ and $\lambda$, $\hat{\lambda\bm{x}}$ and $\hat{\lambda'\bm{x}'}$ do not depend on $\lambda$ or $\lambda'$ for $\lambda,\lambda'>0$, and in the case where either are zero the change in behaviour is hidden by a multiplication by zero. Hence, $\hat{\hat{\Theta}}\left(\bm{x},\bm{x}'\right)$ is positive $n^L$-homogeneous with respect to its first argument by definition, and also with respect to the second argument by symmetry. Therefore $\Theta\left(\bm{x},\bm{x}'\right)$ is doubly asymptotically positive $n^L$-homogeneous by definition.

\end{proof}

\begin{restatable}[]{corollary}{ReLUnhomogeneous}
\label{corollary:ReLUnhomogeneous} 
    The NTK of a neural network with fully-connected layers and positive $n$-homogeneous nonlinearities satisfying Assumption \ref{assumption:activation_1}, is itself asymptotically positive $n^L$-homogeneous with respect to both arguments. That is, $\forall\bm{x},\bm{x}'\in\mathbb{R}^{n_0}$,

    \begin{equation*}
        \lim_{\lambda\to\infty}\frac{\Theta\left(\lambda\bm{x},\bm{x}'\right)}{\lambda^{n^L}}=\lim_{\lambda\to\infty}\frac{\Theta\left(\bm{x}',\lambda\bm{x}\right)}{\lambda^{n^L}}=\hat{\Theta}\left(\bm{x},\bm{x}'\right)\hspace{8pt}\forall\bm{x},\bm{x}'\in\mathbb{R}^{n_0}
    \end{equation*}

    where $\hat{\Theta}\left(\bm{x},\bm{x}'\right)$ is positive $n^L$-homogeneous with respect to its first argument, that is,

    \begin{align*}
        \hat{\Theta}\left(\lambda\bm{x},\bm{x}'\right) &= \lambda^{n^L}\hat{\Theta}\left(\bm{x},\bm{x}'\right) \hspace{8pt}\forall\bm{x},\bm{x}'\in\mathbb{R}^{n_0},\lambda>0.
    \end{align*}

\end{restatable}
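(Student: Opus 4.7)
The plan is to deduce single-argument asymptotic homogeneity from the structural form of the NTK established in Proposition \ref{prop:fully_connected_kernel}, exploiting that strict positive $n$-homogeneity is a substantially stronger condition than its asymptotic analogue. By Corollary \ref{corollary:nhomogeneous_cequivalence} the hypothesis on $\phi$ places us in the scope of Proposition \ref{prop:fully_connected_kernel}, so $\Theta(\bm{x},\bm{x}')=(\tfrac{1}{n_0}\|\bm{x}\|\|\bm{x}'\|)^{n^L}\bar\kappa(\bm{x},\bm{x}')+\bar R(\bm{x},\bm{x}')$. Crucially, under exact positive $n$-homogeneity the decomposition $\phi=\hat\phi+\tilde\phi$ used in Lemma \ref{lemma:sigma_propagation} collapses to $\tilde\phi\equiv 0$, so the remainders $R$ and $\dot R$ of that lemma vanish identically. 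The only source of sub-leading corrections propagating through the recursions for $\Sigma^{(h)}$ and $\dot\Sigma^{(h)}$ is then the additive $\sigma_b^2$ introduced at each layer.

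First I would run an induction on the layer index $h$ tracking, in $\lambda$, the scalings of $\Sigma^{(h)}(\lambda\bm{x},\bm{x}')$, $\Sigma^{(h)}(\lambda\bm{x},\lambda\bm{x})$ and $\Sigma^{(h)}(\bm{x}',\bm{x}')$. At $h=0$ we have $\Sigma^{(0)}(\lambda\bm{x},\bm{x}')=\tfrac{\lambda}{n_0}\bm{x}^\top\bm{x}'+\sigma_b^2$ of leading order $\lambda^1$, $\Sigma^{(0)}(\lambda\bm{x},\lambda\bm{x})$ of leading order $\lambda^2$, and $\Sigma^{(0)}(\bm{x}',\bm{x}')$ independent of $\lambda$, so $\rho^{(0)}(\lambda\bm{x},\bm{x}')$ converges to a finite limit depending only on the unit direction $\hat{\bm{x}}:=\bm{x}/\|\bm{x}\|$ and on $\bm{x}'$. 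Invoking the recursion (Equation \ref{eq:sigma_h+1} with $R\equiv 0$), $\Sigma^{(h+1)}(\lambda\bm{x},\bm{x}')=(\Sigma^{(h)}(\lambda\bm{x},\lambda\bm{x})\Sigma^{(h)}(\bm{x}',\bm{x}'))^{n/2}\kappa(\rho^{(h)}(\lambda\bm{x},\bm{x}'))+\sigma_b^2$, inductively $\Sigma^{(h)}(\lambda\bm{x},\bm{x}')$ scales as $\lambda^{n^h}$ with a coefficient depending only on $\hat{\bm{x}}$ and $\bm{x}'$, and $\rho^{(h)}(\lambda\bm{x},\bm{x}')$ converges to a finite limit $\hat{\hat\rho}^{(h)}(\hat{\bm{x}},\bm{x}')$. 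Applying Lemma \ref{lemma:derivative_asymptotic_homogeneity} and Equation \ref{eq:sigma_dot_expanded} yields the analogous conclusion for $\dot\Sigma^{(h)}(\lambda\bm{x},\bm{x}')$, scaling as $\lambda^{n^{h+1}-n^h}$. Substituting these into Equation \ref{eq:NTK_layer_h} and summing via Equation \ref{eq:complete_NTK}, every layerwise contribution to $\Theta(\lambda\bm{x},\bm{x}')$ scales uniformly as $\lambda^{n^L}$ with a finite direction-dependent coefficient, whence
\begin{equation*}
\hat\Theta(\bm{x},\bm{x}'):=\lim_{\lambda\to\infty}\frac{\Theta(\lambda\bm{x},\bm{x}')}{\lambda^{n^L}}
\end{equation*}
exists. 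Positive $n^L$-homogeneity in the first argument, $\hat\Theta(\mu\bm{x},\bm{x}')=\mu^{n^L}\hat\Theta(\bm{x},\bm{x}')$, then follows because $\widehat{\mu\bm{x}}=\hat{\bm{x}}$ for $\mu>0$ and the only $\bm{x}$-dependence outside $\hat{\bm{x}}$ enters through a leading factor $\|\bm{x}\|^{n^L}$. The analogous statement upon scaling the second argument is immediate by the symmetry $\Theta(\bm{x},\bm{x}')=\Theta(\bm{x}',\bm{x})$.

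The main obstacle I anticipate is the bookkeeping of the sub-leading $\sigma_b^2$ additions that accumulate through the recursion: a $\sigma_b^2$ introduced at layer $h$ is subsequently raised to fractional powers via the $(\cdot)^{n/2}$ operation at later layers, producing a cascade of intermediate-order terms in $\lambda$. However, because each such term strictly lowers the $\lambda$-power below $n^L$, division by $\lambda^{n^L}$ suppresses all of them in the limit. This is the single-variable analogue of the manipulations in the proof of Corollary \ref{corollary:nhomogeneous_c}, but works here only because strict positive $n$-homogeneity removes the non-$\sigma_b^2$ remainders entirely; under merely asymptotic homogeneity one cannot rule out remainder contributions that grow as $\lambda^{n^L}$ with direction-dependent and hence non-cancelling coefficients when only one argument is scaled.
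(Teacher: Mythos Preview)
Your proposal is correct and follows essentially the same route as the paper's proof: both exploit that exact positive $n$-homogeneity makes $\tilde\phi\equiv 0$, so the remainders $R$, $\dot R$ from Lemma \ref{lemma:sigma_propagation} vanish identically, leaving only the additive $\sigma_b^2$ corrections; both then track the single-argument scaling of $\Sigma^{(h)}(\lambda\bm{x},\cdot)$ and $\rho^{(h)}(\lambda\bm{x},\bm{x}')$ through the layer recursion to obtain a limit depending only on $\hat{\bm{x}}$ and $\bm{x}'$. The only organisational difference is that the paper isolates and inductively kills the remainder $R^{(h)}(\lambda\bm{x},\bm{x}')/\lambda^{n^h}\to 0$ via Recursion \ref{recursion:R_h} (now simplified by $R\equiv 0$), whereas you work directly with the leading-order scaling of $\Sigma^{(h)}$ itself; these are equivalent bookkeeping choices, and your closing paragraph correctly identifies why the $\sigma_b^2$ cascade is harmless.
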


\begin{proof}
    Consider Lemma \ref{lemma:sigma_propagation} for the case of positive $n$-homogeneous $\phi(\cdot)$. Clearly, with reference to Equation \ref{eq:phi_decomposition}, the non-homogeneous part $\tilde\phi(x)=0\hspace{4pt}\forall x\in\mathbb{R}$, and as such $\phi(x)=\hat\phi(x)\hspace{4pt}\forall x\in\mathbb{R}$. It then follows from application of these conditions to Equation $\ref{eq:R_tilde}$ that

    \begin{equation}
        \tilde{R}(x,y;\sigma_u,\sigma_b,\rho) = 0\hspace{8pt}\forall x,y\in \mathbb{R},\sigma_u,\sigma_v>0,\rho\in[-1,1],
    \end{equation}

    and thus from Equation \ref{eq:R_hat}, Equation \ref{eq:phi_cross_expectation}, and Equation \ref{eq:R},

    \begin{align}
        \hat{R}\left(\sigma_u,\sigma_v,\rho\right) &= 0 &\forall\sigma_u,\sigma_v>0,\rho\in[-1,1] \\
        \mathbb{E}_{(x,y)}\left[\phi(\sigma_ux)\phi(\sigma_vy)\right] &= (\sigma_u\sigma_v)^n\hat\kappa(\rho) &\forall\sigma_u,\sigma_v>0,\rho\in[-1,1] \\
        R\left(\sigma_u,\sigma_v,\rho\right) &= 0 &\forall\sigma_u,\sigma_v>0,\rho\in[-1,1]
    \end{align}

    We now consider Proposition \ref{prop:fully_connected_kernel} for the case of positive $n$-homogeneous $\phi(\cdot)$. Recall the recursion for $R^{(h)}\left(\bm{x},\bm{x}'\right)$, Recursion \ref{recursion:R_h}:

    \begin{align*}
        R^{(h+1)}\left(\bm{x},\bm{x}'\right) =& \left(\sum_{k=0}^{n-1}\binom{n}{k}\left(\frac{1}{n_0}\left\lVert\bm{x}\right\rVert\left\lVert\bm{x}'\right\rVert\right)^{kn^{h}}R^{(h)}\left(\bm{x},\bm{x}'\right)^{(n-k)}\right) \\ &+ R\left(\sqrt{\frac{1}{n_0}\left\lVert\bm{x}\right\rVert^{2n^{h}} + R^{(h)}(\bm{x},\bm{x})},\sqrt{\frac{1}{n_0}\left\lVert\bm{x}'\right\rVert^{2n^{h}} + R^{(h)}\left(\bm{x}',\bm{x}'\right)},\rho^{(h)}\left(\bm{x},\bm{x}'\right)\right)+\sigma_b^2 \\
        &= \left(\sum_{k=0}^{n-1}\binom{n}{k}\left(\frac{1}{n_0}\left\lVert\bm{x}\right\rVert\left\lVert\bm{x}'\right\rVert\right)^{kn^{h}}R^{(h)}\left(\bm{x},\bm{x}'\right)^{(n-k)}\right) + \sigma_b^2,\\ 
        R^{(0)} =& \sigma_b^2.
    \end{align*}

    Under these conditions,

    \begin{align*}
        \lim_{\lambda\to\infty}\frac{R^{(h+1)}\left(\lambda\bm{x},\bm{x}'\right)}{\lambda^{n^{h+1}}} &= \lim_{\lambda\to\infty}\frac{\left(\sum_{k=0}^{n-1}\binom{n}{k}\left(\frac{1}{n_0}\left\lVert\lambda\bm{x}\right\rVert\left\lVert\bm{x}'\right\rVert\right)^{kn^{h}}R^{(h)}\left(\lambda\bm{x},\bm{x}'\right)^{(n-k)}\right) + \sigma_b^2}{\lambda^{n^{h+1}}} \\
        &= \lim_{\lambda\to\infty}\sum_{k=0}^{n-1}\binom{n}{k}\left(\frac{1}{n_0}\left\lVert\bm{x}\right\rVert\left\lVert\bm{x}'\right\rVert\right)^{kn^h}\frac{R^{(h)}\left(\lambda\bm{x},\bm{x}'\right)^{(n-k)}}{\lambda^{n^h(n-k)}} \\
        &= \sum_{k=0}^{n-1}\binom{n}{k}\left(\frac{1}{n_0}\left\lVert\bm{x}\right\rVert\left\lVert\bm{x}'\right\rVert\right)^{kn^h} \lim_{\lambda\to\infty}\left(\frac{R^{(h)}\left(\lambda\bm{x},\bm{x}'\right)}{\lambda^{n^h}}\right)^{(n-k)} \\
        &= \sum_{k=0}^{n-1}\binom{n}{k}\left(\frac{1}{n_0}\left\lVert\bm{x}\right\rVert\left\lVert\bm{x}'\right\rVert\right)^{kn^h} \left(\lim_{\lambda\to\infty}\frac{R^{(h)}\left(\lambda\bm{x},\bm{x}'\right)}{\lambda^{n^h}}\right)^{(n-k)},
    \end{align*}

    where the final transition follows from direct application of the algebraic limit theorem. Considering the case of $n=0$;

    \begin{align*}
        \lim_{\lambda\to\infty}\frac{R^{(0)}\left(\lambda\bm{x},\bm{x}'\right)}{\lambda^{n^0}} &= \lim_{\lambda\to\infty}\frac{\sigma_b^2}{\lambda} \\
        &= 0\hspace{8pt}\forall\bm{x},\bm{x}'\in\mathbb{R}^{n_0},
    \end{align*}

    hence by induction;

    \begin{align*}
        \lim_{\lambda\to\infty}\frac{R^{(h)}\left(\lambda\bm{x},\bm{x}'\right)}{\lambda^{n^{h}}} &= 0 \hspace{8pt}\forall\bm{x},\bm{x}'\in\mathcal{X,}h=0,\dots L.
    \end{align*}

    By identical derivation,

    \begin{align*}
        \lim_{\lambda\to\infty}\frac{\dot{R}^{(h)}\left(\lambda\bm{x},\bm{x}'\right)}{\lambda^{n^{h}-n^{h-1}}} &= 0 \hspace{8pt}\forall\bm{x},\bm{x}'\in\mathbb{R}^{n_0},h=0,\dots L.
    \end{align*}

    Again considering the contribution of the $h$th layer of the NTK and collecting lower order terms into $\bar{R}^{(h)}\left(\bm{x},\bm{x}'\right)$, it is clear that

    \begin{align*}
        \lim_{\lambda\to\infty}\frac{\bar{R}^{(h)}\left(\lambda\bm{x},\bm{x}'\right)}{\lambda^{n^L}} &= 0 \hspace{8pt} \forall\bm{x},\bm{x}'\in\mathbb{R}^{n_0}, h=0,\dots,L.
    \end{align*}

    Note that this a stronger statement than the equivalent in the proof of Proposition \ref{prop:fully_connected_kernel}, and owes to the fact that under these stronger assumptions $R^{(h)}$ vanishes under scaling in one argument only, rather than requiring both as was previously the case. It follows that

    \begin{align*}
        \lim_{\lambda\to\infty}\frac{\bar{R}\left(\lambda\bm{x},\bm{x}';\sigma_b^2\right)}{\lambda^{n^L}} &= \lim_{\lambda\to\infty}\frac{\sum_{h=0}^LR^{(h)}\left(\lambda\bm{x},\bm{x}'\right)}{\lambda^{n^L}} \\
        &= \sum_{h=0}^L\lim_{\lambda\to\infty}\frac{R^{(h)}\left(\lambda\bm{x},\bm{x}'\right)}{\lambda^{n^L}} \\
        &= 0\hspace{8pt}\forall\bm{x},\bm{x}'\in\mathbb{R}^{n_0}.
    \end{align*}

    Following now the proof of Corollary \ref{corollary:nhomogeneous_c};

    \begin{align*}
        \lim_{\lambda\to\infty}\frac{\Theta\left(\lambda\bm{x},\bm{x}'\right)}{\lambda^{n^L}} &= \lim_{\lambda\to\infty}\frac{\left(\frac{1}{n_0}\left\lVert\lambda\bm{x}\right\rVert\left\lVert\bm{x}'\right\rVert\right)^{n^L}\bar{\kappa}\left(\lambda\bm{x},\bm{x}';\sigma_b^2\right)}{\lambda^{n^L}} + \frac{R^{(h)}\left(\lambda\bm{x},\bm{x}'\right)}{\lambda^{n^L}} \\
        &= \lim_{\lambda\to\infty}\frac{{\lambda^{n^L}}}{{\lambda^{n^L}}}\left(\frac{1}{n_0}\left\lVert\bm{x}\right\rVert\left\lVert\bm{x}'\right\rVert\right)^{n^L}\bar{\kappa}\left(\lambda\bm{x},\bm{x}';\sigma_b^2\right) \\
        &= \left(\frac{1}{n_0}\left\lVert\bm{x}\right\rVert\left\lVert\bm{x}'\right\rVert\right)^{n^L}\left(\lim_{\lambda\to\infty}\bar{\kappa}\left(\lambda\bm{x},\bm{x}';\sigma_b^2\right)\right).
    \end{align*}

    For convenience, we define the single-argument limit NTK

    \begin{equation}
        \hat{\Theta}\left(\bm{x},\bm{x}'\right) = \lim_{\lambda\to\infty}\frac{\Theta\left(\lambda\bm{x},\bm{x}'\right)}{\lambda^{n^L}}.
    \end{equation}

    As before with reference to Equation \ref{eq:rho_h_definition}, considering the single-argument limiting case for the layer-$h$ correlation,

    \begin{align*}
        \lim_{\lambda\to\infty}\rho^{(h)}(\lambda\bm{x},\bm{x}') &= \lim_{\lambda\to\infty}\frac{\Sigma^{(h)}(\lambda\bm{x},\bm{x}')}{\sqrt{\Sigma^{(h)}(\lambda\bm{x},\lambda\bm{x})\Sigma^{(h)}\left(\bm{x}',\bm{x}'\right)}} \\
        &= \lim_{\lambda\to\infty}\frac{\left(\frac{1}{n_0}\left\lVert\lambda\bm{x}\right\rVert\left\lVert\bm{x}'\right\rVert\right)^{n^{h}}\kappa\left(\rho^{(h-1)}(\lambda\bm{x},\bm{x}')\right) + R^{(h)}(\lambda\bm{x},\bm{x}')}{\sqrt{\left(\left(\frac{1}{n_0}\left\lVert\lambda\bm{x}\right\rVert\right)^{2n^h} + R^{(h)}(\lambda\bm{x},\lambda\bm{x})\right)\Sigma^{(h)}\left(\bm{x}',\bm{x}'\right)}} \\
        &= \lim_{\lambda\to\infty}\frac{\lambda^{n^h}\left(\frac{1}{n_0}\right)^{n^h}\left(\left\lVert\bm{x}\right\rVert\left\lVert\bm{x}'\right\rVert\right)^{n^{h}}\kappa\left(\rho^{(h-1)}(\lambda\bm{x},\bm{x}')\right) + R^{(h)}(\lambda\bm{x},\bm{x}')}{\lambda^{n^L}\left(\frac{1}{n_0}\right)^{n^h}\sqrt{\left(\left\lVert\bm{x}\right\rVert^{2n^h} + n_0^{2n^h}\frac{R^{(h)}(\lambda\bm{x},\lambda\bm{x})}{\lambda^{2n^h}}\right)\Sigma^{(h)}\left(\bm{x}',\bm{x}'\right)}} \\
        &= \lim_{\lambda\to\infty}\frac{\left(\left\lVert\bm{x}\right\rVert\left\lVert\bm{x}'\right\rVert\right)^{n^{h}}\kappa\left(\rho^{(h-1)}(\lambda\bm{x},\bm{x}')\right)}{\sqrt{\left\lVert\bm{x}\right\rVert^{2n^h}\Sigma^{(h)}\left(\bm{x}',\bm{x}'\right)}}+\frac{R^{(h)}(\lambda\bm{x},\bm{x}')}{\lambda^{n^h}} \\
        &= \frac{\left(\left\lVert\bm{x}\right\rVert\left\lVert\bm{x}'\right\rVert\right)^{n^{h}}}{\left\lVert\bm{x}\right\rVert^{n^{h}}\sqrt{\Sigma^{(h)}\left(\bm{x}',\bm{x}'\right)}}\left(\lim_{\lambda\to\infty}\kappa\left(\rho^{(h-1)}(\lambda\bm{x},\bm{x}')\right)
        \right) \\
        &= \frac{\left(\left\lVert\bm{x}'\right\rVert\right)^{n^h}}{\sqrt{\Sigma^{(h)}\left(\bm{x}',\bm{x}'\right)}}\kappa\left(\lim_{\lambda\to\infty}\rho^{(h-1)}(\lambda\bm{x},\bm{x}')\right)
    \end{align*}

    which follows similar reasoning to that used in the proof of Proposition \ref{prop:fully_connected_kernel}, with the final transition assuming $\kappa(\cdot)$ is continuous on $[-1,1]$ and the limit for $\rho^{(h-1)}$ exists as before. Now, the initial limit correlation is given by

    \begin{align*}
        \lim_{\lambda\to\infty}\rho^{(0)}(\lambda\bm{x},\bm{x}') &= \lim_{\lambda\to\infty}\mathcal{S}\left(\lambda\bm{x},\bm{x}';\sigma_b^2\right)\\ &= 
        \lim_{\lambda\to\infty}\frac{\lambda\frac{1}{n_0}\bm{x}^\top\bm{x}'+\sigma_b^2}{\sqrt{\left(\frac{1}{n_0}\left\lVert\lambda\bm{x}\right\rVert^2+\sigma_b^2\right)\left(\frac{1}{n_0}\left\lVert\bm{x}'\right\rVert^2+\sigma_b^2\right)}} \\
        &= \lim_{\lambda\to\infty}\frac{\lambda\bm{x}^\top\bm{x}'}{\lambda\left\lVert\bm{x}\right\rVert\sqrt{\left\lVert\bm{x}'\right\rVert^2+n_0\sigma_b^2}}+\frac{n_0\sigma_b^2}{\lambda\left\lVert\bm{x}\right\rVert\sqrt{\left\lVert\bm{x}'\right\rVert^2+n_0\sigma_b^2}} \\
        &= \frac{\hat{\bm{x}}^\top\bm{x}'}{\sqrt{\left\lVert\bm{x}'\right\rVert^2+n_0\sigma_b^2}} \hspace{8pt} \forall\bm{x}\in\mathbb{R}^{n_0}\setminus\bm{0},\bm{x}'\in\mathbb{R}^{n_0}
    \end{align*}

    where $\hat{\bm{x}}$ is a unit vector in the direction of $\bm{x}$. Note that when $\bm{x}=\bm{0}$, this expression is instead given by

    \begin{align*}
        \rho^{(0)}(\bm{0},\bm{x}') &= \frac{n_0\sigma_b^2}{\sqrt{\left(n_0\sigma_b^2\right)\left(\left\lVert\bm{x}'\right\rVert^2+n_0\sigma_b^2\right)}} & \\ 
        &= \frac{\sigma_b}{\sqrt{\frac{1}{n_0}\left\lVert\bm{x}'\right\rVert^2+\sigma_b^2}} &\forall\bm{x}'\in\mathbb{R}^{n_0}.
    \end{align*}

     In either case, the limit exists and depends only on $\hat{\bm{x}}$ and $\bm{x}'$ (the former of which if equal to $\bm{0}$ indicate the adoption of the alternative case). Hence by induction, $\lim_{\lambda\to\infty}\rho^{(h)}\left(\lambda\bm{x},\bm{x}'\right)$ exists $\forall h=0,\dots,L$ and depends only on $\hat{\bm{x}}$ and $\bm{x}'$. By identical logic, the same can be said about $\lim_{\lambda\to\infty}\dot\kappa\left(\rho^{(h-1)}\left(\lambda\bm{x},\bm{x}'\right)\right)$. To solidify notation, we then define

     \begin{align}
         \hat{\rho}^{(h)}\left(\hat{\bm{x}},\bm{x}'\right) &= \lim_{\lambda\to\infty}\rho^{(h)}\left(\lambda\bm{x},\bm{x}'\right), \\
         \begin{split} \label{recursion:rho_hat}
         \hat{\rho}^{(h+1)}\left(\hat{\bm{x}},\bm{x}'\right) &= \frac{\left(\left\lVert\bm{x}'\right\rVert\right)^{n^h}}{\sqrt{\Sigma^{(h)}\left(\bm{x}',\bm{x}'\right)}}\kappa\left(\hat{\rho}^{(h)}\left(\hat{\bm{x}},\bm{x}'\right)\right) \\
        \hat{\rho}^{(0)}\left(\hat{\bm{x}},\bm{x}'\right) &= 
        \begin{cases}
              \frac{\hat{\bm{x}}^\top\bm{x}'}{\sqrt{\left\lVert\bm{x}'\right\rVert^2+n_0\sigma_b^2}} & \bm{x}\in\mathbb{R}^{n_0}\setminus\bm{0} \\
              \frac{\sigma_b}{\sqrt{\frac{1}{n_0}\left\lVert\bm{x}'\right\rVert^2+\sigma_b^2}}  & \bm{x}=\bm{0}
          \end{cases}
          \end{split}
     \end{align}

     As before, since both $\kappa$ and $\dot\kappa$ have range $\subseteq[-1,1]$, this limiting behaviour carries over to 
    
    \begin{align}
        \hat{\bar{\kappa}}\left(\hat{\bm{x}},\bm{x}'\right) &= \lim_{\lambda\to\infty}\bar{\kappa}\left(\lambda\bm{x},\bm{x}'\right).
    \end{align}

    Thus,

    \begin{align*}
        \hat{\Theta}\left(\lambda\bm{x},\bm{x}'\right) &= \left(\frac{1}{n_0}\lambda\left\lVert\bm{x}\right\rVert\left\lVert\bm{x}'\right\rVert\right)^{n^L}\hat{\bar{\kappa}}\left(\hat{\lambda\bm{x}},\bm{x}';\sigma_b^2\right) \\
        &= \lambda^{n^L}\left(\frac{1}{n_0}\left\lVert\bm{x}\right\rVert\left\lVert\bm{x}'\right\rVert\right)^{n^L}\hat{\bar{\kappa}}\left(\hat{\bm{x}},\bm{x}';\sigma_b^2\right) \\
        &= \lambda^{n^L}\hat{\Theta}\left(\bm{x},\bm{x}'\right)\hspace{8pt}\forall\bm{x},\bm{x}'\in\mathbb{R}^{n_0},\lambda>0,
    \end{align*}

    where $\hat{\lambda\bm{x}}$ is an abuse of notation representing a unit vector in the direction of $\lambda\bm{x}$, and the penultimate transition follows from noting that $\hat{\lambda\bm{x}}$ does not depend on $\lambda$ for all $\bm{x}\in\mathbb{R}^{n_0}, \lambda>0$, and in the case where either are zero the change in behaviour is hidden by a multiplication by zero. Hence, $\hat{\Theta}\left(\bm{x},\bm{x}'\right)$ is positive $n^L$-homogeneous with respect to its first argument by definition (and also with respect to the second argument by symmetry). Therefore, $\Theta\left(\bm{x},\bm{x}';\sigma_b^2\right)$ is asymptotically positive $n^L$-homogeneous in its first argument (and indeed the second) by definition.
    
\end{proof}

\begin{restatable}[]{lemma}{hermitecorrelation}
    \label{lemma:hermitecorrelation}
    The probabilists' Hermite polynomials satisfy

    \begin{align*}
        \mathbb{E}_{(x,y)}\left[H_n(x)H_m(y)\right] &= n!\rho^n\delta_{nm}, \hspace{8pt}\textrm{where}\\
        (x,y) &\sim \mathcal{N}\left(\bf{0},\begin{bmatrix}
            1 & \rho \\ \rho & 1
        \end{bmatrix}\right).
    \end{align*}

\end{restatable}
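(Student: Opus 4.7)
The plan is to prove this classical identity via the generating function for probabilists' Hermite polynomials, $G(x,t) = \sum_{n=0}^\infty \frac{H_n(x)}{n!}t^n = e^{xt - t^2/2}$, by evaluating the bivariate expectation $\mathbb{E}_{(x,y)}[G(x,s)G(y,t)]$ in two different ways and matching power series coefficients in $s$ and $t$.

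For the direct computation, I would exploit the joint Gaussian structure by writing $y = \rho x + \sqrt{1-\rho^2}\,z$ with $z\sim\mathcal{N}(0,1)$ independent of $x$. This reduces the expectation to a product of standard moment generating functions of independent Gaussians:
\begin{equation*}
    \mathbb{E}[G(x,s)G(y,t)] = e^{-(s^2+t^2)/2}\,\mathbb{E}\bigl[e^{(s+t\rho)x}\bigr]\,\mathbb{E}\bigl[e^{t\sqrt{1-\rho^2}z}\bigr] = e^{-(s^2+t^2)/2}\,e^{(s+t\rho)^2/2}\,e^{t^2(1-\rho^2)/2}.
\end{equation*}
Expanding the exponents and simplifying collapses this to $e^{\rho st}$.

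For the series-expansion computation, I would expand $G(x,s)G(y,t)$ as a double power series and interchange summation and expectation to obtain
\begin{equation*}
    \mathbb{E}[G(x,s)G(y,t)] = \sum_{n,m=0}^\infty \frac{s^n t^m}{n!\,m!}\,\mathbb{E}[H_n(x)H_m(y)].
\end{equation*}
On the other hand, $e^{\rho st} = \sum_{n=0}^\infty \frac{\rho^n s^n t^n}{n!}$, which contains only diagonal terms in $s$ and $t$. Equating coefficients of $s^n t^m$ between the two expressions immediately yields $\mathbb{E}[H_n(x)H_m(y)] = n!\,\rho^n\,\delta_{nm}$, completing the proof.

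The only nontrivial step is justifying the interchange of summation and expectation, which follows from absolute convergence: the moment generating function $\mathbb{E}[e^{|sx| + |ty|}]$ is finite for all $s,t\in\mathbb{R}$ by Gaussian integrability, and $|H_n(x)|$ is bounded polynomially in $x$, so Fubini applies. All remaining manipulations are routine algebra.
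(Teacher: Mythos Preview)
Your proof is correct, but it takes a genuinely different route from the paper's. The paper invokes Mehler's formula---the bilinear expansion of the bivariate Gaussian density as $\sum_{l\ge 0}\frac{\rho^l}{l!}H_l(x)H_l(y)\,d\gamma(x)\,d\gamma(y)$---and then reads off the result by integrating term-by-term against $H_n(x)H_m(y)$ using the standard orthogonality relation $\int H_n H_l\,d\gamma = n!\,\delta_{nl}$. Your approach instead works with the exponential generating function $e^{xt-t^2/2}$ and computes the joint moment generating function directly via the decomposition $y=\rho x+\sqrt{1-\rho^2}\,z$. The advantage of your route is that it is more self-contained: you never need Mehler's formula as an input, and in fact the identity $\mathbb{E}[G(x,s)G(y,t)]=e^{\rho st}$ that you derive is essentially equivalent to Mehler's formula itself. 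The paper's route is shorter once Mehler is granted and makes the orthogonality structure more transparent.

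One small remark on rigor: your Fubini justification is a little loose as stated. A cleaner bound is to use Cauchy--Schwarz and the known second moments, $\mathbb{E}\bigl[|H_n(x)H_m(y)|\bigr]\le\sqrt{\mathbb{E}[H_n(x)^2]\,\mathbb{E}[H_m(y)^2]}=\sqrt{n!\,m!}$, whence the double sum $\sum_{n,m}\frac{|s|^n|t|^m}{\sqrt{n!\,m!}}$ converges for all real $s,t$, and the interchange is justified. This is a minor polish; the argument is sound.
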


\begin{proof}
    We begin by noting that the differential joint density function of the given $x$ and $y$ is 

    \begin{align*}
        df_{X,Y}(x,y) &= \frac{1}{2\pi\sqrt{1-\rho^2}}\exp\left(-\frac{x^2+y^2-2\rho xy}{2\left(1-\rho^2\right)}\right) \\
        &= \sum_{n=0}^\infty\frac{\rho^n}{n!}H_n(x)H_n(y)d\gamma(x)d\gamma(y),
    \end{align*}

    where $\gamma$ is the standard normal measure and the second line is a direct statement of Mehler's formula. Hence,

    \begin{align*}
        \mathbb{E}_{(x,y)}\left[H_n(x)H_m(y)\right] &= \int_{\mathbb{R}^2}H_n(x)H_m(y)df_{X,Y}(x,y) \\
        &= \int_{\mathbb{R}^2}H_n(x)H_m(y)\sum_{l=0}^\infty\frac{\rho^l}{l!}H_l(x)H_l(y)d\gamma(x)d\gamma(y) \\
        &= \sum_{l=0}^\infty\frac{\rho^l}{l!}\int_{\mathbb{R}^2}H_n(x)H_m(y)H_l(x)H_l(y)d\gamma(x)d\gamma(y) \\
        &= \sum_{l=0}^\infty\frac{\rho^l}{l!}\int_{\mathbb{R}}H_n(x)H_l(x)d\gamma(x)\int_{\mathbb{R}}H_m(y)H_l(y)d\gamma(y) \\
        &= \sum_{l=0}^\infty\frac{\rho^l}{l!}\left(n!\delta_{nl}\right)\left(m!\delta_{ml}\right) \\
        &= \rho^nn!\delta_{nm},
    \end{align*}

    where the second transition follows from the dominated convergence theorem, the penultimate transition follows from the orthogonality of the probabilist's Hermite polynomials with respect to the standard normal measure $\gamma$.
    
\end{proof}

\begin{restatable}[]{lemma}{positivekappa}
\label{lemma:positivekappa} 
    For all nonlinearities $\phi:\mathbb{R}\to \mathbb{R}$ satisfying Assumption \ref{assumption:activation_1} and Assumption \ref{assumption:activation_3}, the associated $\kappa(\rho)$ and $\dot\kappa(\rho)$ defined as in Lemma \ref{lemma:sigma_propagation} exist and satisfy

    \begin{align*}
        \kappa(\rho) &> 0\hspace{8pt}\forall\rho>0, \hspace{8pt}\textrm{and} \\
        \dot\kappa(\rho) &> 0\hspace{8pt}\forall\rho>0.
    \end{align*}

\end{restatable}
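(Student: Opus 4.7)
The plan is to expand the asymptotic nonlinearity $\hat\phi$ in the orthogonal basis of probabilists' Hermite polynomials and invoke Lemma \ref{lemma:hermitecorrelation} to express $\hat\kappa(\rho)$ as a power series in $\rho$ with manifestly non-negative coefficients, from which strict positivity will follow immediately by inspection together with non-triviality of $\hat\phi$.

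First I would observe that $\hat\phi(x) = \lim_{\lambda\to\infty}\phi(\lambda x)/\lambda^n$ is positive $n$-homogeneous with $n>0.5$, so it takes the form $\hat\phi(x) = \lvert x\rvert^n f(\textrm{sign}(x))$ for some bounded $f:\{-1,0,1\}\to\mathbb{R}$. Since $2n>-1$, the integral $\int_\mathbb{R}\lvert x\rvert^{2n}d\gamma(x)$ is finite, so $\hat\phi \in L^2(\mathbb{R},\gamma)$ and admits a Hermite expansion $\hat\phi = \sum_{k=0}^\infty a_k H_k$ convergent in $L^2(\mathbb{R},\gamma)$. I would then substitute this into $\hat\kappa(\rho) = \mathbb{E}_{(x,y)}[\hat\phi(x)\hat\phi(y)]$, exchange sum and expectation (justified by $L^2$-convergence of the expansion and Cauchy-Schwarz applied to the two marginals), and invoke Lemma \ref{lemma:hermitecorrelation} to obtain
\[
\hat\kappa(\rho) = \sum_{k,l} a_k a_l\,\mathbb{E}\left[H_k(x) H_l(y)\right] = \sum_{k=0}^\infty a_k^2\, k!\, \rho^k.
\]
For $\rho>0$ every term is non-negative, and non-triviality of $\hat\phi$ forces at least one $a_k$ to be non-zero, so $\hat\kappa(\rho)>0$. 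The same argument at $\rho=1$ yields $\hat\kappa(1)>0$, hence $\kappa(\rho) = \hat\kappa(\rho)/\hat\kappa(1) > 0$ for all $\rho > 0$.

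The argument for $\dot\kappa$ is identical, applied to $\hat{\dot\phi}$ in place of $\hat\phi$. By Lemma \ref{lemma:derivative_asymptotic_homogeneity} the limiting derivative $\hat{\dot\phi}$ is positive $(n-1)$-homogeneous, and Assumption \ref{assumption:activation_3} ensures $\dot\phi \in L^2(\mathbb{R},\gamma)$ and hence $\hat{\dot\phi}\in L^2(\mathbb{R},\gamma)$, so the same Hermite expansion machinery applies. Non-triviality of $\hat{\dot\phi}$ follows directly from the explicit form of $\hat\phi$: differentiating $\lvert x\rvert^n f(\textrm{sign}(x))$ yields $n\lvert x\rvert^{n-1}\textrm{sign}(x) f(\textrm{sign}(x))$, which is non-trivial whenever $f$ is. The same power-series identity then gives $\hat{\dot\kappa}(\rho)>0$ for $\rho>0$, and since $\dot\kappa(\rho) = \hat{\dot\kappa}(\rho)/\hat\kappa(1)$ with $\hat\kappa(1)>0$ already established, $\dot\kappa(\rho)>0$ follows.

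The main obstacle I anticipate is the careful justification of the interchange of the double summation with the joint expectation, since $\hat\phi$ is generally unbounded. This is ultimately routine via monotone/dominated convergence combined with Cauchy-Schwarz on the two marginal expansions, but requires one to verify that the tail of the double series has vanishing expectation. A minor subsidiary point is confirming that $\hat\phi \in L^2(\gamma)$ is inherited from $\phi \in L^2(\gamma)$ together with the decomposition $\phi = \hat\phi + \tilde\phi$ used in Lemma \ref{lemma:sigma_propagation}; this is immediate once one notes that $\tilde\phi$ has polynomial growth of strictly lower order than $\hat\phi$.
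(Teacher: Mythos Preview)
Your proposal is correct and follows essentially the same route as the paper: expand $\hat\phi$ (and $\hat{\dot\phi}$) in the Hermite basis, apply Lemma~\ref{lemma:hermitecorrelation} to obtain $\hat\kappa(\rho)=\sum_k a_k^2 k!\,\rho^k$, and read off strict positivity for $\rho>0$ from non-triviality. You are in fact slightly more careful than the paper about the interchange of summation and expectation and about the non-triviality of $\hat{\dot\phi}$, both of which the paper leaves implicit.
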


\begin{proof}
    First, we recall the definition of $\hat\kappa(\rho)$:

    \begin{align*}
        \hat\kappa(\rho) &= \mathbb{E}_{(x,y)}\left[\hat{\phi}(x)\hat{\phi}(y)\right], \textrm{where}\\
        (x,y) &\sim \mathcal{N}\left(\bf{0},\begin{bmatrix}
            1 & \rho \\ \rho & 1
        \end{bmatrix}\right),
    \end{align*}
    where $\hat{\phi}(\cdot)$ is the positive $n$-homogeneous part of $\phi(\cdot)$, i.e. $\hat{\phi}(x)=\lim_{\lambda\to\infty}\frac{\phi(\lambda x)}{\lambda^n}\hspace{4pt}\forall x\in\mathbb{R}$. 
    
    Observe that $\hat\kappa(\cdot)\in L^2(\mathbb{R},\gamma)$, where $\gamma$ is the standard normal measure, which follows trivially from the fact that $\hat{\phi}(\cdot)$ has polynomial bidirectional asymptotic growth (specifically monomial growth of order $n$). Indeed, this is a requirement for the mere existence of $\kappa(\cdot)$.

    The probabilists' Hermite polynomials ${H_n(\cdot)}_{n=0}^\infty$ are well-known to form an orthogonal basis in $L^2(\mathbb{R,\gamma)}$, with
    
    \begin{align*}
        \int_\mathbb{R}H_n(x)H_m(x)d\gamma(x)=n!\delta_{nm}
    \end{align*} 
    where $\delta_{nm}$ is the Kronecker delta, nonzero only for $n=m$. Thus, we can write 
    \begin{align*}
        \hat{\phi}(\cdot) &= \sum_{n=0}^\infty a_nH_n(\cdot), \textrm{where} \\
        a_n &= \frac{1}{n!}\mathbb{E}_{x\sim\mathcal{N}(0,1)}\left[\hat{\phi}(x)H_n(x)\right].
    \end{align*}

    Hence,

    \begin{align*}
        \hat\kappa(\rho) &= \mathbb{E}_{(x,y)}\left[\hat\phi(x)\hat\phi(y)\right] \\
        &= \mathbb{E}_{(x,y)}\left[\left(\sum_{n=0}^\infty a_nH_n(x)\right)\left(\sum_{m=0}^\infty a_mH_m(y)\right)\right] \\
        &= \sum_{n,m=0}^{\infty}a_na_m\mathbb{E}_{(x,y)}\left[H_n(x)H_m(y)\right] \\
        &= \sum_{n,m=0}^{\infty}a_na_m n!\rho^m\delta_{nm} \\
        &= \sum_{n=0}^\infty a_n^2n!\rho^n, \hspace{8pt} \textrm{where} \\
        (x,y) &\sim \mathcal{N}\left(\bf{0},\begin{bmatrix}
            1 & \rho \\ \rho & 1
        \end{bmatrix}\right),
    \end{align*}

    where the second transition follows from the dominated convergence theorem and the third transition follows from direct application of Lemma \ref{lemma:hermitecorrelation}.

    By assumption of our definition of asymptotic positive $n$-homogeneity, $\hat\kappa(\cdot)$ is nontrivial; hence there must exist at least one $a_n\neq0$, so the corresponding term $a_n^2n!\rho^n$ is strictly positive for all $\rho>0$. Thus noting that $1>0$,

    \begin{align*}
        \hat\kappa(\rho) &> 0\hspace{8pt}\forall\rho>0.
    \end{align*}

    Noting that under Assumption \ref{assumption:activation_3}, $\dot\phi(\cdot)\in L^2\left(\mathbb{R},\gamma\right)$, so the above also holds for $\hat{\dot{\kappa}}(\rho)$. 
    
    Finally, noting that as $1>0$, $\hat\kappa(1)>0$, and recalling $\kappa(\rho)=\frac{\hat\kappa(\rho)}{\hat\kappa(1)}$ and $\dot\kappa(\rho)=\frac{\dot{\hat{\kappa}}(\rho)}{\hat\kappa(1)}$,

    \begin{align*}
        \kappa(\rho) &> 0\hspace{8pt}\forall\rho>0, \hspace{8pt}\textrm{and} \\
        \dot\kappa(\rho) &> 0\hspace{8pt}\forall\rho>0.
    \end{align*}
    
\end{proof}

\begin{restatable}[]{corollary}{positivekappabar}
\label{corollary:positivekappabar} 
    For all neural network with fully-connected layers and nonlinearities satisfying Assumption \ref{assumption:activation_1} and Assumption \ref{assumption:activation_2},

    \begin{equation*}
        \hat{\bar{\kappa}}\left(\hat{\bm{x}},\bm{x}'\right)=\lim_{\lambda\to\infty}\bar\kappa\left(\lambda\bm{x},\bm{x}'\right)>0\hspace{8pt}\forall\bm{x},\bm{x}'\in\mathbb{R}^{n_0}:\bm{x}^\top\bm{x}>0,
    \end{equation*}

    where $\bar\kappa\left(\bm{x},\bm{x}'\right)$ is defined as in Equation \ref{eq:kappa_bar} of Proposition \ref{prop:fully_connected_kernel}.

\end{restatable}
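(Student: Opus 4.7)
The plan is to leverage the recursive structure of $\bar\kappa(\bm{x},\bm{x}')$ together with Lemma \ref{lemma:positivekappa}. By Equation \ref{eq:kappa_bar} and Equation \ref{eq:kappa_bar_h}, $\bar\kappa(\bm{x},\bm{x}')$ decomposes as a finite sum over layers $h=0,\dots,L$, each summand being a product of $\kappa(\cdot)$ and $\dot\kappa(\cdot)$ factors evaluated at the layer-wise correlations $\rho^{(h)}(\bm{x},\bm{x}')$. I would first invoke the limit correlations $\hat\rho^{(h)}(\hat{\bm{x}},\bm{x}')=\lim_{\lambda\to\infty}\rho^{(h)}(\lambda\bm{x},\bm{x}')$, whose existence and recursive form (Recursion \ref{recursion:rho_hat}) were established in the proof of Corollary \ref{corollary:ReLUnhomogeneous}. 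By continuity of $\kappa(\cdot)$ and $\dot\kappa(\cdot)$ on $[-1,1]$, taking the limit $\lambda\to\infty$ inside each factor produces $\hat{\bar{\kappa}}(\hat{\bm{x}},\bm{x}')$ as a finite sum of products of $\kappa$ and $\dot\kappa$ evaluated at these limit correlations.

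The key step is an induction on $h$ showing that $\hat\rho^{(h)}(\hat{\bm{x}},\bm{x}')>0$ whenever $\bm{x}^\top\bm{x}'>0$. For the base case, the hypothesis $\bm{x}^\top\bm{x}'>0$ rules out $\bm{x}=\bm{0}$, so Recursion \ref{recursion:rho_hat} gives $\hat\rho^{(0)}(\hat{\bm{x}},\bm{x}')=\hat{\bm{x}}^\top\bm{x}'/\sqrt{\|\bm{x}'\|^2+n_0\sigma_b^2}$, which shares a sign with $\bm{x}^\top\bm{x}'$ (since $\hat{\bm{x}}$ is a positive scalar multiple of $\bm{x}$) and is therefore strictly positive. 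For the inductive step, I would observe that the prefactor $\|\bm{x}'\|^{n^h}/\sqrt{\Sigma^{(h)}(\bm{x}',\bm{x}')}$ in Recursion \ref{recursion:rho_hat} is strictly positive: the numerator is positive since $\bm{x}^\top\bm{x}'>0$ forces $\bm{x}'\neq\bm{0}$, and the denominator satisfies $\Sigma^{(h)}(\bm{x}',\bm{x}')\geq\sigma_b^2>0$ directly from Recursion \ref{recursion:sigma_h} together with $c_\phi>0$ and $\phi(\cdot)^2\geq0$. An application of Lemma \ref{lemma:positivekappa} to the inductive hypothesis $\hat\rho^{(h)}(\hat{\bm{x}},\bm{x}')>0$ then yields $\kappa(\hat\rho^{(h)}(\hat{\bm{x}},\bm{x}'))>0$, closing the induction.

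With positivity of all $\hat\rho^{(h)}(\hat{\bm{x}},\bm{x}')$ for $h=0,\dots,L$ established, a second application of Lemma \ref{lemma:positivekappa}---valid since Assumption \ref{assumption:activation_2} implies Assumption \ref{assumption:activation_3}---yields that both $\kappa(\hat\rho^{(h)}(\hat{\bm{x}},\bm{x}'))$ and $\dot\kappa(\hat\rho^{(h)}(\hat{\bm{x}},\bm{x}'))$ are strictly positive for every $h$. Hence every summand in the finite sum defining $\hat{\bar{\kappa}}(\hat{\bm{x}},\bm{x}')$ is a product of strictly positive factors, and thus strictly positive itself, and a finite sum of strictly positive reals is strictly positive, yielding the claim. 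I expect the main obstacle to be largely bookkeeping: carefully distinguishing the base-case branches of Recursion \ref{recursion:rho_hat} (noting that the $\bm{x}=\bm{0}$ branch is excluded under our hypothesis), verifying that each prefactor denominator remains bounded away from zero, and justifying the interchange of limit with the finite sum/products of continuous factors.
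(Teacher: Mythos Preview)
Your proposal is correct and follows essentially the same approach as the paper: induct on $h$ via Recursion~\ref{recursion:rho_hat} to show all limit correlations $\hat\rho^{(h)}(\hat{\bm{x}},\bm{x}')$ are strictly positive under the hypothesis $\bm{x}^\top\bm{x}'>0$, then apply Lemma~\ref{lemma:positivekappa} to conclude each $\kappa$ and $\dot\kappa$ factor is positive, so the finite sum of products defining $\hat{\bar\kappa}$ is positive. Your treatment is in fact slightly more careful than the paper's in justifying the positivity of the prefactor $\lVert\bm{x}'\rVert^{n^h}/\sqrt{\Sigma^{(h)}(\bm{x}',\bm{x}')}$ and in handling the base case.
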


\begin{proof}
    We begin by showing that the limit correlation $\hat\rho\left(\hat{\bm{x}},\bm{x}'\right)=\lim_{\lambda\to\infty}\rho^{(h)}\left(\lambda\bm{x},\bm{x}'\right)>0\hspace{4pt}\forall\bm{x},\bm{x}'\in\mathbb{R}^{n_0}:\bm{x}^\top\bm{x}'>0,h=0,\dots,L$. First, recall the recursion for the single-argument limit correlations derived in Corollary \ref{corollary:nhomogeneous_c}, Recursion \ref{recursion:rho_hat}:

    \begin{align*}
         \hat{\rho}^{(h+1)}\left(\hat{\bm{x}},\bm{x}'\right) &= \frac{\left(\left\lVert\bm{x}'\right\rVert\right)^{n^h}}{\sqrt{\Sigma^{(h)}\left(\bm{x}',\bm{x}'\right)}}\kappa\left(\hat{\rho}^{(h)}\left(\hat{\bm{x}},\bm{x}'\right)\right) \\
        \hat{\rho}^{(0)}\left(\hat{\bm{x}},\bm{x}'\right) &= 
        \begin{cases}
              \frac{\hat{\bm{x}}^\top\bm{x}'}{\sqrt{\left\lVert\bm{x}'\right\rVert^2+n_0\sigma_b^2}} & \bm{x}\in\mathbb{R}^{n_0}\setminus\bm{0} \\
              \frac{\sigma_b}{\sqrt{\frac{1}{n_0}\left\lVert\bm{x}'\right\rVert^2+\sigma_b^2}}  & \bm{x}=\bm{0}
          \end{cases}.
    \end{align*}

    Due to Lemma \ref{lemma:positivekappa}, and the strict positivity of $\Sigma^{(h)}\left(\bm{x}',\bm{x}'\right)$ as it is a valid kernel and $\bm{x}'>0$ by assumption of the theorem, if $\hat{\rho}^{(h)}\left(\hat{\bm{x}},\bm{x}'\right)>0$ then it follows that $\hat{\rho}^{(h+1)}\left(\hat{\bm{x}},\bm{x}'\right)>0$ as the nonlinearity $\phi(\cdot)$ satisfies the conditions for this Lemma by assumption of the theorem. Note also that by assumption of the theorem $\hat{\rho}^{(h)}\left(\hat{\bm{x}},\bm{x}'\right)>0$, hence by induction

    \begin{align}
        \hat{\rho}^{(h)}\left(\hat{\bm{x}},\bm{x}'\right) &>0 &\forall\bm{x},\bm{x}'\in\mathbb{R}^{n_0}:\hat{\bm{x}}^\top\hat{\bm{x}}>0, h=0,\dots,L.
    \end{align}

    Again due to Lemma \ref{lemma:positivekappa},

    \begin{align}
        \kappa\left(\hat{\rho}^{(h)}\left(\hat{\bm{x}},\bm{x}'\right)\right) &> 0 &\forall\bm{x},\bm{x}'\in\mathbb{R}^{n_0}:\hat{\bm{x}}^\top\bm{x}'>0, h=0,\dots,L, \\
        \dot\kappa\left(\hat{\rho}^{(h)}\left(\bm{x},\bm{x}'\right)\right) &> 0 &\forall\bm{x},\bm{x}'\in\mathbb{R}^{n_0}:\bm{x}^\top\bm{x}'>0, h=0,\dots,L.
    \end{align}

    To conclude the proof, recall from Equation \ref{eq:kappa_bar} $\hat{\bar\kappa}\left(\hat{\bm{x}},\bm{x}'\right)=\lim_{\lambda\to\infty}\bar\kappa\left(\lambda\bm{x},\bm{x}'\right)$ is composed of a summation of products of $\hat{\kappa}^{(h)}\left(\hat{\bm{x}},\bm{x}'\right)$ and $\hat{\dot{\kappa}}^{(h)}\left(\hat{\bm{x}},\bm{x}'\right)$ for various $h=0,\dots,L$, all of which are existing limits and are strictly positive. Thus concluding the proof,

    \begin{align} \label{eq:kappa_bar_hat}
        \hat{\bar{\kappa}}\left(\hat{\bm{x}},\bm{x}'\right) &=\lim_{\lambda\to\infty}\bar\kappa\left(\lambda\bm{x},\bm{x}'\right) >0&\forall\bm{x},\bm{x}'\in\mathbb{R}^{n_0}:\bm{x}^\top\bm{x}'>0
    \end{align}
    
\end{proof}

\section{Proof of Theorem \ref{thm:asymptoticboundednormalisedkernel} and Theorem \ref{thm:boundednormalisedkernel}}
\label{proof:boundednormalisedkernel}
\asymptoticboundednormalisedkernel*

\begin{proof}
    We first consider the effect of placing a first layer-norm operation between the linear component and nonlinearity of arbitrary layer $h_{LN}$, with $0\leq h_{LN}\leq L$ of such a network. The first observation is that this destroys the variance for both inputs $\bm{x}$ and $\bm{x}'$ subsequent to this layer, while maintaining correlation. Thus, the activations and therefore $\Sigma^{(\hat{h})}\left(\bm{x},\bm{x}'\right)$ and $\dot\Sigma^{(\hat{h})}\left(\bm{x},\bm{x}'\right)$ can have no dependence on $\left\lVert\bm{x}\right\rVert$ and $\left\lVert\bm{x}'\right\rVert$ for all $h_{LN}\leq \hat{h}\leq L$. Not accounting for the effect of the layer-norm on the backwards pass, following the reasoning of the proof of Corollary \ref{corollary:nhomogeneous_c}, the contribution of layer $h$, where $0\leq h<h_{LN}$ to the NTK would be asymptotically positive $n^{h_{LN}}$-homogeneous with respect to both $\bm{x}$ and $\bm{x}'$. That is, at leading order they are polynomials of order $n^{h_{LN}}$ in both $\bm{x}$ and $\bm{x}'$. Similarly, the contributions of layers $\hat{h}$ will depend only on the correlation of the activations at layer $h_{LN}$.

We must now consider the effect of the layer-norm operation on the backwards pass. From the proof of Theorem \ref{thm:normalisedkernel}, we see that the layer-norm operation essentially divides the backwards pass by the square root of the variance directly preceding the operation, which in this case is given by the square roots of $\Sigma^{(h_{LN})}\left(\bm{x},\bm{x}\right)$ and $\Sigma^{(h_{LN})}\left(\bm{x}',\bm{x}'\right)$ for inputs $\bm{x}$ and $\bm{x}'$ respectively, both of which are asymptotically positive $n^{h_{LN}}$-homogeneous, i.e. at leading order a polynomial of order $n^{h_{LN}}$.

Accounting for this effect on backpropagation, we see that for all layers $h<h_{LN}$, the contribution to the NTK is now given by the ratio of two asymptotically positive $n^{h_{LN}}$-homogeneous terms in both $\bm{x}$ and $\bm{x}'$. Thus, in the limit as $\left\lVert\bm{x}\right\rVert,\left\lVert\bm{x'}\right\rVert\to\infty$, the contribution to the NTK must approach a finite limit dependent only on the layer correlations between the two inputs.

For the case $\bm{x}=\bm{x}'$, all layer correlations are $1$, thus in the limit $\left\lVert\bm{x}\right\rVert\to\infty$, $\Theta(\bm{x},\bm{x})$ approaches a positive constant independent of the direction of $\bm{x}$. By the assumptions of the Theorem, specifically that the nonlinearities are locally-bounded, almost-everywhere differentiable, and of locally-bounded derivative (which apply to all commonly activation functions), $\Theta\left(\bm{x},\bm{x}\right)$ is also bounded by a finite positive constant for all $\left\lVert\bm{x}\right\rVert$ not inducing the homogeneous regime. Thus, the NTK of such a network satisfies $\forall_{\bm{x}\in\mathbb{R}^{n_0}}\left\lVert\Theta\left(\bm{x},\bm{x}\right)\right\rVert\leq C$ for some finite constant $C>0$. Hence, the proof is concluded.
\end{proof}

\boundednormalisedkernel*

\begin{proof}
    By Corollary \ref{corollary:nhomogeneous_cequivalence}, positive $n$-homogeneous nonlinearities satisfying Assumption \ref{assumption:activation_1} also satisfy the conditions for Theorem \ref{thm:asymptoticboundednormalisedkernel}, hence its result also applies here.
\end{proof}

\section{Proof of Theorem \ref{thm:worstcaseboundedoutput}}
\worstcaseboundedoutput*
\begin{proof}
For an inifinitely wide NN after convergence we have:
\begin{align*}
    f_\theta(\bm{x}) = \Theta(\bm{x}, \mathcal{X}_\textrm{train})\Theta_{\textrm{train}}^{-1}(y - N_0(\bm{x})) + N_0(\bm{x}),    
\end{align*}
where $(\Theta(\bm{x}, \mathcal{X}_\textrm{train}))_i = \Theta(\bm{x}, \bm{x}_i)$ and $(\Theta_{\textrm{train}})_{i,j} = \Theta(\bm{x}_i, \bm{x}_j)$ and $(y)_i = y_i$ for $i,j= 1, \dots, |\mathcal{D}_{\textrm{train}}|$ and $N_0(\bm{x})$ is the output of freshly initialised network with property $\mathbb{E}[N_0(\bm{x})] = 0$. Let us denote by $\psi(\cdot)$ the feature map of network's NTK, such that $\Theta(\bm{x}, \bm{x}') = \psi(\bm{x})^\top\psi(\bm{x})$ and let $\lambda_{\textrm{min}}$ be the smallest eigenvalue of the $\Theta_{\textrm{train}}$ matrix. In expectation we thus have:
\begin{align*}
   \left\lvert \mathbb{E}[f_\theta(\bm{x})] \right\rvert &= \left\lvert\Theta(\bm{x}, \mathcal{X}_\textrm{train})\Theta_{\textrm{train}}^{-1}y \right\rvert \\
    &\le \left\lVert \Theta(\bm{x}, \mathcal{X}_\textrm{train})\Theta_{\textrm{train}}^{-1} \right\rVert_2 \left\lVert y   \right\rVert_2 \\
    &\le \frac{1}{\sqrt{\lambda_{\textrm{min}}}} \left\lVert \Theta(\bm{x}, \mathcal{X}_\textrm{train})\right\rVert_2  \sqrt{|D_{\textrm{train}}|\max_{y \in \mathcal{Y}_\textrm{train}}|y|} \\
    &= \frac{1}{\sqrt{\lambda_{\textrm{min}}}} \sqrt{\sum_{\bm{x}' \in\mathbb{R}^{n_0}_{\textrm{train}}}\psi(\bm{x})^\top \psi(\bm{x}') } \sqrt{|D_{\textrm{train}}|\max_{y \in \mathcal{Y}_\textrm{train}}|y|} \\
    & \le \frac{1}{\sqrt{\lambda_{\textrm{min}}}} \sqrt{|D_{\textrm{train}}| C } \sqrt{|D_{\textrm{train}}|\max_{y \in \mathcal{Y}_\textrm{train}}|y|} \\
    & = \sqrt{\frac{C\max_{y \in \mathcal{Y}_\textrm{train}}\lvert y\rvert}{\lambda_{\textrm{min}}} }|D_{\textrm{train}}|,
\end{align*}
where the penultimate transition is due to Proposition \ref{prop:constvariancekernels}.
\end{proof}

\section{Proof of Theorem \ref{thm:explosion}}
\explosion*
\begin{proof}
    For an infinitely-wide NN after convergence we have:
\begin{align*}
    f_\theta(\bm{x}) = \Theta(\bm{x}, \mathcal{X}_\textrm{train})\Theta_{\textrm{train}}^{-1}(y - N_0(\bm{x})) + N_0(\bm{x}),    
\end{align*}
where $(\Theta(\bm{x}, \mathcal{X}_\textrm{train}))_i = \Theta(\bm{x}, \bm{x}_i)$ and $(\Theta_{\textrm{train}})_{i,j} = \Theta(\bm{x}_i, \bm{x}_j)$ and $(y)_i = y_i$ for $i,j= 1, \dots, |\mathcal{D}_{\textrm{train}}|$ and $N_0(\bm{x})$ is the output of freshly initialised network with property $\mathbb{E}[N_0(\bm{x})] = 0$. As such, we have:
\begin{align*}
    \mathbb{E}[f_\theta(\bm{x})] &= \Theta(\bm{x}, \mathcal{X}_\textrm{train})\Theta_{\textrm{train}}^{-1}\mathcal{Y}_\textrm{train},
\end{align*}
which is just the mean of a noiseless Gaussian Process conditioned on $\mathcal{D}_{\textrm{train}}$ with kernel $\Theta(\bm{x}, \bm{x}')$. Due to the Representer Theorem, we can always rewrite this mean as:
\begin{align*}
    \mathbb{E}[f_\theta(\bm{x})] = \sum_{\bm{x}' \in\mathbb{R}^{n_0}_{\textrm{train}}} \alpha_{\bm{x}'}\Theta(\bm{x}, \bm{x}'),
\end{align*}
for some $\alpha_{\bm{x}'} \in \mathbb{R}$. Due to the assumption of Theorem \ref{thm:explosion}, we must have at least one $\tilde{y} \in \mathcal{Y}_\textrm{train}$ such that $\tilde{y} \neq 0$, and hence $\bm{\alpha}\in\mathbb{R}^{\left\lVert\mathcal{X}_\textrm{train}\right\rVert}\neq\bm{0}$, where $\bm{\alpha}_i=\alpha_{\bm{x}_i}$ for $i=1,\dots,\left\lVert\mathcal{D}_\textrm{train}\right\rVert$.

Let us study the predictions for the set $\lambda\mathcal{X}_\textrm{train}$, where $\left(\lambda\mathcal{X}_\textrm{train}\right)_i\in\mathbb{R}^{n_0}=\lambda\left(\mathcal{X}_\textrm{train}\right)_i$ for some $\lambda>0$, given by 

\begin{align*}
    \mathbb{E}\left[N(\lambda\mathcal{X}_\textrm{train})\right]_i &= \mathbb{E}\left[N(\lambda\bm{x}_i)\right] \\
    &= \sum_{\bm{x}'\in\mathcal{X}_\textrm{train}}\alpha_{\bm{x}'}\Theta\left(\lambda\bm{x}_i,\bm{x}'\right).
\end{align*}

We can rewrite this as $\mathbb{E}\left[N(\lambda\mathcal{X}_\textrm{train})\right]=\Theta\left(\lambda\mathcal{X}_\textrm{train},\mathcal{X}_\textrm{train}\right)\bm{\alpha}$, where $\Theta\left(\lambda\mathcal{X}_\textrm{train},\mathcal{X}_\textrm{train}\right)_{i,j}=\Theta\left(\lambda\bm{x}_i,\bm{x}_j\right)$ for $i,j=1,\dots,\left\lVert\mathcal{D}_\textrm{train}\right\rVert$. As $\mathcal{X}_\textrm{train}$ is non-degenerate and $\Theta(\cdot.\cdot')$ is a valid kernel, $\Theta\left(\lambda\mathcal{X}_\textrm{train},\mathcal{X}_\textrm{train}\right)$ is full-rank $\forall\lambda>0$, and thus has trivial nullspace. Therefore, as $\bm{\alpha}\neq0,\hspace{4pt}\mathbb{E}\left[N(\lambda\mathcal{X}_\textrm{train})\right]=\Theta\left(\lambda\mathcal{X}_\textrm{train},\mathcal{X}_\textrm{train}\right)\bm{\alpha}\neq\bm{0}$. Hence, $\forall\lambda>0,\exists\tilde{\bm{x}}\in\mathcal{X}_\textrm{train}$ such that $\mathbb{E}\left[N(\tilde{\bm{x}})\right]\neq0$. We can arbitrarily multiply this expression by $\frac{\lambda^{n^L}}{\lambda^{n^L}}=1$, giving 

\begin{align*}
    \mathbb{E}\left[N(\lambda\tilde{\bm{x}})\right] &= \lambda^{n^L}\sum_{\bm{x}'\in\mathcal{X}_\textrm{train}}\alpha_{\bm{x}'}\frac{\Theta\left(\lambda\tilde{\bm{x}},\bm{x}'\right)}{\lambda^{n^L}}.
\end{align*}

Consider now the case where $\mathcal{X}_\textrm{train}$ is constructed such that $\bm{x}^\top\bm{x}'>0\hspace{4pt}\forall\bm{x},\bm{x}'\in\mathcal{X}_\textrm{train}$. Then, due to Corollary \ref{corollary:ReLUnhomogeneous} and Corollary \ref{corollary:positivekappabar},

\begin{align*}
    \lim_{\lambda\to\infty}\frac{\Theta\left(\lambda\tilde{\bm{x}},\bm{x}'\right)}{\lambda^{n^L}} &= \hat\Theta\left(\tilde{\bm{x}},\bm{x}'\right) \\
    &= \left\lVert\tilde{\bm{x}}\right\rVert\left\lVert\bm{x}'\right\rVert\hat{\bar{\kappa}}\left(\hat{\tilde{\bm{x}}},\bm{x}'\right) \\
    &> 0 \hspace{8pt} \forall\bm{x}'\in\mathcal{X}_\textrm{train},
\end{align*}

where $\hat{\tilde{\bm{x}}}$ is a unit vector in the direction of $\tilde{\bm{x}}$ and the final transition holds as the condition $\bm{x}^\top\bm{x}'>0\hspace{4pt}\forall\bm{x},\bm{x}'\in\mathcal{X}_\textrm{train}$ guarantees $\left\lVert\bm{x}\right\rVert>0\hspace{4pt}\forall\bm{x}\in\mathbb{R}^{n_0}$.

Invoking Corollary \ref{corollary:nhomogeneous_c}, in the limit we then have

\begin{align*}
    \lim_{\lambda\to\infty}\left\lvert\mathbb{E}\left[N(\lambda\tilde{\bm{x}})\right]\right\rvert &= \lim_{\lambda\to\infty}\left\lvert\lambda^{n^L}\sum_{\bm{x}'\in\mathcal{X}_\textrm{train}}\alpha_{\bm{x}'}\frac{\Theta\left(\lambda\tilde{\bm{x}},\bm{x}'\right)}{\lambda^{n^L}}\right\rvert \\
    &= \lim_{\lambda\to\infty}\left\lvert\lambda^{n^L}\right\rvert\lim_{\lambda\to\infty}\left\lvert\sum_{\bm{x}'\in\mathcal{X}_\textrm{train}}\alpha_{\bm{x}'}\frac{\Theta\left(\lambda\tilde{\bm{x}},\bm{x}'\right)}{\lambda^{n^L}}\right\rvert \\
    &= \lim_{\lambda\to\infty}\lambda^{n^L}\left\lvert\sum_{\bm{x}'\in\mathcal{X}_\textrm{train}}\alpha_{\bm{x}'}\hat\Theta\left(\tilde{\bm{x}},\bm{x}'\right)\right\rvert
\end{align*}

where the last transition follows from the assumption that $n>0$ and the first transition is valid as we can choose $\tilde{\bm{x}}\in\mathbb{R}^{n_0}$ such that $\lim_{\lambda\to\infty}\left\lvert\sum_{\bm{x}'\in\mathcal{X}_\textrm{train}}\alpha_{\bm{x}'}\frac{\Theta\left(\lambda\tilde{\bm{x}},\bm{x}'\right)}{\lambda^{n^L}}\right\rvert=\left\lvert\sum_{\bm{x}'\in\mathcal{X}_\textrm{train}}\alpha_{\bm{x}'}\hat\Theta\left(\tilde{\bm{x}},\bm{x}'\right)\right\rvert\neq0$. Concluding the proof, it follows that 

\begin{align*}
    \sup_{\bm{x}\in\mathbb{R}^{n_0}}\left\lvert\mathbb{E}\left[f_\theta(\bm{x})\right]\right\rvert=\infty.
\end{align*}

\end{proof}

\section{Proof of Proposition \ref{proposition:lnnotobvious}}
\lnnotobvious*

Consider some network $f_\theta(\bm{x})$ with fully-connected layers, operating in an over-parametrised regime such that additional linear layer parameters may not further decrease the training loss over some finite training set $\mathcal{D}_\textrm{train}$. Suppose further that

    \begin{align*}
        \sup_{\bm{x}\in\mathbb{R}^{n_0},\theta\in\Omega^\star} \left\lvert f_\theta(\bm{x})\right\rvert &= \infty, &\textrm{where}\\
        \Omega^\star &= \arg\min_{\theta\in\Omega}L\left(\theta,\mathcal{D}_\textrm{train}\right) = \left\{\theta\in\Omega:L\left(\theta,\mathcal{D}_\textrm{train}\right) = \min_{\theta\in\Omega}L\left(\theta,\mathcal{D}_\textrm{train}\right)\right\}
    \end{align*}

Such a network trivially exists; for any finite training dataset, any network interpolating the training targets (or, in the case of a degenerate training set with duplicated inputs, interpolating the minimal predictions with respect to the loss) will achieve the minimum possible loss of all networks. Moreover, there always exists a finite fully connected network achieving this \cite{constantinescu2024approximationinterpolationdeepneural}.

Consider now the modified network $\tilde{f}_{\tilde{\theta}}(\bm{x})$ formed by the insertion of a linear layer followed by an LN operation and ReLU nonlinearity anywhere in the network immediately followed by another linear layer. Specifically, we choose this additional layer to have $2n+2$ output nodes, where $n$ is the number of nodes in the preceding layer.

Suppose we are free to choose the parameters of this linear layer. In block diagonal form, we choose the following:

\begin{align*}
    \bm{W} &= \begin{bmatrix}
        \bm{0}_n & \mathcal{I}_n & - \mathcal{I}_n & \bm{0}_n
    \end{bmatrix}^\top, \\
    \bm{b} &= \begin{bmatrix}
        \frac{\sqrt{n}}{\varepsilon} & \bm{0}_n^\top & \bm{0}_n^\top & -\frac{\sqrt{n}}{\varepsilon}
    \end{bmatrix}^\top,
\end{align*}

where $\bm{0}_n\in\mathbb{R}^n$ is a vector of zeros, $\mathcal{I}_n$ is the $n\times n$ identity matrix, and $\varepsilon>0$ is some positive constant.

Denoting the input to the linear layer as $\bm{z}\in\mathbb{R}^n$, the first and second moments of the output of the linear layer are given by

\begin{align*}
    \mu &= \frac{1}{2n+2}\left(\frac{\sqrt{n}}{\varepsilon} + \bm{z} - \bm{z} -\frac{\sqrt{n}}{\varepsilon}\right) = 0, \\
    \sigma^2 &= \frac{1}{2n+2}\left(\frac{n}{\varepsilon^2}+\sum_{i=1}^n \bm{z}_i^2+\sum_{i=1}^n \bm{z}_i^2+\frac{n}{\varepsilon^2}\right) = \frac{n}{n+1}\left(\frac{1}{\varepsilon^2}+\bar{\bm{z}}_\textrm{rms}^2\right),
\end{align*}

where $\bar{\bm{z}}_\textrm{rms}=\sqrt{\frac{1}{n}\sum_{i=1}^n \bm{z}_i^2}$ is the root mean square value of $\bm{z}$. As all $\bm{x}\in\mathcal{X}_\textrm{train}$ are finite, so are all $\bar{\bm{z}}_\textrm{rms}$, hence we can upper bound $\bar{\bm{z}}_\textrm{rms}$ by the largest value of any element observed during training, $\bar{\bm{z}}_\textrm{max}$. for all forward passes during training. Thus, setting $\varepsilon\ll\frac{1}{\bar{\bm{z}}_\textrm{max}}$,

\begin{align*}
    \sigma^2\approx\frac{n}{n+1}\frac{1}{\varepsilon^2}
\end{align*}

for all forward passes during training. Consider now the outputs from the Layer Norm during training, given by

\begin{align*}
    \tilde{\bm{z}} &= \frac{\bm{W}\bm{z}+\bm{b}-\mu}{\sigma} \\
    &\approx \sqrt{\frac{n+1}{n}}\varepsilon\left(\bm{W}\bm{z}+\bm{b}\right) \\
    &= \sqrt{\frac{n+1}{n}}\begin{bmatrix}
        {\sqrt{n}} & \varepsilon\bm{z} & - \varepsilon\bm{z} & -\sqrt{n}
    \end{bmatrix}^\top.
\end{align*}

Consider now the next linear layer, with weight matrix $\bm{W}'\in\mathbb{R}^{m\times(2n+2)}$. Suppose in the absence of our additions (and noting the differing dimensions), this layer originally had optimal weight $\bm{W}^\star\in\mathbb{R}^{m\times n}$ and bias $\bm{b}^\star\in\mathbb{R}^m$. We can freely decompose $\bm{W}'$ as

\begin{align*}
    \bm{W}' &= \bm{W}^\star\begin{bmatrix}
        \bm{0}_n & \frac{1}{\varepsilon}\mathcal{I}_n & -\frac{1}{\varepsilon}\mathcal{I}_n & \bm{0}_n
    \end{bmatrix},
\end{align*}

such that the output of this linear layer is

\begin{align*}
    \bm{z}' &= \bm{W}'\phi_\textrm{ReLU}\left(\tilde{\bm{z}}\right)+\bm{b}^\star \\
    &= \bm{W}^\star\begin{bmatrix}
        \bm{0}_n & \frac{1}{\varepsilon}\mathcal{I}_n & -\frac{1}{\varepsilon}\mathcal{I}_n & \bm{0}_n
    \end{bmatrix}\phi_\textrm{ReLU}\left(\begin{bmatrix}
        {\sqrt{n}} & \varepsilon\bm{z} & - \varepsilon\bm{z} & -\sqrt{n}
    \end{bmatrix}^\top\right)+\bm{b}^\star \\
    &= \bm{W}^\star\frac{1}{\varepsilon}\left(\phi_\textrm{ReLU}\left(\varepsilon\bm{z}\right)-\phi_\textrm{ReLU}\left(-\varepsilon\bm{z}\right)\right) + \bm{b}^\star \\
    &= \bm{W}^\star\frac{1}{\varepsilon}\left(\max\left(\bm{0}_n,\varepsilon\bm{z}\right)-\max\left(\bm{0}_n,-\varepsilon\bm{z}\right)\right) + \bm{b}^\star \\
    &= \bm{W}^\star\frac{1}{\varepsilon}\left(\varepsilon\bm{z}\right) + \bm{b}^\star \\
    &= \bm{W}^\star\bm{z}+\bm{b}^\star,
\end{align*}

i.e. the unmodified output of the network, where the second transition follows from the element-wise distribution of ReLU. By assumption of the Theorem, this minimises the empirical risk over the training set and the additional parameters and layer-norm do not further decrease the value of this minimum. Hence, denoting a complete set of unaugmented network parameters drawn from $\Omega^\star$, the set of minimisers of $L\left(\bm{x},\mathcal{D}_\textrm{train}\right)$, augmented with our additional and modified parameters, as $\theta\left(\bm{W},\bm{b},\bm{W}'\right)$ it follows that

\begin{align*}
    L\left(\theta\left(\bm{W},\bm{b},\bm{W}'\right),\mathcal{D}_\textrm{train}\right) &= \min_{\theta\in\Omega} L\left(\theta,\mathcal{D}_\textrm{train}\right) \\
    &= \min_{\tilde{\theta}\in\tilde\Theta} L\left(\tilde{\theta}(\bm{x}),\mathcal{D}_\textrm{train}\right) \hspace{8pt}\textrm{and so}\\
    \theta\left(\bm{W},\bm{b},\bm{W}'\right)&\in\Omega^\star \hspace{8pt}\forall\varepsilon\ll\bm{z}_\textrm{max},
\end{align*}

where the first transition follows from the assumption of the theorem that the network is sufficiently overparametrised such that these augmentations cannot further reduce the empirical risk over the training set. Thus, when considering supremums over $\tilde\Omega^\star$, we can consider the limiting case $\bm{\varepsilon}\to0$. Considering the dual-limiting case where $\varepsilon\to0$ faster than $\left\lVert\bm{x}\right\rVert\to\infty$ for some $\bm{x}\in\mathbb{R}^{n_0}$, it is clear that the assumption on $\bm{\varepsilon}$ holds for all such $\bm{x}$, thus the Layer Norm is effectively bypassed and we approach a network exactly equivalent to the unmodified network. More precisely,

\begin{align*}
    \forall\bm{x}\in\mathbb{R}^{n_0},\hspace{8pt}\exists\tilde\theta\in\tilde\Omega^\star \hspace{4pt} \textrm{such that} \hspace{4pt} \tilde{f}_{\tilde\theta}(\bm{x})=f_\theta(\bm{x}).
\end{align*}

Noting that by assumption that 

\begin{align*}
    \sup_{\bm{x}\in\mathbb{R}^{n_0},\theta\in\Omega^\star} \left\lvert f_\theta(\bm{x})\right\rvert &= \infty,
\end{align*}

it is clear that

\begin{align*}
    \sup_{\bm{x}\in\mathbb{R}^{n_0},\tilde\theta\in\tilde\Omega^\star}\left\lvert \tilde{f}_{\tilde\theta}(\bm{x})\right\rvert=\infty.
\end{align*}

\section{Experimental Details}
\label{appendix:experiments}
To run all experiments we used NVIDIA GeForce RTX 3090 with 24GB of memory. For all experiments we used the same architecture consisting of fully-connected layers with ReLU non-linearities. All hidden layers have a size of 128. To be consistent with the theory, we initialised weights of fully-connected layers with Kaiming initialisation \cite{he2015delving} and biases were sampled from $\mathcal{N}(0, \sigma_b^2)$ with $\sigma_b^2 = 0.01$. For the UTK experiments in Section \ref{sec:utk}, a frozen ResNet-18 \cite{he2016deep} is used before the fully-connected layers. We utilised Adam optimiser \cite{kingma2014adam} and MSE Loss for all experiments. See Table \ref{tab:hyperparams} below for exact hyperparameters settings. For XGBoost we use the XGBoost Python package \footnote{\url{https://xgboost.readthedocs.io/en/stable/index.html}} with hyperparameter values set to default values and number of boosting rounds set to 100. Training took less than 10 seconds per seed for experiments in Section \ref{sec:toyproblems} and \ref{sec:protein}, and around 2 minutes per seed for experiments in Section \ref{sec:utk}. 

\begin{table}[H]
    \centering
    \caption{Hyperparameter values used throughout the experiments.}

    \begin{tabular}{ccccc}
    \toprule
        Experiment & Method 
         & Batch size & Epochs & Learning rate \\
         \midrule
         \ref{sec:toyproblems} & All & 100 (entire dataset) & 3000 & 0.001 \\
         \ref{sec:protein} & All & 40132 (entire dataset) & 2500 & 0.001 \\
         \midrule
         \ref{sec:utk} & Standard NN & 128 & 10 & 0.001 \\
          & LN after 1st & 128 & 10 & 0.001 \\
          & LN after 2nd & 128 & 10 & 0.003 \\
          & LN after every & 128 & 10 & 0.003 \\
         \bottomrule
    \end{tabular}
    
    \label{tab:hyperparams}
\end{table}

\section{Additional Experiments: Different Activations}

\begin{figure}[H]
    \centering
    \includegraphics[width=\linewidth]{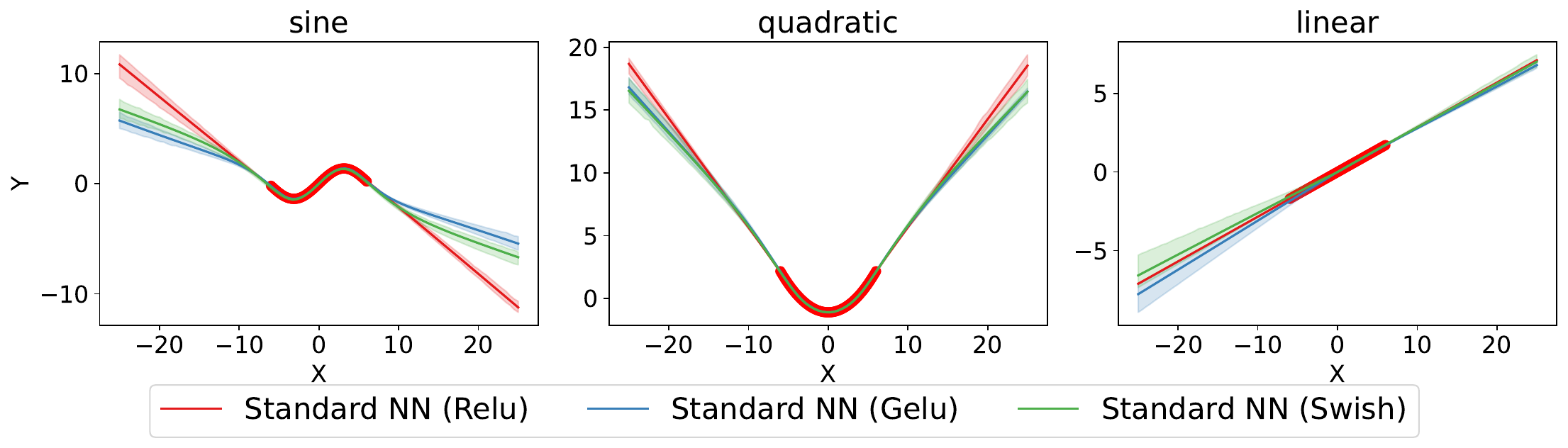}
    \includegraphics[width=\linewidth]{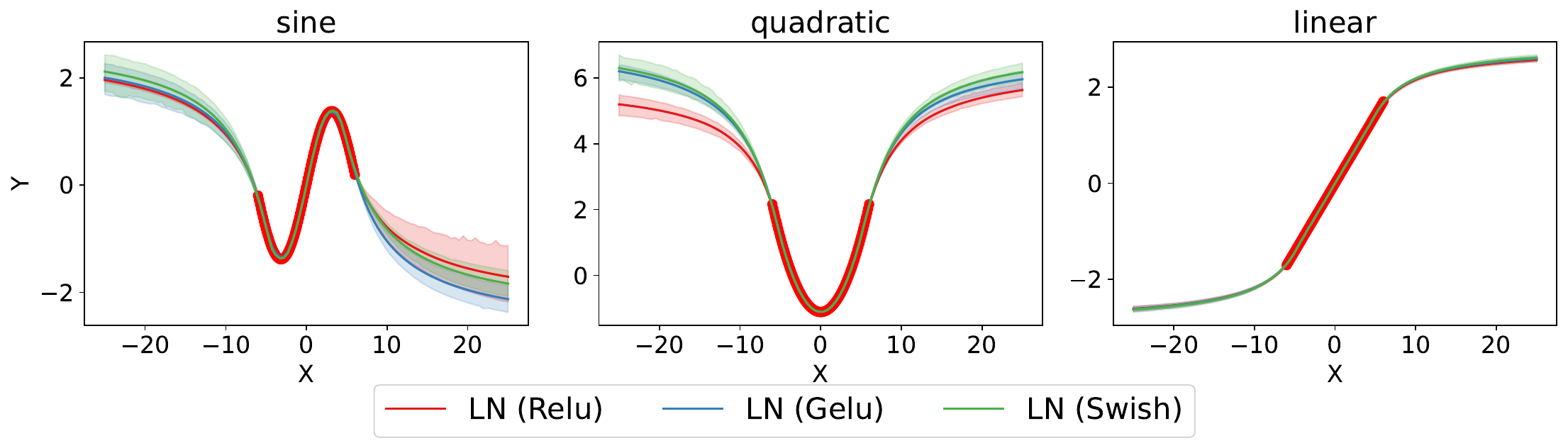}
    \caption{Predictions made by networks with various activations when trained on synthetic datasets. Plots above consider the case of standard NN without LayerNorm, whereas plots below show the case of varying activations while keeping the LayerNorm in the architecture. Red dots show the train set datapoints. The solid lines indicate average values over 5 seeds and shaded areas are 95\% confidence intervals of the mean estimator.}
    \label{fig:activations}
\end{figure}

\section{Additional Experiments: Batch Norm vs Layer Norm}
\begin{figure}[H]
    \centering
    \includegraphics[width=\linewidth]{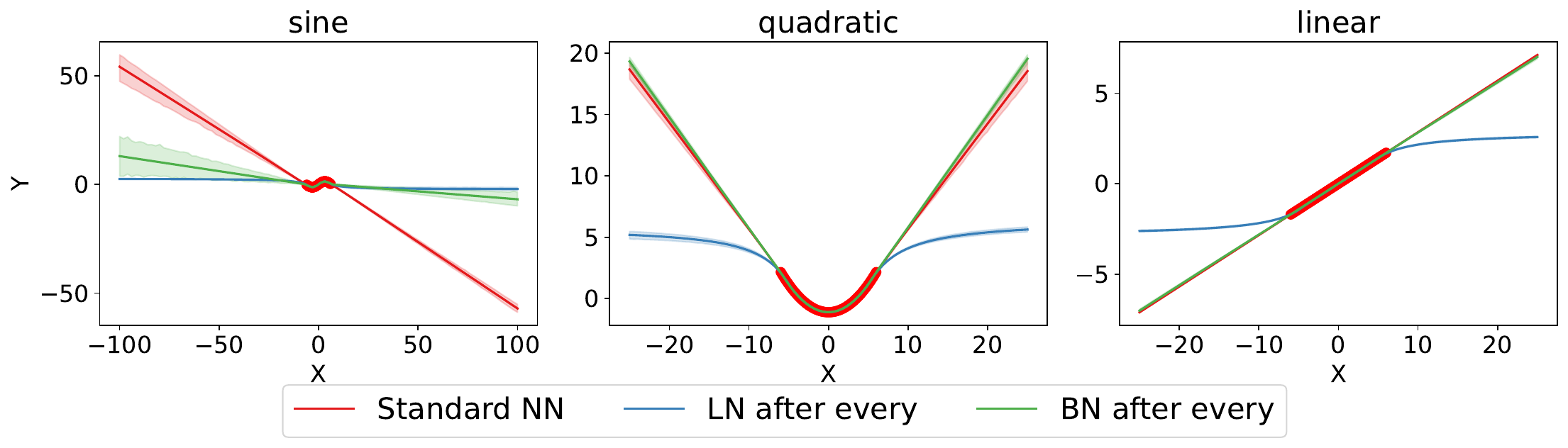}
    \caption{Predictions made by networks with various activations when trained on synthetic datasets. Red dots show the train set datapoints. The solid lines indicate average values over 5 seeds and shaded areas are 95\% confidence intervals of the mean estimator.}
    \label{fig:bn}
\end{figure}

\section{Additional Experiments: Transformer}
\begin{figure}[H]
    \centering
    \includegraphics[width=\linewidth]{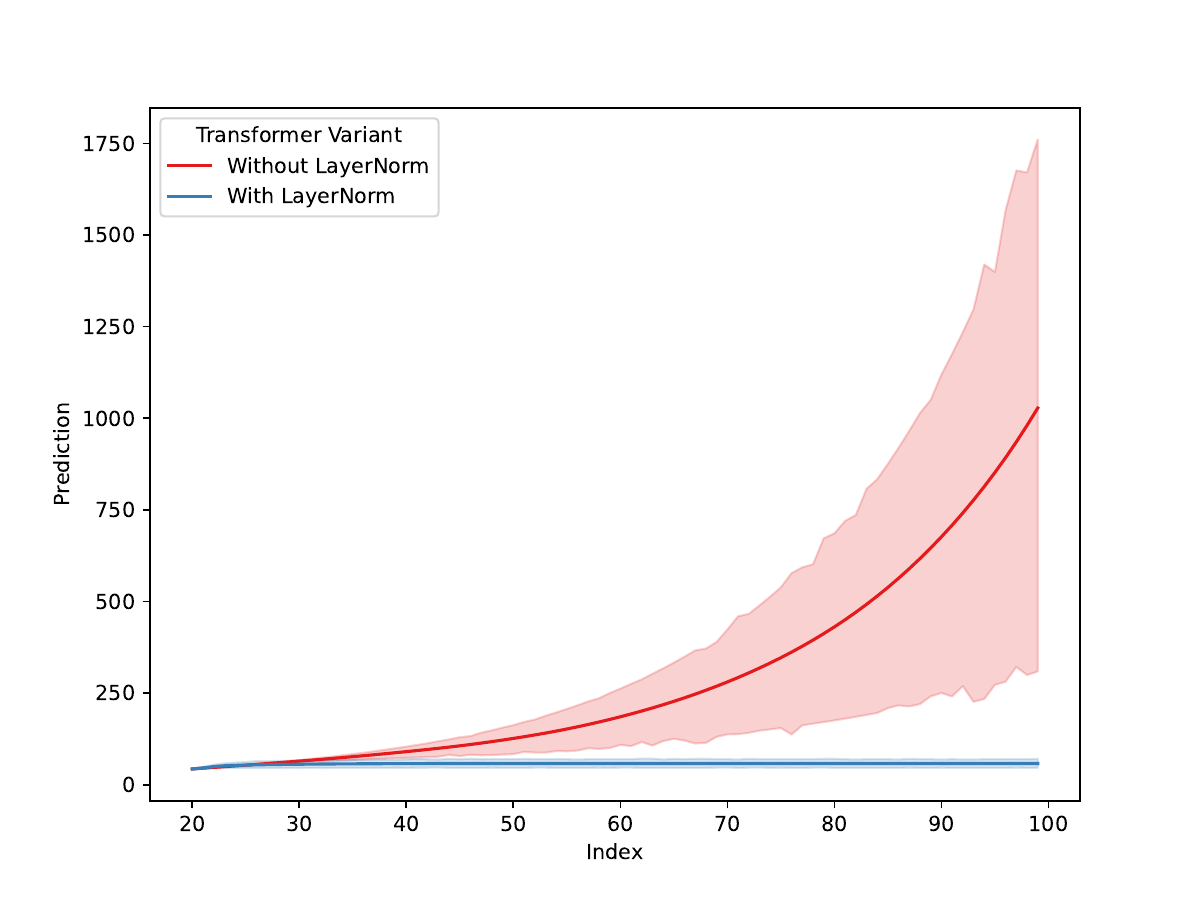}
    \caption{Results on a transformer toy problem. We train a 2-layer decoder-only transformer with 4 heads and embedding size of 64 with two versions: with and without layer normalisation after input embedding layer. The model is trained to on sequences of form $f(i) = a + b*i $, where $a, b \sim \mathcal{N}(0, 1)$ and $i \in \{1,2,\dots, 30\}$ to predict the value with index $i+1$ given value with indices from $1$ to $i$. On the plot above we show the average prediction made by each model, when tested for index $i \in \{20, 21, \dots, 100\}$ and as such beyond its training domain. In red we show model without LayerNorm, whereas blue show the model with LayerNorm. Solid lines are means over 5 seeds and shaded areas are 95\% confidence intervals. }
    \label{fig:transformer}
\end{figure}


\end{document}